\newcommand{\nn}{\nonumber}
\newcommand{\defeq}{\triangleq}
\newcommand{\mc}{\mathcal}
\newcommand{\mb}{\mathbb}
\newcommand{\diag}{\mathrm{diag}}
\newcommand{\E}{\mathbb{E}}
\def\R{\mathbb{R}}
\def\E{\mathbb{E}}
\newcommand{\mat}[1]{{#1}}
\newcommand{\norm}[1]{\left\|#1\right\|}
\newcommand{\abs}[1]{\left|#1\right|}
\newcommand{\expect}[1]{\mathbb{E}\left[#1\right]}
\newtheorem{thm}{Theorem}
\newtheorem{lem}{Lemma}
\newtheorem{asmp}{Assumption}
 \titlespacing\section{0pt}{4pt plus 0pt minus 2pt}{2pt plus 0pt minus 1pt}
 \titlespacing\subsection{0pt}{4pt plus 0pt minus 1pt}{1pt plus 1pt minus 1pt}
 \titlespacing\subsubsection{0pt}{4pt plus 0pt minus 1pt}{1pt plus 1pt minus 1pt}
\icmltitlerunning{Stochastic Variance Reduction Methods for Policy Evaluation}
\begin{document} 

\twocolumn[
\icmltitle{Stochastic Variance Reduction Methods for Policy Evaluation}




\begin{icmlauthorlist}
\icmlauthor{Simon S. Du}{cmu}
\icmlauthor{Jianshu Chen}{msr}
\icmlauthor{Lihong Li}{msr}
\icmlauthor{Lin Xiao}{msr}
\icmlauthor{Dengyong Zhou}{msr}
\end{icmlauthorlist}

\icmlaffiliation{cmu}{Machine Learning Department,  
                      Carnegie Mellon University,
                      Pittsburgh, Pennsylvania 15213, USA.}
\icmlaffiliation{msr}{Microsoft Research, Redmond, Washington 98052, USA.}

\icmlcorrespondingauthor{Simon S. Du}{{ssdu@cs.cmu.edu}}
\icmlcorrespondingauthor{Jianshu Chen}{{jianshuc@microsoft.com}}
\icmlcorrespondingauthor{Lihong Li}{{lihongli@microsoft.com}}
\icmlcorrespondingauthor{Lin Xiao}{{lin.xiao@microsoft.com}}
\icmlcorrespondingauthor{Dengyong Zhou}{{denzho@microsoft.com}}

\icmlkeywords{boring formatting information, machine learning, ICML}

\vskip 0.3in
]



\printAffiliationsAndNotice{}  

\begin{abstract} 
Policy evaluation is concerned with estimating the value function that predicts long-term values of states under a given policy.  It is a crucial step in many reinforcement-learning algorithms.  In this paper, we focus on policy evaluation with linear function approximation over a \emph{fixed} dataset.
We first transform the empirical policy evaluation problem into a
(quadratic) convex-concave saddle-point problem, and then present a primal-dual batch gradient method, as well as two stochastic variance reduction methods for solving the problem.
These algorithms scale linearly in both sample size and feature dimension.
Moreover, they achieve \emph{linear} convergence even when the saddle-point problem has only strong concavity in the dual variables 
but \emph{no} strong convexity in the primal variables. 
Numerical experiments on benchmark problems
demonstrate the effectiveness of our methods.
\end{abstract} 

\section{Introduction}
\label{sec:intro}

Reinforcement learning (RL) is a powerful learning paradigm for 
sequential decision making 
\citep[see, e.g.,][]{bertsekas1995neuro,sutton1998reinforcement}.
An RL agent interacts with the environment by repeatedly
observing the current state, taking an action according to a certain policy,
receiving a reward signal and transitioning to a next state. 
A policy specifies which action to take given the current state. 
\emph{Policy evaluation} estimates a value function that predicts expected
cumulative reward the agent would receive by following a fixed policy
starting at a certain state.
In addition to quantifying long-term values of states, which can be of interest on its own,
value functions also provide important information for the agent to optimize its policy.
For example, \emph{policy-iteration} algorithms iterate between policy-evaluation steps 
and policy-\emph{improvement} steps, until a (near-)optimal policy is found~\cite{bertsekas1995neuro,lagoudakis2003least}.
Therefore, estimating the value function efficiently and accurately is essential in RL.

There has been substantial work on policy evaluation, with \emph{temporal-difference} (TD) methods being perhaps the most popular.
These methods use the Bellman equation to bootstrap the estimation process. 
Different cost functions are formulated to exploit this idea, 
leading to different policy evaluation algorithms; 
%
see \citet{dann2014policy} for a comprehensive survey.
In this paper, we study policy evaluation by minimizing 
the mean squared projected Bellman error (MSPBE) 
with linear approximation of the value function.
We focus on the batch setting where a fixed, finite dataset is given.
This fixed-data setting is not only important in itself~\cite{Lange11Batch}, but also an important component in other
RL methods such as \emph{experience replay}~\cite{Lin92Self}.

The finite-data regime makes it possible to solve policy evaluation more efficiently with recently developed fast optimization methods based on 
\emph{stochastic variance reduction}, such as 
SVRG~\cite{johnson2013accelerating} and SAGA~\cite{defazio2014saga}.
For minimizing strongly convex functions with a finite-sum structure, such methods enjoy the same low computational cost per iteration as the classical stochastic gradient method, but also achieve fast, linear convergence rates (i.e., exponential decay of the optimality gap in the objective).  However, they cannot be applied directly to minimize the MSPBE, whose objective does not have the finite-sum structure.
In this paper,
we overcome this obstacle by transforming the empirical MSPBE problem 
to an \emph{equivalent} convex-concave saddle-point problem that possesses the desired finite-sum structure.

In the saddle-point problem, we consider the model parameters as the primal
variables, which are coupled with the dual variables through a bilinear term. 
Moreover, without an $\ell_2$-regularization on the model parameters, the objective 
is only strongly concave in the dual variables,
but \emph{not} 
in the primal variables. 
We propose a primal-dual batch gradient method,
as well as two stochastic variance-reduction methods based on
SVRG and SAGA, respectively.
Surprisingly, we show that when the coupling matrix is full rank, these algorithms 
achieve linear convergence in both the primal and dual spaces, despite the lack of strong convexity of the objective in the primal variables.
Our results also extend to \emph{off-policy} learning and TD with
\emph{eligibility traces}~\cite{sutton1998reinforcement,Precup01Off}.

We note that \citet{balamurugan2016stochastic} have extended 
both SVRG and SAGA to solve convex-concave saddle-point problems with
linear-convergence guarantees. The main difference between our results and theirs
are
\begin{compactitem}
  \item Linear convergence in \citet{balamurugan2016stochastic} relies on the assumption that the objective is strongly convex in the primal variables and strongly concave in the dual.
    Our results show, somewhat surprisingly, that only one of them is necessary if the primal-dual
    coupling is bilinear and the coupling matrix is full rank. 
    In fact, we are not aware of similar previous results even for the 
    primal-dual batch gradient method, which we show in this paper.
  \item 
    Even if a strongly convex regularization on the primal variables is introduced to the MSPBE objective,
    the algorithms in \citet{balamurugan2016stochastic} 
    cannot be applied efficiently.
    Their algorithms require that the proximal mappings of 
    the strongly convex and concave regularization functions 
    be computed efficiently.
    In our saddle-point formulation, the strong concavity of the dual variables
    comes from a quadratic function defined by the feature covariance matrix,
    which cannot be inverted efficiently and makes the proximal mapping costly
    to compute.
    Instead, our algorithms only use its (stochastic) gradients and hence are much more efficient.
\end{compactitem}
We compare various gradient based algorithms on a Random MDP and Mountain Car 
data sets. 
The experiments demonstrate the effectiveness of our proposed methods.

\section{Preliminaries}
\label{sec:pre}

%
We consider a \emph{Markov Decision Process} 
(MDP)~\citep{puterman2005markov} described by
$\left(\mathcal{S},\mathcal{A},\mathcal{P}_{ss'}^a,\mathcal{R},\gamma\right)$, 
where $\mathcal{S}$ is the set of states, 
$\mathcal{A}$ the set of actions,
$\mathcal{P}_{ss'}^a$ the transition probability from state 
$s$ to state $s'$ after taking action 
$a$, $\mathcal{R}\left(s,a\right)$ the reward received after taking action $a$ in state $s$, and $\gamma \in [0,1)$ a discount factor.
The goal of an agent is to find an action-selection policy $\pi$, 
so that the long-term reward under this policy is maximized.  For ease of exposition, we assume $\mathcal{S}$ is finite, but none of our results relies on this assumption.

A key step in many algorithms in RL is to estimate the value function of 
a given policy $\pi$, defined as
$V^{\pi}(s)	\defeq	
					\E [
						\sum_{t=0}^{\infty}
                        \gamma^t \mathcal{R}(s_t,a_t) 
						| s_0 = s, \pi
					]$.
%
Let $V^{\pi}$ denote a vector constructed by stacking the values of $V^{\pi}(1),\ldots, V^{\pi}(|\mc{S}|)$ on top of each other. Then $V^\pi$ is the unique fixed point of the \emph{Bellman operator} $T^\pi$:
	\begin{align}
		V^\pi = T^\pi V^\pi \triangleq R^\pi + \gamma P^\pi V^\pi\,,	\label{Equ:MSPBE:BellmanEq}
	\end{align} 
where $R^\pi$ is the expected reward vector under policy~$\pi$,
defined elementwise as $R^\pi(s)=\E_{\pi(a|s)} \mathcal{R}(s,a)$;
and $P^\pi$ is the transition matrix induced by the policy applying $\pi$,
defined entrywise as $P^\pi(s,s')=\E_{\pi(a|s)} \mathcal{P}^a_{ss'}$.

\subsection{Mean squared projected Bellman error (MSPBE)}

One approach to scale up when the state space size $\abs{\mathcal{S}}$ is large or infinite is to use a linear approximation for $V^\pi$.
Formally, we use a feature map $\phi: \mathcal{S} \rightarrow \mathbb{R}^d$ and
approximate the value function by 
$\widehat{V}^\pi\left(s\right) = \phi(s)^T \theta$,
where $\theta \in \mathbb{R}^d$ is the model parameter to be estimated. 
Here, we want to find $\theta$ that minimizes 
the mean squared projected Bellman error, or MSPBE:
\begin{align}
		\text{MSPBE}\left(\theta\right) 
			&\defeq 
				\frac{1}{2}
				\|\widehat{V}^\pi - \Pi T^\pi \widehat{V}^\pi\|_{\Xi}^2 ,				\label{eqn:mspbe}
\end{align}
where $\Xi$ is a diagonal matrix with diagonal elements being the stationary
distribution over $\mathcal{S}$ induced by the policy~$\pi$, and $\Pi$ is the
weighted projection matrix onto the linear space spanned by
$\phi(1),\ldots, \phi(|\mc{S}|)$, that is, 
\begin{align}
		\Pi
			&=
				\Phi (\Phi^T \Xi \Phi)^{-1} \Phi^T \Xi 
		\label{Equ:MSPBE:Pi}
\end{align} 
where $\Phi \defeq [ \phi^T(1),\ldots,\phi^T(|\mc{S}|)]$ is the matrix obtained
by stacking the feature vectors row by row.
Substituting \eqref{Equ:MSPBE:Pi} and~\eqref{Equ:MSPBE:BellmanEq} 
into~\eqref{eqn:mspbe},
we obtain \citep[see, e.g.,][]{dann2014policy}
\begin{align*} 
		\text{MSPBE}(\theta)
			&=
				\frac{1}{2}
				\|
					\Phi^T \Xi (\widehat{V}^{\pi} - T^{\pi} \widehat{V}^{\pi} ) 
				\|^2_{(\Phi^T \Xi \Phi)^{-1}} .
\end{align*}
We can further rewrite the above expression for MSPBE as a standard weighted 
least-squares problem:
\begin{align*} 
		\text{MSPBE}(\theta)
			&=
				\frac{1}{2}
				\| 
					A \theta - b
				\|_{C^{-1}}^2,
\end{align*}
with properly defined $A$, $b$ and $C$, described as follows.
Suppose the MDP under policy~$\pi$ settles at its stationary distribution 
and generates an infinite transition sequence
$\left\{\left(s_t, a_t,r_t,s_{t+1}\right)\right\}_{t=1}^\infty$,
where $s_t$ is the current state, $a_t$ is the action, 
$r_t$ is the reward, and $s_{t+1}$ is the next state.
Then with the definitions
$\phi_t \defeq \phi(s_t)$ and $\phi_t' \defeq \phi(s_{t+1})$,
we have
\begin{align}\label{eqn:AbC}
   A = \E[ \phi_t (\phi_t - \gamma \phi_t')^T ], 
  ~b = \E[ \phi_t r_t ], 
  ~C = \E [\phi_t \phi_t^T ],
\end{align}
where $\E[\cdot]$ are with respect to the stationary distribution.  Many TD  solutions converge to a minimizer of MSPBE in the limit~\cite{Tsitsiklis97Analysis,dann2014policy}.

\subsection{Empirical MSPBE}
\label{sec:em_mspbe}

In practice, quantities in \eqref{eqn:AbC} are often unknown, and we only have access to a finite dataset with $n$ transitions $\mathcal{D} = \left\{\left(s_t, a_t,r_t,s_{t+1}\right)\right\}_{t=1}^n$.
%
%
By replacing the unknown statistics with their finite-sample estimates, we obtain the Empirical MSPBE, or EM-MSPBE.  Specifically, let
\begin{align}
	\widehat{\mat{A}} 
		& \triangleq  \frac{1}{n}\sum_{t=1}^{n} A_t, \quad
	\widehat{b} 
		\triangleq \frac{1}{n}\sum_{t=1}^{n} b_t, \quad
	\widehat{\mat{C}} 
		\triangleq \frac{1}{n}\sum_{t=1}^{n}\mat{C}_t,
	\label{Equ:MSPBE:Ahat_def}
\end{align}
where for $t=1,\ldots,n$, 
\begin{equation}\label{eqn:def-At-bt-Ct}
  A_t \triangleq \phi_t(\phi_t-\gamma\phi'_t)^T, \quad
  b_t \triangleq r_t\phi_t, \quad 
  C_t \triangleq \phi_t\phi_t^T.
\end{equation}
EM-MSPBE with an \emph{optional} $\ell_2$-regularization is given by:
\begin{align}
	\text{EM-MSPBE}\left(\theta\right) 
		&= 
			\frac{1}{2}
			\| \widehat{\mat{A}}\theta - \widehat{b}\|_{\widehat{\mat{C}}^{-1}}^2
			+
			\frac{\rho}{2}
			\| \theta \|^2 ,
	\label{eqn:em-mspbe}
\end{align}
where $\rho \ge 0$ is a regularization factor.

Observe that \eqref{eqn:em-mspbe} is a (regularized) weighted least squares problem. 
Assuming $\widehat{C}$ is invertible, its optimal solution is
\begin{align}\label{eqn:theta-star}
  \theta^\star = (\widehat{A}^\top
\widehat{C}^{-1} \widehat{A} + \rho I)^{-1} \widehat{A}^\top \widehat{C}^{-1}
\widehat{b}.
\end{align}
Computing $\theta^\star$ directly requires $O(nd^2)$ operations to form the 
matrices $\widehat{A}$, $\widehat{b}$ and $\widehat{C}$, and then 
$O(d^3)$ operations to complete the calculation.
This method, known as least-squares temporal difference or LSTD~\citep{BradtkeBarto1996LSTD,Boyan2002LSTD},
can be very expensive when $n$ and $d$ are large.
One can also skip forming the matrices explicitly and
compute $\theta^\star$ using~$n$ recusive rank-one updates 
\citep{NedicBertsekas2003}.
Since each rank-one update costs $O(d^2)$, the total cost is $O(nd^2)$.

In the sequel, we develop efficient algorithms to minimize
EM-MSPBE by using stochastic variance reduction methods, which samples one
$(\phi_t, \phi_t')$ per update without pre-computing $\widehat{A}$,
$\widehat{b}$ and $\widehat{C}$.  These algorithms not only maintain a low $O(d)$ per-iteration computation cost, but also attain fast linear convergence rates with a $\log(1/\epsilon)$ dependence 
on the desired accuracy $\epsilon$.

\section{Saddle-Point Formulation of EM-MSPBE}
\label{sec:em_saddle}

Our algorithms (in Section~\ref{sec:algos}) are based on the stochastic variance
reduction techniques developed for minimizing a finite sum of convex functions, more specifically, SVRG \citep{johnson2013accelerating} and 
SAGA \citep{defazio2014saga}.
They deal with problems of the form
\begin{align}\label{eqn:min-finite-sum}
  \min_{x\in\R^d} ~\biggl\{ f(x) \triangleq \frac{1}{n}\sum_{i=1}^n f_i(x) \biggr\},
\end{align}
where each $f_i$ is convex.
We immediately notice that the EM-MSPBE in~\eqref{eqn:em-mspbe} \emph{cannot} 
be put into such a form, even though the matrices $\widehat{A}$, $\widehat{b}$
and $\widehat{C}$ have the finite-sum structure given in~\eqref{Equ:MSPBE:Ahat_def}.
Thus, extending variance reduction techniques to EM-MSPBE minimization
is not straightforward.

Nevertheless, we will show that the minimizing the EM-MSPBE is equivalent to
solving a convex-concave saddle-point problem which actually possesses the
desired finite-sum structure.
To proceed, we resort to the machinery of \emph{conjugate functions}
\citep[e.g.][Section~12]{Rockafellar70}. 
For a function $f: \mb{R}^d \rightarrow \mb{R}$, its
conjugate function $f^{\star}: \mb{R}^d \rightarrow \mb{R}$ is defined as
$f^{\star}(y) \defeq \sup_{x} (y^T x - f(x))$. 
Note that the conjugate function of $\frac{1}{2} \| x \|_{\widehat{C}}^2$ is
$\frac{1}{2} \| y \|_{\widehat{C}^{-1}}^2$, i.e.,  
\begin{align*}
		\frac{1}{2} \| y \|_{\widehat{C}^{-1}}^2
			&= \max_{x} \Big(y^T x - \frac{1}{2} \| x \|_{\widehat{C}}^2 \Big).
\end{align*}
With this relation, we can rewrite EM-MSPBE in \eqref{eqn:em-mspbe} as
\begin{equation*}
			\max_{w}
				\Big( w^T(\widehat{b}-\widehat{A}\theta)
					- \frac{1}{2} \| w \|_{\widehat{C}}^2
				\Big)
				+
				\frac{\rho}{2}
				\|\theta\|^2\,,
				\nn
\end{equation*}
so that minimizing EM-MSPBE is equivalent to solving
\begin{align}
	\min_{\theta \in \mb{R}^d} \max_{w \in \mb{R}^d}
    ~ \biggl\{ \mathcal{L}(\theta, w) 
    = \frac{1}{n}\sum_{t=1}^n \mathcal{L}_t (\theta, w) \biggr\},
\label{eqn:em_mspbe_saddle_point}
\end{align}
where the Lagrangian, defined as
\begin{align}
  \mathcal{L}(\theta, w) \triangleq
				\frac{\rho}{2}
				\|\theta\|^2
				-
				w^T \widehat{A}\theta
				-
				\Big(
					\frac{1}{2} \| w \|_{\widehat{C}}^2
					-
					w^T \widehat{b}
				\Big)\,,
\end{align}
may be decomposed using  \eqref{Equ:MSPBE:Ahat_def}, with
\[
  \mathcal{L}_t(\theta, w) \defeq 
  \frac{\rho}{2} \|\theta\|^2 - w^T A_t \theta - \Bigl(\frac{1}{2} \|w\|_{C_t}^2 - w^T b_t\Bigr) .
\]
Therefore, minimizing the EM-MSPBE is equivalent to solving the saddle-point
problem~\eqref{eqn:em_mspbe_saddle_point}, which is convex in the primal 
variable~$\theta$ and concave in the dual variable~$w$.
Moreover, it has a finite-sum structure similar to~\eqref{eqn:min-finite-sum}.

\citet{liu2015finite} and \citet{valcarcel2015distributed} independently 
showed that the GTD2 algorithm \cite{sutton2009fast}
is indeed a \emph{stochastic gradient} method for solving the 
saddle-point problem~\eqref{eqn:em_mspbe_saddle_point}, although
they obtained the saddle-point formulation with different derivations.
More recently, \citet{DaiHePanBootsSong2016} used the conjugate function approach
to obtain saddle-point formulations for a more general class of problems
and derived primal-dual stochastic gradient algorithms for solving them.
However, 
these algorithms have sublinear convergence rates, which leaves much room
to improve when applied to problems with finite datasets. Recently, \citet{lian2016finite}
developed SVRG methods for a general finite-sum composition optimization that achieve
linear convergence rate. Different from our methods, their stochastic gradients are biased and they have
worse dependency on the condition numbers ($\kappa^3$ and $\kappa^4$).

The fast linear convergence of our algorithms presented in
Sections~\ref{Sec:BatchGradient} and~\ref{sec:algos}
requires the following assumption:
\begin{asmp}
$\widehat{\mat{A}}$ has full rank, $\widehat{\mat{C}}$ is strictly positive definite, and the feature vector $\phi_t$ is uniformly bounded.
\label{Asmp:nonsingular}
\end{asmp}
Under mild regularity conditions~\citep[e.g.,][Chapter~5]{wasserman2013all}, 
we have $\widehat{\mat{A}}$ and $\widehat{C}$ converge in probability
to $\mat{A}$ and $\mat{C}$ defined in~\eqref{eqn:AbC}, respectively.
Thus, if the true statistics $\mat{A}$ is non-singular and
$\mat{C}$ is positive definite, 
and we have enough training samples, these assumptions are
usually satisfied. 
They have been widely used in previous works on gradient-based algorithms
\citep[e.g.,][]{sutton2009convergent,sutton2009fast}.

A direct consequence of Assumption~\ref{Asmp:nonsingular} is that 
$\theta^\star$ in~\eqref{eqn:theta-star} is the unique minimizer of
the EM-MSPBE in~\eqref{eqn:em-mspbe}, even without any strongly convex
regularization on~$\theta$ (i.e., even if $\rho=0$).
However, if $\rho=0$, then the Lagrangian $\mathcal{L}(\theta,w)$ is 
only strongly concave in~$w$, but not strongly convex in~$\theta$.
In this case, we will show that non-singularity of the coupling 
matrix~$\widehat{A}$ can ``pass'' an implicit strong convexity 
on~$\theta$, which is exploited by our algorithms to obtain 
linear convergence in both the primal and dual spaces.

\section{A Primal-Dual Batch Gradient Method}
\label{Sec:BatchGradient}

Before diving into the stochastic variance reduction algorithms, we first
present Algorithm~\ref{algo:PDBG_saddle}, which is a primal-dual 
\emph{batch} gradient (PDBG) algorithm for solving the
saddle-point problem~\eqref{eqn:em_mspbe_saddle_point}.
In Step~2, the vector $B(\theta,w)$ is obtained by
stacking the primal and negative dual gradients:
\begin{align}
	B\left(\theta,w\right)  \triangleq
					\begin{bmatrix}
						\nabla_{\theta} L(\theta,w) \\
						-\!\nabla_{w} L(\theta,w)
					\end{bmatrix}
                    =
					\begin{bmatrix}
                      \rho\theta - \widehat{A}^T w \\
                      \widehat{A}\theta - \widehat{b} + \widehat{C} w
					\end{bmatrix} .
\label{eqn:em_saddle_gradient}
\end{align}
Some notation is needed in order to characterize the convergence rate of 
Algorithm~\ref{algo:PDBG_saddle}.
For any symmetric and positive definite matrix~$S$, 
let $\lambda_{\max}(S)$ and $\lambda_{\min}(S)$ denote its 
maximum and minimum eigenvalues respectively, and define its condition number to be $\kappa(S)\defeq\lambda_{\max}(S)/\lambda_{\min}(S)$.
We also define $L_\rho$ and $\mu_\rho$ for any $\rho\geq0$:
\begin{align}
  L_\rho &\defeq \lambda_{\max}(\rho I + \widehat{A}^T \widehat{C}^{-1} \widehat{A}), \label{eqn:L-rho} \\
  \mu_\rho &\defeq \lambda_{\min}(\rho I + \widehat{A}^T \widehat{C}^{-1} \widehat{A}). \label{eqn:mu-rho}
\end{align}
By Assumption~\ref{Asmp:nonsingular}, we have $L_\rho\geq \mu_\rho>0$.
The following theorem is proved in Appendix~\ref{sec:PDBG-analysis}.
\begin{thm}\label{thm:pdbg}
Suppose Assumption~\ref{Asmp:nonsingular} holds and let 
$(\theta_{\star}, w_{\star})$ be the (unique) solution 
of~\eqref{eqn:em_mspbe_saddle_point}.
If the step sizes are chosen as
$\sigma_\theta = \frac{1}{ 9 L_\rho \kappa(\widehat{C})}$ and
$\sigma_w = \frac{8}{9\lambda_{\max}(\widehat{C})}$,
then the number of iterations of Algorithm~\ref{algo:PDBG_saddle} 
to achieve $\|\theta-\theta_{\star}\|^2 + \|w - w_{\star}\|^2 \le \epsilon^2$ 
is upper bounded by
\begin{align}
	O
	\left(		
		\kappa
		\left(
			\rho I + \widehat{A}^T \widehat{C}^{-1} \widehat{A}
		\right) 
		\cdot
        \kappa(\widehat{C})
		\cdot
		\log\Bigl(\frac{1}{\epsilon}\Bigr)
	\right) .
	\label{Equ:Thm_PDBG:ComplexityBound}
\end{align}
\end{thm}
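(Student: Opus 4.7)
The plan is to reduce the joint $(\theta,w)$ iterate to a linear error recursion and then build a weighted quadratic Lyapunov function that contracts per iteration. Writing $e_\theta^k \defeq \theta_k - \theta_{\star}$ and $e_w^k \defeq w_k - w_{\star}$, the optimality conditions $B(\theta_{\star},w_{\star}) = 0$ read $\rho\theta_{\star} = \widehat{A}^T w_{\star}$ and $\widehat{C}w_{\star} = \widehat{b}-\widehat{A}\theta_{\star}$; subtracting these from the PDBG update $z_{k+1} = z_k - D\,B(z_k)$ with $D = \diag(\sigma_\theta I,\sigma_w I)$ gives the deterministic linear recursion
\begin{align*}
e_\theta^{k+1} &= (1-\sigma_\theta\rho)\,e_\theta^k + \sigma_\theta\,\widehat{A}^T e_w^k,\\
e_w^{k+1} &= -\sigma_w\,\widehat{A}\,e_\theta^k + (I-\sigma_w\widehat{C})\,e_w^k,
\end{align*}
so $e^{k+1} = (I-DM)e^k$, where $M$ is the $2d\times 2d$ coefficient matrix from~\eqref{eqn:em_saddle_gradient} whose symmetric part is $\diag(\rho I,\widehat{C})\succeq 0$.

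I would then analyze the Lyapunov function $V_k \defeq \|e_\theta^k\|^2 + \alpha\|e_w^k\|^2$ with $\alpha$ on the order of $\sigma_\theta/\sigma_w$, chosen so that the first-order bilinear cross terms produced by expanding $\|e_\theta^{k+1}\|^2$ and $\alpha\|e_w^{k+1}\|^2$ cancel exactly. After this cancellation, $V_{k+1}$ decomposes into (i) a ``diagonal'' part giving $(1-\sigma_\theta\rho)^2\|e_\theta^k\|^2$ from the primal and, using $\sigma_w \le 1/\lambda_{\max}(\widehat{C})$, the standard smoothness/strong-concavity bound $(I-\sigma_w\widehat{C})^2\preceq(1-\sigma_w\lambda_{\min}(\widehat{C}))I$ from the dual; (ii) two positive ``leakage'' terms $\sigma_\theta^2\|\widehat{A}^T e_w^k\|^2$ and $\alpha\sigma_w^2\|\widehat{A}e_\theta^k\|^2$; and (iii) a smaller second-order cross term $2\alpha\sigma_w^2\langle\widehat{C}e_w^k,\widehat{A}e_\theta^k\rangle$ which I would split via Young's inequality. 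The first leakage is handled by $\widehat{A}\widehat{A}^T \preceq \lambda_{\max}(\widehat{C})\widehat{A}\widehat{C}^{-1}\widehat{A}^T$ together with Assumption~\ref{Asmp:nonsingular}, so that with $\sigma_\theta = \Theta(1/(L_\rho\kappa(\widehat{C})))$ it sits safely inside the dual contraction.

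The main technical obstacle is the case $\rho=0$, in which the diagonal primal contraction $(1-\sigma_\theta\rho)^2$ collapses to $1$ and the second leakage term $\alpha\sigma_w^2\|\widehat{A}e_\theta^k\|^2$ seems to have nothing to absorb it. Here I would exploit the full-rank of $\widehat{A}$ through the pointwise identity
\[
\mu_\rho\|e_\theta^k\|^2 \;\le\; \rho\|e_\theta^k\|^2 + \langle e_\theta^k,\widehat{A}^T\widehat{C}^{-1}\widehat{A}\,e_\theta^k\rangle \;\le\; \rho\|e_\theta^k\|^2 + \frac{1}{\lambda_{\min}(\widehat{C})}\|\widehat{A}e_\theta^k\|^2,
\]
which is precisely the strong convexity (modulus $\mu_\rho$) of the dual-eliminated primal objective $\tfrac{\rho}{2}\|\theta\|^2 + \tfrac{1}{2}(\widehat{b}-\widehat{A}\theta)^T\widehat{C}^{-1}(\widehat{b}-\widehat{A}\theta)$. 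This lets me trade a controlled fraction of $\|\widehat{A}e_\theta^k\|^2$ inside the leakage for $\mu_\rho\lambda_{\min}(\widehat{C})\|e_\theta^k\|^2$, generating a genuine negative contribution to $V_{k+1}-V_k$ of size $\Omega(\sigma_\theta\mu_\rho/\kappa(\widehat{C}))\|e_\theta^k\|^2$; morally, the coupling matrix ``transports'' the dual's strong concavity into an effective primal contraction, and the cost of that transport is the extra factor $\kappa(\widehat{C})$ appearing in the final bound.

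The last step is to plug in the prescribed $\sigma_\theta = 1/(9L_\rho\kappa(\widehat{C}))$ and $\sigma_w = 8/(9\lambda_{\max}(\widehat{C}))$; routine bookkeeping of the four terms above should yield $V_{k+1}\le(1-c)V_k$ with $c=\Omega\bigl(1/[\kappa(\rho I+\widehat{A}^T\widehat{C}^{-1}\widehat{A})\kappa(\widehat{C})]\bigr)$, from which iterating and imposing $V_k\le\epsilon^2$ immediately gives~\eqref{Equ:Thm_PDBG:ComplexityBound}. The delicate piece throughout is genuinely the $\rho=0$ regime: the standard strongly-convex/strongly-concave saddle-point machinery (as used, e.g., by \citet{balamurugan2016stochastic}) does not apply, and the non-singularity of $\widehat{A}$ is exactly what lets the missing primal strong convexity be recovered --- implicitly and at the price of one extra factor of $\kappa(\widehat{C})$ --- through the dual variable.
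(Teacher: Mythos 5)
Your reduction to the linear error recursion $e^{k+1}=(I-DM)e^k$ is exactly the paper's starting point, and your diagnosis that the whole difficulty sits in the $\rho=0$ case is right. But the proof as planned has a genuine gap, and it is not repairable within the framework you set up: a \emph{block-diagonal} Lyapunov function $V_k=\|e_\theta^k\|^2+\alpha\|e_w^k\|^2$ can never contract in one step when $\rho=0$. Concretely, write the iteration matrix as $T=I-DM$ and $P=\diag(I,\alpha I)$; then
\begin{align*}
\bigl(T^\top P T\bigr)_{11} \;=\; (1-\sigma_\theta\rho)^2 I + \alpha\sigma_w^2\,\widehat{A}^\top\widehat{A}\;\succ\; I \quad\text{when }\rho=0,
\end{align*}
so starting from $e_w^k=0$, $e_\theta^k\neq0$ (and $\widehat{A}$ full rank) one gets $V_{k+1}=\|e_\theta^k\|^2+\alpha\sigma_w^2\|\widehat{A}e_\theta^k\|^2>V_k$ for \emph{every} $\alpha>0$. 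The specific mechanism you propose to rescue this --- using $\mu_\rho\|e_\theta\|^2\le\rho\|e_\theta\|^2+\lambda_{\min}(\widehat{C})^{-1}\|\widehat{A}e_\theta\|^2$ to ``trade'' part of the leakage term $\alpha\sigma_w^2\|\widehat{A}e_\theta^k\|^2$ for a negative multiple of $\|e_\theta^k\|^2$ --- goes in the wrong inequality direction: that leakage term enters the bound on $V_{k+1}$ with a \emph{positive} coefficient, and replacing a nonnegative summand by a lower bound does not preserve an upper bound on $V_{k+1}$. There is no term carrying a negative coefficient on $\|\widehat{A}e_\theta^k\|^2$ (or on $\|e_\theta^k\|^2$) anywhere in your expansion once the bilinear cross terms have been cancelled, so the implicit strong convexity never actually enters.

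The missing idea is that the Lyapunov function must itself couple $e_\theta$ and $e_w$ through an off-diagonal block. This is precisely what the paper's proof supplies: it scales the residual by $\beta=\sigma_w/\sigma_\theta$, shows the resulting matrix $G$ in~\eqref{Equ:Appendix:G_def} is diagonalizable with real positive eigenvalues once $\beta\gtrsim(\rho+L)/\lambda_{\min}(\widehat{C})$, and measures progress in the potential $\|Q^{-1}\Delta_m\|^2=\Delta_m^\top H\Delta_m$, where the symmetrizer $H$ in~\eqref{eqn:H-matrix} has the off-diagonal block $\sqrt{\beta}\,\widehat{A}$. It is exactly that off-diagonal block that converts the dual contraction into a decrease of the primal error; the bounds $\kappa^2(Q)=\kappa(H)\le 8\kappa(\widehat{C})$, $\lambda_{\min}(G)\ge\tfrac{8}{9}\mu_\rho$ and $\lambda_{\max}(G)\le 9\kappa(\widehat{C})L_\rho$ then give the rate $\bigl(1-\tfrac{8}{81}\kappa(\widehat{C})^{-1}\kappa(\rho I+\widehat{A}^\top\widehat{C}^{-1}\widehat{A})^{-1}\bigr)$ and hence~\eqref{Equ:Thm_PDBG:ComplexityBound}. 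If you want to keep an energy-style argument rather than an eigen-analysis, you would need either a quadratic form with a cross term $c\,\langle e_\theta,\widehat{A}^\top e_w\rangle$ (effectively rediscovering $H$), or a two-step/averaged analysis; the one-step diagonal $V_k$ cannot work.
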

We assigned specific values to the step sizes $\sigma_\theta$ and $\sigma_w$
for clarity.
In general, we can use similar step sizes while keeping their ratio
roughly constant as
$\frac{\sigma_w}{\sigma_\theta}\approx\frac{8 L_\rho}{\lambda_{\min}(\widehat{C})}$; see Appendices~\ref{sec:qp_analysis_G} and~\ref{sec:PDBG-analysis} for more details.  In practice, one can use a parameter search on a small subset of data to find reasonable step sizes.  It is an interesting open problem how to automatically select and adjust step sizes.

\begin{algorithm}[tb]
	\renewcommand{\algorithmicrequire}{\textbf{Inputs:}}
	\renewcommand{\algorithmicensure}{\textbf{Outputs:}}
	\caption{PDBG for Policy Evaluation}
	\label{algo:PDBG_saddle}
	\begin{algorithmic}[1]
		\REQUIRE 
        initial point $\left(\theta,w\right)$, step sizes $\sigma_\theta$ and $\sigma_w$, and number of epochs~$M$. 
		\FOR{$i=1$ {\bfseries to} $M$}
				\STATE 		 
                \vspace{0.8ex}
					$
					\begin{bmatrix}
					\theta\\
					w
					\end{bmatrix}
					\leftarrow
					\begin{bmatrix}
						\theta\\
						w
					\end{bmatrix}
					-
					\begin{bmatrix}
						\sigma_{\theta}	& 0\\
						0 & \sigma_{w}
					\end{bmatrix}
					B(\theta,w)$
                    \\[0.5ex]
					where $B(\theta,w)$ is computed according to \eqref{eqn:em_saddle_gradient}.
		\ENDFOR
	\end{algorithmic}
\end{algorithm}

Note that the linear rate is determined by two parts: (i) the strongly convex
regularization parameter $\rho$, and (ii) the positive definiteness of
$\widehat{A}^T \widehat{C}^{-1} \widehat{A}$.
The second part could be
interpreted as transferring  strong concavity in dual variables via the
full-rank bi-linear coupling matrix $\widehat{A}$. For this reason, even if
the saddle-point problem~\eqref{eqn:em_mspbe_saddle_point} has only strong
concavity in dual variables (when $\rho=0$), 
the algorithm still enjoys a linear convergence rate. 

Moreover, even if $\rho>0$, it will be inefficient to solve 
problem~\eqref{eqn:em_mspbe_saddle_point} using primal-dual algorithms based on 
proximal mappings of the strongly convex and concave terms
\citep[e.g.,][]{chambolle2011first,balamurugan2016stochastic}.
The reason is that, in~\eqref{eqn:em_mspbe_saddle_point}, the strong concavity of the 
Lagrangian with respect to the dual lies in the quadratic
function $(1/2)\|w\|_{\widehat{C}}$, whose proximal mapping cannot be computed
efficiently.
In contrast, the PDBG algorithm only needs its gradients.

If we pre-compute and store $\widehat{A}$, $\widehat{b}$ and
$\widehat{C}$, which costs $O(nd^2)$ operations, then computing
the gradient operator $B(\theta,w)$ in~\eqref{eqn:em_saddle_gradient}
during each iteration of PDBG costs $O(d^2)$ operations.
Alternatively, if we do not want to store these $d\times d$ matrices
(especially if $d$ is large),
then we can compute $B(\theta,w)$ as finite sums on the fly. 
More specifically,
$B(\theta,w) = \frac{1}{n}\sum_{t=1}^n B_t(\theta, w)$,
where for each $t=1,\ldots,n$,
\begin{align}
  B_t(\theta,w) = \left[\begin{array}{cc}
      \rho\theta - A_t w \\ A_t\theta - b_t + C_t w 
  \end{array} \right].
  \label{eqn:Bt}
\end{align}
Since $A_t$, $b_t$ and $C_t$ are all rank-one matrices,
as given in~\eqref{eqn:def-At-bt-Ct},
computing each $B_t(\theta,w)$ only requires $O(d)$ operations. 
Therefore, computing $B(\theta,w)$ costs $O(nd)$ 
operations as it averages $B_t(\theta,w)$ over~$n$ samples.

\section{Stochastic Variance Reduction Methods}
\label{sec:algos}

If we replace $B(\theta,w)$ in Algorithm~\ref{algo:PDBG_saddle} (line 2) 
by the stochastic gradient $B_t(\theta,w)$ in~\eqref{eqn:Bt}, 
then we recover the GTD2 algorithm of \citet{sutton2009fast}, 
applied to a fixed dataset, possibly with \emph{multiple passes}. 
It has a low per-iteration cost but a slow, \emph{sublinear} convergence rate.
In this section, we provide two stochastic variance reduction methods 
and show they achieve fast linear convergence.

\begin{algorithm}[tb]
	\renewcommand{\algorithmicrequire}{\textbf{Inputs:}}
	\renewcommand{\algorithmicensure}{\textbf{Outputs:}}
	\caption{SVRG for Policy Evaluation}
	\label{algo:SVRG_saddle}
	\begin{algorithmic}[1]
		\REQUIRE initial point $\left(\theta,w\right)$, 
        step sizes $\{\sigma_\theta, \sigma_w\}$,
        number~of outer iterations $M$, and number of inner iterations $N$.
		\FOR{$m=1$ {\bfseries to} $M$}
			\STATE Initialize $(\tilde{\theta}, \tilde{w}) = \left(\theta,w\right)$ and compute $B(\tilde{\theta},\tilde{w})$.
			\FOR{$j=1$ {\bfseries to} $N$}
            \STATE Sample an index $t_j$ from $\left\{1,\cdots,n\right\}$ and do
            \STATE Compute $B_{t_j}(\theta,w)$ and $B_{t_j}(\tilde{\theta},\tilde{w})$.
            \STATE 
                \vspace{0.8ex}
          $
					\begin{bmatrix}
							\theta \\
							w
						\end{bmatrix}  \leftarrow \begin{bmatrix}
						\theta \\
						w
					\end{bmatrix}  -  \begin{bmatrix}
						\sigma_\theta & 0 \\
						0 & \sigma_w
					\end{bmatrix} 
					B_{t_j}(\theta,w,\tilde{\theta},\tilde{w})
          $ \\[0.5ex]
					where $B_{t_j}(\theta,w,\tilde{\theta},\tilde{w})$
                    is given in~\eqref{Equ:SVRG:ReducedVarGrad}.
			\ENDFOR
		\ENDFOR
	\end{algorithmic}
\end{algorithm}

\subsection{SVRG for policy evaluation}
Algorithm \ref{algo:SVRG_saddle} is adapted from the stochastic variance
reduction gradient (SVRG) method~\cite{johnson2013accelerating}.
It uses two layers of loops and maintains two sets of parameters
$(\tilde{\theta},\tilde{w})$ and $(\theta,w)$. In the outer loop, the algorithm
computes a full gradient $B(\tilde{\theta},\tilde{w})$ using $(\tilde{\theta},
\tilde{w})$, which takes $O(nd)$ operations. Afterwards, the
algorithm executes the inner loop, which randomly samples an index $t_j$ and 
updates $(\theta,w)$ using variance-reduced stochastic gradient: 
	\begin{align}
	B_{t_j}\!(\theta,w,\tilde{\theta},\tilde{w}) = B_{t_j}\!(\theta,w) + B(\tilde{\theta},\tilde{w}) - B_{t_j}\!(\tilde{\theta},\tilde{w}).
		\label{Equ:SVRG:ReducedVarGrad}
	\end{align}
Here, $B_{t_j}(\theta,w)$ contains the stochastic gradients at $(\theta,w)$ 
computed using the random sample with index~$t_j$, and
$B(\tilde{\theta},\tilde{w}) - B_{t_j}(\tilde{\theta},\tilde{w})$ is a term
used to reduce the variance in $B_{t_j}(\theta,w)$ while keeping
$B_{t_j}\!(\theta,w,\tilde{\theta},\tilde{w})$ 
an unbiased estimate of $B(\theta,w)$. 

Since $B(\tilde{\theta},\tilde{w})$ is computed once during each iteration of 
the outer loop with cost $O(nd)$ 
(as explained at the end of Section~\ref{Sec:BatchGradient}),
and each of the $N$ iterations of the inner loop cost $O(d)$ operations,
the total computational cost of for each outer loop is $O(nd+Nd)$. 
We will present the overall complexity analysis of 
Algorithm~\ref{algo:SVRG_saddle} in Section~\ref{sec:analysis-svrg-saga}.

\subsection{SAGA for policy evaluation}

\begin{algorithm}[tb]
	\renewcommand{\algorithmicrequire}{\textbf{Inputs:}}
	\renewcommand{\algorithmicensure}{\textbf{Outputs:}}
	\caption{SAGA for Policy Evaluation}
	\label{algo:saga_saddle}
	\begin{algorithmic}[1]
		\REQUIRE initial point $\left(\theta,w\right)$,  
    step sizes $\sigma_\theta$ and $\sigma_w$, and number of iterations $M$.
        \STATE Compute each $g_t=B_t(\theta,w)$ for $t=1,\ldots, n$.
		\STATE Compute $B = B(\theta,w)=\frac{1}{n}\sum_{t=1}^n g_t$.
		\FOR{$m=1$ {\bfseries to} $M$}
		\STATE Sample an index $t_m$ from $\left\{1,\cdots,n\right\}$.
				\STATE Compute $h_{t_m} = B_{t_m}(\theta,w)$.
				\STATE 
        \vspace{0.8ex}
        $\begin{bmatrix}
				\theta \\
				w
				\end{bmatrix} \leftarrow \begin{bmatrix}
				\theta \\
				w
				\end{bmatrix}  -  \begin{bmatrix}
				\sigma_\theta & 0 \\
				0 & \sigma_w
      \end{bmatrix}\left(B + h_{t_m} - g_{t_m}\right)$.\\[0.5ex]
      \STATE $B \leftarrow B +\frac{1}{n}(h_{t_m}- g_{t_m})$
				\STATE $g_{t_m} \leftarrow h_{t_m}$.		
		\ENDFOR
	\end{algorithmic}
\end{algorithm}

The second stochastic variance reduction method for policy evaluation
is adapted from SAGA~\cite{defazio2014saga};
see Algorithm \ref{algo:saga_saddle}.
It uses a single loop, and maintains a single set of parameters $(\theta,w)$. 
Algorithm~\ref{algo:saga_saddle} starts by first computing each component
gradients $g_t=B_t(\theta,w)$ at the initial point, and also form their average
$B=\sum_t^n g_t$.
At each iteration, the algorithm randomly picks an index $t_m\in\{1,\ldots,n\}$ 
and computes the stochastic gradient $h_{t_m}=B_{t_m}(\theta,w)$.
Then, it updates $(\theta,w)$ using a variance reduced stochastic gradient:
$B + h_{t_m} - g_{t_m}$,
where $g_{t_m}$ is the previously computed stochastic gradient using the 
$t_m$-th sample (associated with certain past values of $\theta$ and $w$).
Afterwards, it updates the batch gradient estimate $B$ as 
$B+\frac{1}{n}(h_{t_m}-g_{t_m})$ and replaces $g_{t_m}$ with $h_{t_m}$. 

As Algorithm~\ref{algo:saga_saddle} proceeds, different vectors $g_t$ are 
computed using different values of~$\theta$ and~$w$
(depending on when the index~$t$ was sampled).
So in general we need to store all vectors $g_t$, for $t=1,\ldots,n$, 
to facilitate individual updates, which will cost additional $O(nd)$ storage.
However, by exploiting the rank-one structure in~\eqref{eqn:def-At-bt-Ct},
we only need to store three scalars
$(\phi_t-\gamma_\phi')^T\theta$, $(\phi_t-\gamma_\phi')^T w$, and $\phi_t^Tw$, 
and form $g_{t_m}$ on the fly using $O(d)$ computation.
Overall, each iteration of SAGA costs $O(d)$ operations.

\subsection{Theoretical analyses of SVRG and SAGA}
\label{sec:analysis-svrg-saga}

In order to study the convergence properties of SVRG and SAGA for 
policy evaluation, we introduce a smoothness parameter $L_G$
based on the stochastic gradients $B_t(\theta,w)$. 
Let $\beta = \sigma_{w}/\sigma_{\theta}$ be the ratio between the
primal and dual step-sizes, and define a pair of weighted Euclidean norms 
\begin{align*}
  \Omega(\theta,w) &\defeq (\|\theta\|^2 + \beta^{-1} \|w\|^2)^{1/2}, \\
\Omega^{*}(\theta,w) &\defeq (\|\theta\|^2 + \beta \|w\|^2)^{1/2}.
\end{align*}
Note that $\Omega(\cdot,\cdot)$ upper bounds the error in optimizing $\theta$: $\Omega(\theta-\theta_{\star},w-w_{\star}) \ge \|\theta-\theta_{\star}\|$.  Therefore, any bound on $\Omega(\theta-\theta_{\star},w-w_{\star})$ applies automatically to $\|\theta-\theta_{\star}\|$.

Next, we define the parameter $L_G$ through its square:
	\begin{align}
		L_G^2
			&\defeq
				\sup_{\theta_1,w_1, \theta_2, w_2} \!\!
				\frac{
					\frac{1}{n}
					\sum_{t=1}^n
					\Omega^{*}
					\big(B_t(\theta_1,w_1)-B_t(\theta_2,w_2)\big)^2
				}
				{
					\Omega(\theta_1-\theta_2, w_1-w_2)^2
				}.
				\nn
	\end{align}
This definition is similar to the smoothness constant~$\bar{L}$ used in
\citet{balamurugan2016stochastic} except that we used the step-size ratio
$\beta$ rather than the strong convexity and concavity parameters
of the Lagrangian to define
$\Omega$ and $\Omega^{*}$.\footnote{Since our saddle-point problem is not
  necessarily strongly convex in~$\theta$ (when $\rho=0$), 
  we could not define $\Omega$ and $\Omega^{*}$ in the same way 
  as \citet{balamurugan2016stochastic}.}
Substituting the definition of $B_t(\theta,w)$ 
in~\eqref{eqn:Bt}, we have 
\begin{equation}\label{eqn:LG-def}
  L_G^2 = \biggl\|\frac{1}{n} \sum_{t=1}^n  G_t^T G_t\biggr\|,
  ~\mbox{where}~
		G_t
			\defeq
				\begin{bmatrix}
					\rho I &\!\!\! -\sqrt{\beta} A_t^T \\
					\sqrt{\beta} A_t &\!\!\! \beta C_t
				\end{bmatrix}.
\end{equation}


With the above definitions, 
we characterize the convergence of
$\Omega(\theta_m-\theta_\star, w_m-w_\star)$,
where $(\theta_{\star}, w_{\star})$ is the solution 
of~\eqref{eqn:em_mspbe_saddle_point}, and $(\theta_m,w_m)$ is the output of 
the algorithms after the $m$-th iteration.
For SVRG, it is the $m$-th \emph{outer} iteration
in Algorithm~\ref{algo:SVRG_saddle}.
The following two theorems are proved in 
Appendices~\ref{Appendix:Proof_SVRG} and~\ref{Appendix:Proof_SAGA}, 
respectively.

\begin{thm}[Convergence rate of SVRG]
\label{thm:svrg}
Suppose Assumption~\ref{Asmp:nonsingular} holds.
If we choose
$\sigma_\theta = \frac{\mu_\rho}{48\kappa(\widehat{C}) L_G^2}$,  
$\sigma_w=\frac{8L_\rho}{\lambda_{\min}(\widehat{C})}\sigma_\theta$,
$N=\frac{51 \kappa^2\!(\widehat{C}) L_G^2}{\mu^2_\rho}$, 
where $L_\rho$ and $\mu_\rho$ are defined in~\eqref{eqn:L-rho} 
and~\eqref{eqn:mu-rho},
then 
\[
  \E\bigl[\Omega(\theta_m-\theta_\star, w_m-w_\star)^2\bigr]
  \leq \Bigl(\frac{4}{5}\Bigr)^m
  \Omega(\theta_0-\theta_\star, w_0-w_\star)^2 .
\]
The overall computational cost for reaching
$\E\bigl[\Omega(\theta_m-\theta_\star, w_m-w_\star)\bigr]\leq\epsilon$
is upper bounded by
\begin{align}
O\biggl(
\biggl(n+\frac{\kappa(\widehat{C}) L_G^2}{\lambda_{\min}^2(\rho I + \widehat{A}^T\widehat{C}^{-1}\widehat{A})}\biggr)d\;
\log\Bigl(\frac{1}{\epsilon}\Bigr)
\biggr).
\label{Equ:Thm_SVRG:ComplexityBound}
\end{align}
\end{thm}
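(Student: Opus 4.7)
\medskip\noindent
The plan is to combine the deterministic primal--dual contraction that drives Theorem~\ref{thm:pdbg} with the SVRG variance bound of \citet{johnson2013accelerating}. Setting $\beta=\sigma_w/\sigma_\theta$, I would work in the weighted Lyapunov function
\begin{equation*}
V(\theta,w) \;\defeq\; \Omega(\theta-\theta_{\star},\,w-w_{\star})^2 \;=\; \|\theta-\theta_{\star}\|^2 + \beta^{-1}\|w-w_{\star}\|^2,
\end{equation*}
first derive a one-inner-step expected descent bound, then chain it over the $N$ inner iterations to obtain an outer-loop contraction, and finally balance $\sigma_\theta$ and $N$.

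For one inner step I write the SVRG direction as $B_{t_j}(\theta,w,\tilde\theta,\tilde w) = B(\theta,w) + \xi$ with
\[
\xi \;=\; \bigl[B_{t_j}(\theta,w)-B_{t_j}(\tilde\theta,\tilde w)\bigr]-\bigl[B(\theta,w)-B(\tilde\theta,\tilde w)\bigr],
\]
which has zero conditional mean. Expanding $V$ after one update $[\theta^+;w^+]=[\theta;w]-\mathrm{diag}(\sigma_\theta I,\sigma_w I)\,B_{t_j}(\cdot)$ and taking conditional expectation kills the cross term $\langle\cdot,\xi\rangle$, leaving a deterministic ``signal'' piece identical to the one that already appears in the PDBG analysis, plus a noise piece $\sigma_\theta^2\,\E\|\xi\|_{\Omega^{*}}^2$. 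Because I use exactly the step-size ratio $\beta = 8L_\rho/\lambda_{\min}(\widehat{C})$ prescribed by Theorem~\ref{thm:pdbg}, the signal piece admits the bound
\[
V(\theta^+,w^+)\big|_{\xi=0} \;\le\; \bigl(1 - c_1\sigma_\theta\mu_\rho/\kappa(\widehat{C})\bigr)\,V(\theta,w)
\]
for an absolute constant $c_1$: the full-rank coupling $\widehat{A}$ transfers the strong concavity in $w$ into an effective contraction rate $\mu_\rho/\kappa(\widehat{C})$ on $\theta$, even when $\rho=0$. For the noise piece, the variance bound $\mathrm{Var}(X)\le\E\|X\|^2$ applied to the martingale difference $\xi$, together with the very definition of $L_G$ in~\eqref{eqn:LG-def}, gives $\E\|\xi\|_{\Omega^{*}}^2 \le L_G^2\,\Omega(\theta-\tilde\theta,w-\tilde w)^2 \le 2L_G^2\bigl(V(\theta,w)+V(\tilde\theta,\tilde w)\bigr)$.

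Combining these pieces yields the one-step recursion $\E[V(\theta^+,w^+)] \le (1-\eta)\,V(\theta,w) + 2\sigma_\theta^2L_G^2\,V(\tilde\theta,\tilde w)$ with $\eta = c_1\sigma_\theta\mu_\rho/\kappa(\widehat{C}) - 2\sigma_\theta^2L_G^2$. With the prescribed $\sigma_\theta = \mu_\rho/(48\kappa(\widehat{C})L_G^2)$ one has $\eta = \Theta\bigl(\mu_\rho^2/(\kappa^2(\widehat{C})L_G^2)\bigr)$ and $2\sigma_\theta^2L_G^2/\eta \le 1/5$. Iterating across the $N$ inner steps (during which $V(\tilde\theta,\tilde w)$ is frozen),
\[
\E[V(\theta_m,w_m)] \;\le\; (1-\eta)^N V(\tilde\theta,\tilde w) \;+\; \tfrac{2\sigma_\theta^2L_G^2}{\eta}\,V(\tilde\theta,\tilde w),
\]
and the choice $N = 51\kappa^2(\widehat{C})L_G^2/\mu_\rho^2$ drives the first term below $3/5$, producing the $(4/5)$-per-epoch contraction. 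Since each outer epoch costs $O\bigl((n+N)d\bigr)$ and $O(\log(1/\epsilon))$ epochs suffice, the overall complexity~\eqref{Equ:Thm_SVRG:ComplexityBound} follows.

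The main obstacle will be the signal bound: standard SVRG proofs hinge on strong convexity of the primal objective, which is absent here when $\rho=0$. I must instead reproduce, \emph{inside} the one-step conditional expectation, the indirect contraction argument of Theorem~\ref{thm:pdbg} that routes strong convexity through $\widehat{A}^T\widehat{C}^{-1}\widehat{A}$, and I have to do this in the weighted norm $\Omega$ (where the primal and dual scales are reconciled by $\beta$) rather than in the product Euclidean norm used by \citet{balamurugan2016stochastic} under joint strong convexity--concavity.
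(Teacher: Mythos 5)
Your outline matches the paper's high-level strategy (zero-mean correction term, variance bounded via $L_G$, chaining over $N$ inner steps, balancing $\sigma_\theta$ and $N$), but the step you yourself flag as ``the main obstacle'' --- the one-step signal bound --- is not merely hard in your chosen Lyapunov function, it is false there. Your signal claim is that the deterministic map $\Delta \mapsto (I-\sigma_\theta G)\Delta$, with $G=\bigl[\begin{smallmatrix}\rho I & -\sqrt{\beta}\widehat{A}^T\\ \sqrt{\beta}\widehat{A} & \beta\widehat{C}\end{smallmatrix}\bigr]$, contracts the weighted Euclidean norm $V=\Omega(\cdot)^2=\|\Delta\|^2$ at rate $1-c_1\sigma_\theta\mu_\rho/\kappa(\widehat{C})$. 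When $\rho=0$, take $\Delta=(x,0)$: then $(I-\sigma_\theta G)\Delta=(x,\,-\sigma_\theta\sqrt{\beta}\widehat{A}x)$, whose squared norm is $\|x\|^2+\sigma_\theta^2\beta\|\widehat{A}x\|^2>\|\Delta\|^2$. The symmetric part of $G$ is $\mathrm{diag}(\rho I,\beta\widehat{C})$, which has a nontrivial kernel in the primal block at $\rho=0$, so no single gradient step decreases $\Omega$; the ``implicit strong convexity'' transferred through $\widehat{A}$ only manifests over multiple steps, or equivalently in a different metric. This is precisely why the paper does not run the argument in $\Omega$: it first chooses $\beta=8(\rho+L)/\lambda_{\min}(\widehat{C})$ so that $G=Q\Lambda Q^{-1}$ is diagonalizable with real positive eigenvalues (Appendix~\ref{sec:qp_analysis_G}), and then uses the potential $P_{m,j}=\E\|Q^{-1}\Delta_{m,j}\|^2$ (equivalently the $H$-weighted norm with $H=Q^{-T}Q^{-1}\succ 0$ and $HG$ symmetric), in which $\|I-\sigma_\theta\Lambda\|\le 1-\sigma_\theta\lambda_{\min}(G)$ does give a genuine one-step contraction with $\lambda_{\min}(G)\ge\tfrac{8}{9}\mu_\rho$.

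A secondary consequence of working in the eigenbasis is that the variance term gets inflated by $\kappa^2(Q)\le 8\kappa(\widehat{C})$ (the bound~\eqref{Equ:Appendix:CondNum_H}), and \emph{that} is the actual source of the $\kappa(\widehat{C})$ factors in the prescribed $\sigma_\theta=\mu_\rho/(48\kappa(\widehat{C})L_G^2)$ and $N=51\kappa^2(\widehat{C})L_G^2/\mu_\rho^2$ --- not a $\kappa(\widehat{C})$ degradation of the signal rate as your recursion posits. Your variance bound $\E\|\xi\|_{\Omega^*}^2\le L_G^2\,\Omega(\theta-\tilde\theta,w-\tilde w)^2$ and the final chaining/complexity accounting are fine and mirror the paper, but the proof cannot close until you replace $V$ by the $Q^{-1}$-transformed (or $H$-weighted) potential, prove contraction there, and convert back to $\Omega$ at the end, paying the norm-equivalence constant inside the logarithm.
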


\begin{thm}[Convergence rate of SAGA]
\label{thm:SAGA}
Suppose Assumption~\ref{Asmp:nonsingular} holds.
If we choose 
$\sigma_\theta = \frac{\mu_\rho}{3\left(8\kappa^2(\widehat{C})L_G^2 + n \mu_\rho^2\right)}$
and $\sigma_w=\frac{8L_\rho}{\lambda_{\min}(\widehat{C})}\sigma_\theta$
in Algorithm~\ref{algo:saga_saddle},
then
\[
  \E\bigl[\Omega(\theta_m-\theta_\star, w_m-w_\star)^2\bigr]
  \leq 2 (1-\rho)^m
  \Omega(\theta_0-\theta_\star, w_0-w_\star)^2 ,
\]
where 
$\rho\geq\frac{\mu_\rho^2}{9\left(8\kappa^2(\widehat{C})L_G^2 + n\mu_\rho^2\right)}$.
The total cost to achieve 
$\E\bigl[\Omega(\theta_m-\theta_\star, w_m-w_\star)\bigr]\leq\epsilon$
has the same bound in~\eqref{Equ:Thm_SVRG:ComplexityBound}.
\end{thm}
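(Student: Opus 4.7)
\textbf{Proof proposal for Theorem \ref{thm:SAGA}.} The plan is to mimic the standard SAGA analysis of \citet{defazio2014saga}, adapted to the primal--dual setting along the lines of \citet{balamurugan2016stochastic}, but with one essential modification: since $\mathcal{L}$ need not be strongly convex in $\theta$ when $\rho=0$, we must replace the usual strong-monotonicity bound on $B$ by the ``implicit strong convexity'' lemma established for PDBG in Appendix~\ref{sec:qp_analysis_G}, which lower bounds progress in the weighted norm $\Omega$ by $\mu_\rho^2$ coming from $\lambda_{\min}(\rho I+\widehat{A}^T\widehat{C}^{-1}\widehat{A})$.

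First I would introduce the Lyapunov function
\[
V_m \;\defeq\; \Omega(\theta_m-\theta_\star, w_m-w_\star)^2 \;+\; \frac{c}{n}\sum_{t=1}^n \Omega^{*}\bigl(g_t^{(m)} - B_t(\theta_\star,w_\star)\bigr)^2 ,
\]
where $g_t^{(m)}$ denotes the stored stochastic gradient after iteration $m$ and $c>0$ is a constant to be tuned. The dual-norm pairing between $\Omega$ and $\Omega^{*}$ is what makes the cross terms cancel nicely in expectation. Then I would expand $\mathbb{E}_{t_m}[V_{m+1}]$ by conditioning on the history before step $m$. Three ingredients appear: (i) a standard ``one step of SGD with unbiased direction'' inequality, exploiting that $\mathbb{E}_{t_m}[B+h_{t_m}-g_{t_m}]=B(\theta_m,w_m)$; (ii) the variance bound for the SAGA correction, which via the definition of $L_G$ in~\eqref{eqn:LG-def} gives
\[
\mathbb{E}_{t_m}\!\Omega^{*}(B+h_{t_m}-g_{t_m})^2 \le 2L_G^2\,\Omega(\theta_m-\theta_\star,w_m-w_\star)^2 + \frac{2}{n}\sum_t \Omega^{*}(g_t^{(m)}-B_t(\theta_\star,w_\star))^2;
\]
and (iii) the ``table update'' identity, giving $\mathbb{E}_{t_m}\tfrac1n\sum_t\Omega^{*}(g_t^{(m+1)}-\cdot)^2 = (1-\tfrac1n)\tfrac1n\sum_t\Omega^{*}(g_t^{(m)}-\cdot)^2 + \tfrac1n\cdot \tfrac1n\sum_t\Omega^{*}(B_t(\theta_m,w_m)-B_t(\theta_\star,w_\star))^2$, and the last double sum is again controlled by $L_G^2\,\Omega(\theta_m-\theta_\star,w_m-w_\star)^2$.

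The decisive step is bounding the inner-product term $\langle B(\theta_m,w_m), (\theta_m-\theta_\star,w_m-w_\star)\rangle$ from below. This is precisely where $B$ fails to be strongly monotone when $\rho=0$. I would invoke the matrix inequality from Appendix~\ref{sec:qp_analysis_G}: with $\beta=8L_\rho/\lambda_{\min}(\widehat{C})$, the symmetric part of the appropriately scaled block operator $\begin{bmatrix}\sigma_\theta I & 0\\ 0 & \sigma_w I\end{bmatrix}^{1/2}\!\begin{bmatrix}\rho I & -\widehat{A}^T\\ \widehat{A} & \widehat{C}\end{bmatrix}\!\begin{bmatrix}\sigma_\theta I & 0\\ 0 & \sigma_w I\end{bmatrix}^{1/2}$ satisfies a lower bound of order $\sigma_\theta \mu_\rho / \kappa(\widehat{C})$ times the identity in the $\Omega$ metric. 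This is what converts dual strong concavity, via the full-rank coupling $\widehat{A}$, into genuine linear contraction of $\Omega(\theta_m-\theta_\star,w_m-w_\star)^2$, just as in the PDBG proof.

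Combining the three expansions and choosing $c \asymp \sigma_\theta$, the right-hand side of $\mathbb{E}_{t_m} V_{m+1}$ becomes $V_m$ minus a multiple of $\sigma_\theta \mu_\rho^2/\kappa(\widehat{C})\cdot \Omega^2$ plus a multiple of $\sigma_\theta^2 L_G^2 \kappa(\widehat{C})\cdot V_m$; balancing these by the stated $\sigma_\theta=\mu_\rho/[3(8\kappa^2(\widehat{C})L_G^2+n\mu_\rho^2)]$ yields the contraction factor $1-\rho$ with $\rho\ge\mu_\rho^2/[9(8\kappa^2(\widehat{C})L_G^2+n\mu_\rho^2)]$. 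The initial factor $2$ comes from initializing the table at $(\theta_0,w_0)$ so that $V_0 \le 2\,\Omega(\theta_0-\theta_\star,w_0-w_\star)^2$ (using $L_G$-smoothness to bound $\Omega^{*}(B_t(\theta_0,w_0)-B_t(\theta_\star,w_\star))^2$). Unrolling gives the claimed rate; since $1/\rho = O(\kappa^2(\widehat{C})L_G^2/\mu_\rho^2 + n)$ and each iteration costs $O(d)$, the total complexity matches~\eqref{Equ:Thm_SVRG:ComplexityBound}.

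The main obstacle, as anticipated, is the implicit-strong-convexity step: when $\rho=0$ the naive saddle-point monotonicity gives only $\langle B(\theta,w)-B(\theta_\star,w_\star),(\theta-\theta_\star,w-w_\star)\rangle \ge \|w-w_\star\|_{\widehat{C}}^2$, which controls only the dual. Importing the PDBG lemma about the block matrix above is what turns this into a bound on the full $\Omega^2$, and its careful interplay with the SAGA variance terms (requiring the specific ratio $\beta=8L_\rho/\lambda_{\min}(\widehat{C})$) is the delicate part of the calculation.
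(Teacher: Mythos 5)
Your overall architecture (a SAGA-style Lyapunov function coupling the iterate error with the table of stored gradients, a variance bound via $L_G$, and the table-update identity) parallels the paper's proof, which likewise couples $P_m=\E\|Q^{-1}\Delta_m\|^2$ with a table term $Q_m=\E[\frac1n\sum_t\|Q^{-1}G_t\Delta_{\phi_t^m}\|^2]$. But your ``decisive step'' contains a genuine gap. You propose to lower-bound the descent inner product by invoking a lemma asserting that the symmetric part of the scaled block operator $D^{1/2}\bigl[\begin{smallmatrix}\rho I & -\widehat{A}^T\\ \widehat{A} & \widehat{C}\end{smallmatrix}\bigr]D^{1/2}$ is bounded below by order $\sigma_\theta\mu_\rho/\kappa(\widehat{C})$ times the identity in the $\Omega$ metric. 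No such lemma exists in Appendix~\ref{sec:qp_analysis_G}, and it cannot hold: for any diagonal scaling $D=\diag(\sigma_\theta I,\sigma_w I)$ the antisymmetric coupling blocks cancel, so the symmetric part is exactly $\diag(\sigma_\theta\rho I,\;\sigma_w\widehat{C})$, which is singular when $\rho=0$. Consequently $\langle B(\theta,w)-B(\theta_\star,w_\star),(\theta-\theta_\star,w-w_\star)\rangle = \rho\|\theta-\theta_\star\|^2+\|w-w_\star\|^2_{\widehat{C}}$ controls only the dual error — precisely the obstruction you name in your last paragraph — and the full-rankness of $\widehat{A}$ cannot rescue the \emph{symmetric part} in any weighted Euclidean metric of the form $\Omega$.

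What the paper actually does is qualitatively different and is the one idea your plan is missing: it shows that for $\beta=\sigma_w/\sigma_\theta$ large enough the \emph{non-symmetric} matrix $G$ is diagonalizable with all eigenvalues real and positive, with $\lambda_{\min}(G)\ge\frac89\mu_\rho$ and $\kappa^2(Q)\le 8\kappa(\widehat{C})$ for the (non-orthogonal, primal--dual mixing) eigenvector matrix $Q$. The descent inequality is then carried out in the skewed norm $\|Q^{-1}\cdot\|$, where $\E_{t_m}[v_m]=\sigma_\theta G\Delta_m$ gives $\Delta_m^TQ^{-T}Q^{-1}G\Delta_m=(Q^{-1}\Delta_m)^T\Lambda(Q^{-1}\Delta_m)\ge\lambda_{\min}(G)\|Q^{-1}\Delta_m\|^2$; the price is the $\kappa^2(Q)$ factors that propagate into the variance bounds and the step size. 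The implicit strong convexity thus enters through the \emph{spectrum} of $G$ after a similarity transformation, not through positive definiteness of its symmetric part. Without replacing your step (or proving some genuinely new inequality in the $\Omega$ metric), the proposed argument does not go through for $\rho=0$, which is exactly the regime the theorem is claiming as its main contribution.
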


Similar to our PDBG results in \eqref{Equ:Thm_PDBG:ComplexityBound}, 
both the SVRG and SAGA algorithms for policy evaluation enjoy
linear convergence even if there is no strong convexity in the 
saddle-point problem \eqref{eqn:em_mspbe_saddle_point} (i.e., when $\rho = 0$).
This is mainly due to the
positive definiteness of $\widehat{A}^T \widehat{C}^{-1} \widehat{A}$ when
$\widehat{C}$ is positive-definite and $\widehat{A}$ is full-rank. 
In contrast, the linear convergence of SVRG and SAGA in
\citet{balamurugan2016stochastic} requires 
the Lagrangian to be both strongly convex in $\theta$ and
strongly concave in $w$.

Moreover, in the policy evaluation problem,
the strong concavity with respect to the dual variable~$w$
comes from a weighted quadratic norm $(1/2)\|w\|_{\widehat{C}}$, which
does not admit an efficient proximal mapping as required by
the proximal versions of SVRG and SAGA in
\citet{balamurugan2016stochastic}.
Our algorithms only require computing the stochastic gradients of 
this function, which is easy to do due to its finite sum structure.

\citet{balamurugan2016stochastic} also proposed accelerated variants of 
SVRG and SAGA using the ``catalyst'' framework of 
\citet{LinMairalHarchaoui2015catalyst}.
Such extensions can be done similarly for the three algorithms presented
in this paper, and we omit the details due to space limit.


\section{Comparison of Different Algorithms}

This section compares the computation complexities of several representative policy-evaluation algorithms that minimize EM-MSPBE, as summarized in
Table~\ref{Tab:Complexity}. 

The upper part of the table lists algorithms
whose complexity is linear in feature dimension $d$, including the two new algorithms presented in the previous section.
We can also apply GTD2
to a finite dataset with samples drawn uniformly at random with replacement.
It costs $O(d)$ per iteration, but has a sublinear 
convergence rate regarding $\epsilon$.  
In practice, people may choose $\epsilon=\Omega(1/n)$ for generalization reasons (see, e.g., \citet{Lazaric10Finite}), leading to an $O(\kappa'nd)$ overall complexity for GTD2, where $\kappa'$ is a condition number related to the algorithm.  However, as verified by our experiments, the bounds in the table show that our SVRG/SAGA-based algorithms are much faster as their effective condition numbers vanish when $n$ becomes large.  TDC has a similar complexity to GTD2.

In the table, we list two different implementations of PDBG. PDBG-(I)
computes the gradients by averaging the stochastic gradients over the entire
dataset at each iteration, which costs $O(nd)$ operations; see discussions at the end of Section~\ref{Sec:BatchGradient}.
PDBG-(II) first pre-computes the matrices $\widehat{A}$, $\widehat{b}$ and 
$\widehat{C}$ using $O(nd^2)$ operations, then computes the batch
gradient at each iteration with $O(d^2)$ operations. 
If $d$ is very large (e.g., when $d \gg n$), then PDBG-(I) would have an advantage over PDBG-(II). 
The lower part of the table also includes LSTD, which has $O(nd^2)$ complexity if rank-one updates are used.

SVRG and SAGA are more efficient than the other algorithms, when either $d$ or $n$ is very large. 
In particular, they have a lower complexity than LSTD when 
 $d > (1 + \frac{\kappa(\widehat{C})\kappa_G^2}{n} ) \log\Bigl(\frac{1}{\epsilon}\Bigr)$,
This condition is easy to satisfy, when $n$ is very large.
On the other hand, SVRG and SAGA algorithms are more efficient than PDBG-(I) if $n$ is large, say 
$n > \kappa(\widehat{C})\kappa_G^2\big/
\bigl(\kappa(\widehat{C})\kappa - 1\bigr)$,
where $\kappa$ and $\kappa_G$ are described in the caption of
Table~\ref{Tab:Complexity}.

\begin{table}[t]
\centering
\small
\caption{Complexity of different policy evaluation algorithms. In the table, $d$ is feature dimension, $n$ is dataset size, $\kappa \defeq \kappa(\rho I + \widehat{A}^T \widehat{C}^{-1} \widehat{A})$; $\kappa_G \defeq {L_G}/{\lambda_{\min}(\rho I + \widehat{A}^T \widehat{C}^{-1} \widehat{A})}$; and $\kappa'$ is a condition number related to GTD2.\vspace{0.5ex}}
\label{Tab:Complexity}
\renewcommand{\arraystretch}{1.5}
\begin{tabular}{ |c|l|}
\hline
{\bf Algorithm} & {\bf Total Complexity} \\
\hline
SVRG / SAGA & $O\left( nd \cdot  \Big(1+\frac{\kappa(\widehat{C}) \kappa_G^2}{n} \Big) \cdot  \log\big(1/\epsilon\big) \right)$  \\
GTD2 & $O\left( d \cdot \kappa'/\epsilon \right)$ \\
PDBG-(I) & $O\left( nd \cdot \kappa(\widehat{C})  \kappa \cdot \log(1/\epsilon) \right)$ \\
\hline
PDBG-(II) & $O\left( nd^2  + d^2 \kappa(\widehat{C})\kappa \cdot \log(1/\epsilon) \right)$ \\
LSTD & $O\bigl(nd^2\bigr)$ or $O\bigl(nd^2+d^3\bigr)$ \\
\hline
\end{tabular}
\end{table}

There are other algorithms whose complexity scales linearly with $n$ and $d$, including iLSTD~\cite{Geramifard07Ilstd}, and TDC~\cite{sutton2009fast}, fLSTD-SA~\cite{prashanth2014fast},
and the more recent algorithms of \citet{WangLiuFang2016NIPS} and 
\citet{DaiHePanBootsSong2016}.
However, their convergence is slow: 
the number of iterations required to reach a desired accuracy~$\epsilon$ 
grows as $1/\epsilon$ or worse.
The CTD algorithm~\cite{Korda15Td} uses a similar idea as SVRG to reduce variance in TD updates.  This algorithm is shown to have a similar linear convergence rate in an \emph{online} setting where the data stream is generated by a Markov process with \emph{finite} states and \emph{exponential} mixing.  The method solves for a fixed-point solution by stochastic approximation.  As a result, they can be non-convergent in off-policy learning, while our algorithms remain stable (c.f., Section~\ref{Sec:ExtensionOffPolicy}).


%


\section{Extensions}
\label{Sec:ExtensionOffPolicyEligibility}

It is possible to extend our approach to accelerate optimization of other objectives such as MSBE and NEU~\cite{dann2014policy}.  In this section, we briefly describe two extensions of the algorithms developed earlier.


\subsection{Off-policy learning}
\label{Sec:ExtensionOffPolicy}

In some cases, we may want to estimate the value function of a
policy $\pi$ from a set of data $\mathcal{D}$ generated 
by a different ``behavior'' policy $\pi_b$. 
This is called \emph{off-policy learning}~\citep[Chapter~8]{sutton1998reinforcement}.

In the off-policy case, samples are generated from the distribution induced by the behavior policy $\pi_b$, not the the target policy $\pi$.  While such a mismatch often causes stochastic-approximation-based methods to diverge~\cite{Tsitsiklis97Analysis}, our gradient-based algorithms remain convergent with the same (fast) convergence rate.

Consider the RL framework outlined in Section~\ref{sec:pre}.  For each state-action pair $(s_t,a_t)$ such that $\pi_b(a_t|s_t) > 0$, we define the importance ratio, $\rho_t \defeq \pi(a_t|s_t)/\pi_b(a_t|s_t)$.  The EM-MSPBE for off-policy learning has the same expression as in~\eqref{eqn:em-mspbe} except that $A_t$, $b_t$ and $C_t$ are modified by the weight factor~$\rho_t$, as listed in Table~\ref{Tab:ExpressionABC}; see also \citet[Eqn~6]{liu2015finite} for a related discussion.)  Algorithms~\ref{algo:PDBG_saddle}--\ref{algo:saga_saddle} remain the same for the off-policy case after $A_t$, $b_t$ and $C_t$ are modified correspondingly.

\subsection{Learning with eligibility traces}

Eligibility traces are a useful technique to trade off bias and variance in TD learning~\cite{Singh96Reinforcement,Kearns00Bias}.
When they are used, we can pre-compute $z_t$ in Table~\ref{Tab:ExpressionABC} before running our new algorithms. 
Note that EM-MSPBE with eligibility traces has the same form of
\eqref{eqn:em-mspbe}, with $A_t$, $b_t$ and $C_t$ defined differently according to the last row of Table~\ref{Tab:ExpressionABC}. 
At the $m$-th step of the learning process, the algorithm randomly samples $z_{t_m}, \phi_{t_m}, \phi_{t_m}'$ and $r_{t_m}$ from the fixed dataset and computes the corresponding stochastic gradients, where the index $t_m$ is uniformly distributed over $\{1,\ldots,n\}$ and are independent for different values of $m$. 
Algorithms~\ref{algo:PDBG_saddle}--\ref{algo:saga_saddle}
immediately work for this case, enjoying a similar linear convergence rate and a computation complexity linear in $n$ and $d$.
We need  additional $O(nd)$ operations to pre-compute $z_t$ recursively and an additional $O(nd)$ storage for $z_t$. 
However, it does not change the order of the total complexity for SVRG/SAGA. 


\begin{table}[t]
\centering
\small
\renewcommand{\arraystretch}{1.3}
\caption{Expressions of $A_t$, $b_t$ and $C_t$ for different cases of policy evaluation.  Here, $\rho_t \defeq \pi(a_t|s_t)/\pi_b(a_t | s_t)$; and $z_t \defeq \sum_{i=1}^t (\lambda \gamma)^{t-i} \phi_i$, where $\lambda \ge 0$ is a given parameter.\vspace{0.5ex}}
\label{Tab:ExpressionABC}
\begin{tabular}{ |c|c|c|c|}
\hline
& $A_t$ & $b_t$ & $C_t$ \\
\hline
{On-policy} & $\phi_t(\phi_t-\gamma\phi_t')^\top$ & $r_t \phi_t$ & $\phi_t \phi_t^\top$ \\
\hline
{Off-policy} & $\rho_t \phi_t (\phi_t-\gamma\phi_t')^\top$ & $\rho_t r_t \phi_t$ & $\phi_t\phi_t^\top$ \\
\hline
{Eligibility trace} & $z_t  (\phi_t-\gamma\phi_t')^\top$ & $r_t z_t$ & $\phi_t\phi_t^\top$ \\
\hline
\end{tabular}
\end{table}

\section{Experiments}
\label{sec:exp}


\begin{figure*}[t!]
	\centering
	\begin{subfigure}[t]{0.29\textwidth}
		\includegraphics[width=\textwidth]{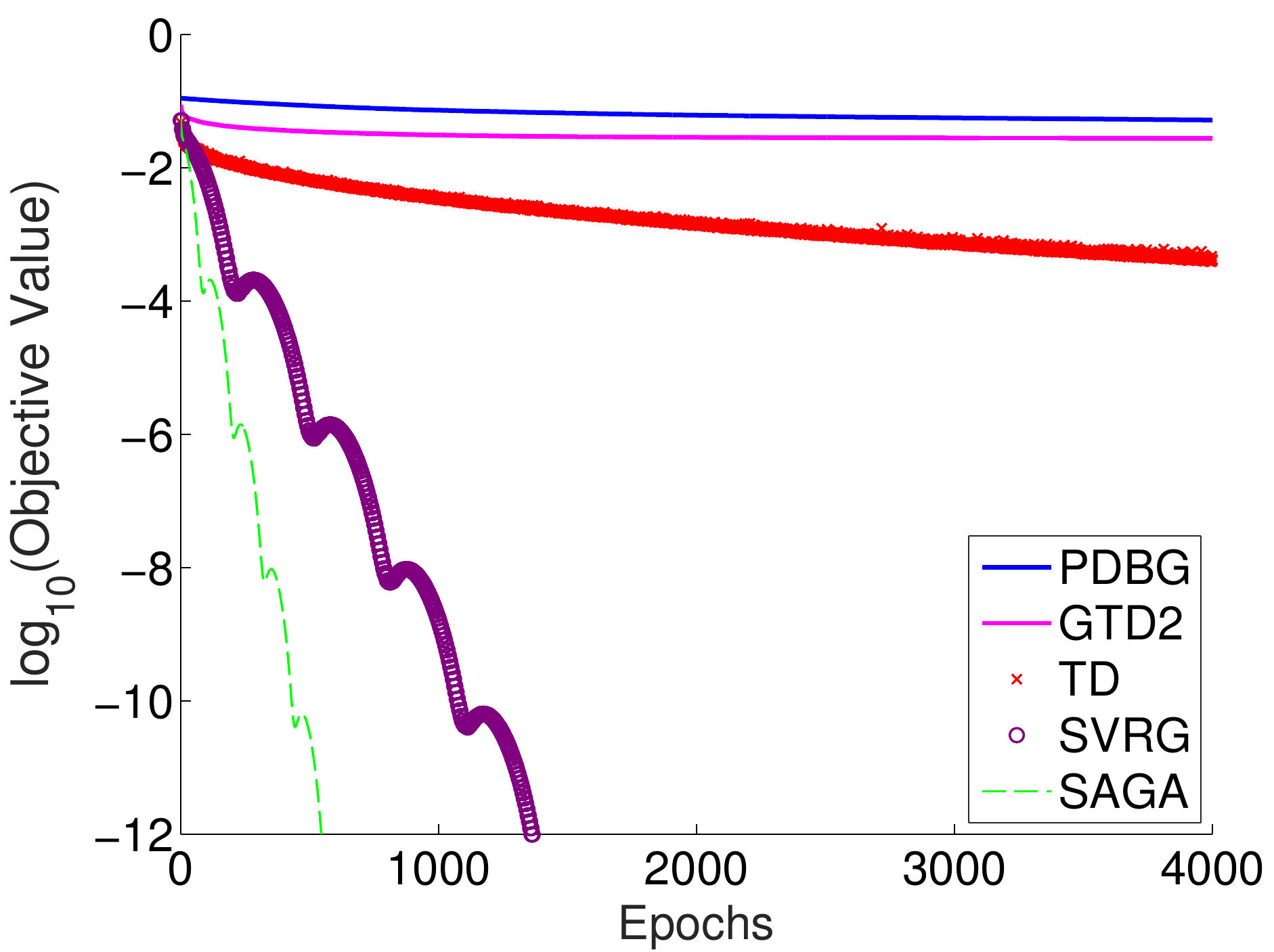}
		\caption{$\rho=0$}
	\end{subfigure}	
	\quad
	\begin{subfigure}[t]{0.29\textwidth}
		\includegraphics[width=\textwidth]{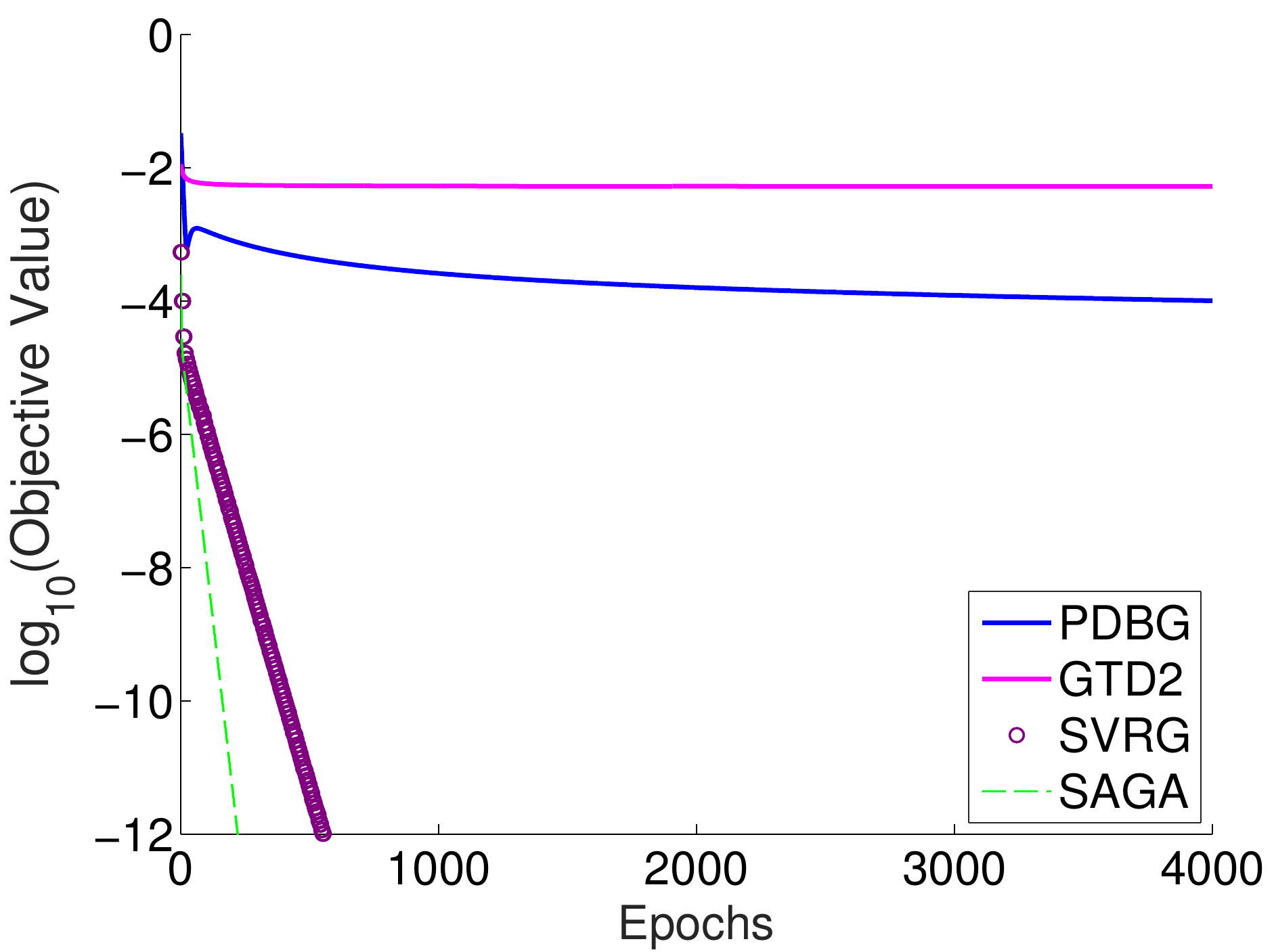}
		\caption{$\rho=\sqrt{\lambda_{\max}(\widehat{A}^\top \widehat{C}^{-1}\widehat{A})}$}
	\end{subfigure}
	\quad
	\begin{subfigure}[t]{0.29\textwidth}
		\includegraphics[width=\textwidth]{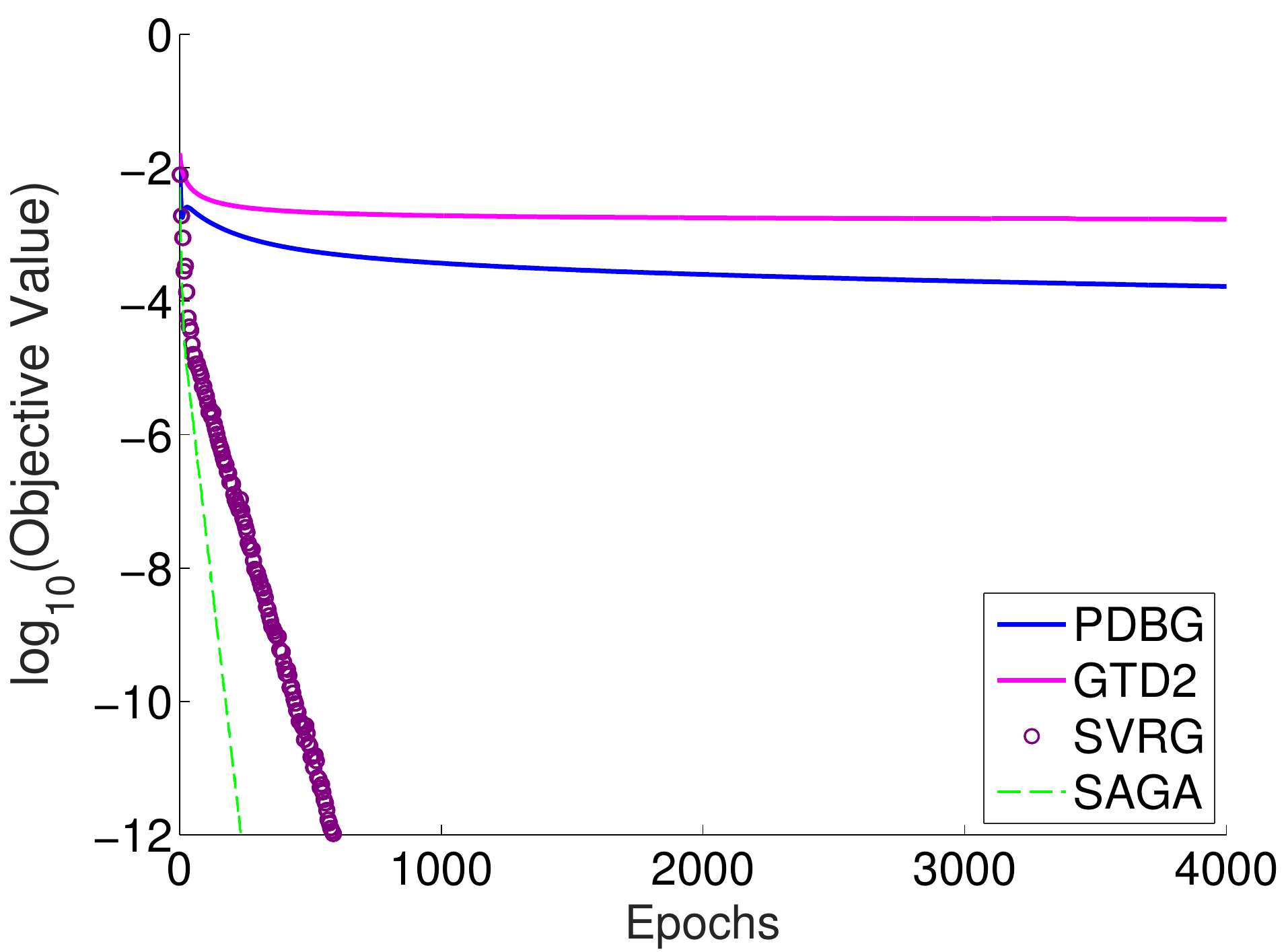}
		\caption{$\rho=\lambda_{\max}\left(\widehat{A}^\top \widehat{C}^{-1}\widehat{A}\right)$}
	\end{subfigure}
	\caption{Random MDP with $s=400$, $d=200$, and $n=20000$.
	}
	\label{fig:randmdp200d20000n}
\end{figure*}
\begin{figure*}[t!]
	\centering
	\begin{subfigure}[t]{0.29\textwidth}
		\includegraphics[width=\textwidth]{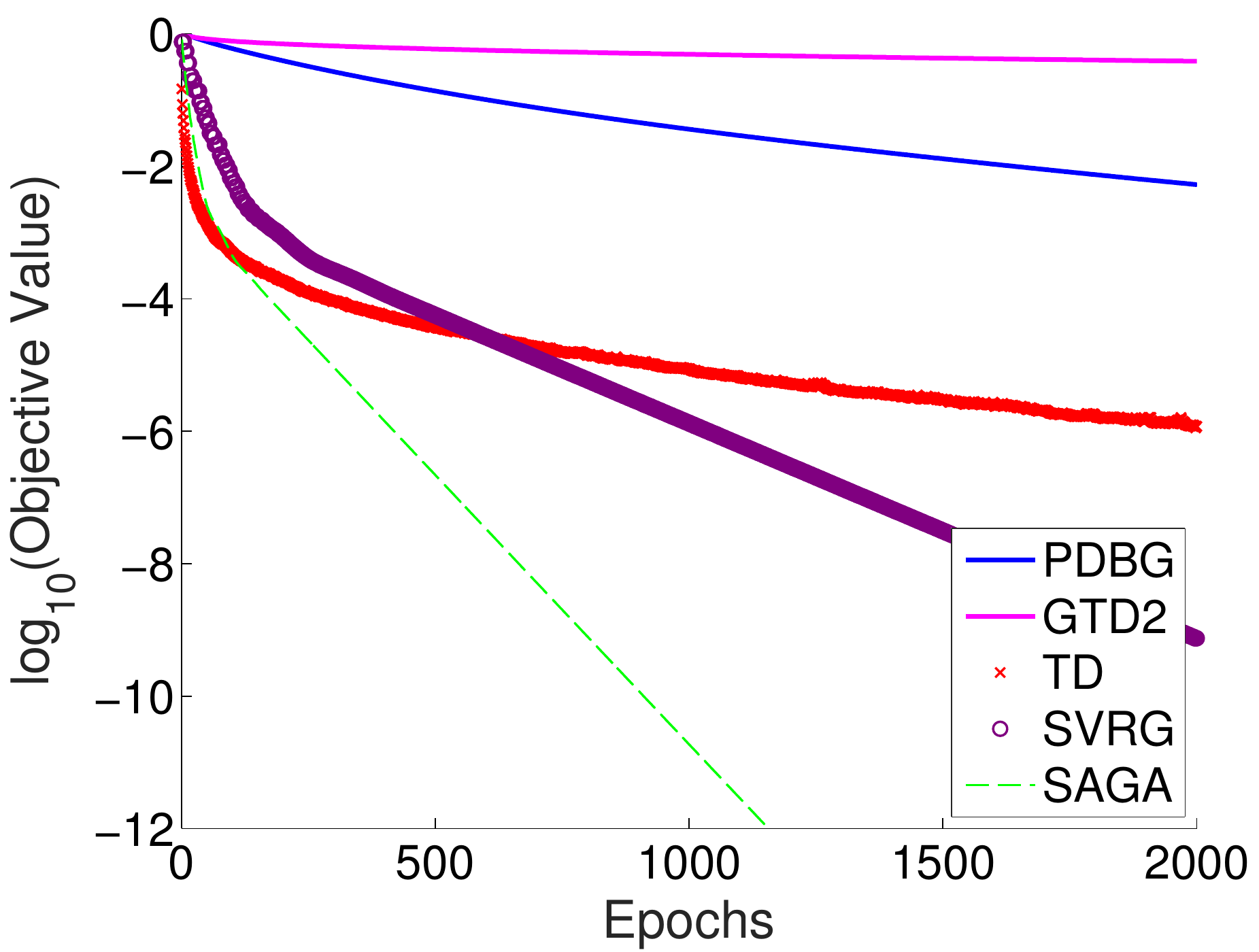}
		\caption{$\rho=0$}
	\end{subfigure}	
	\quad
	\begin{subfigure}[t]{0.29\textwidth}
		\includegraphics[width=\textwidth]{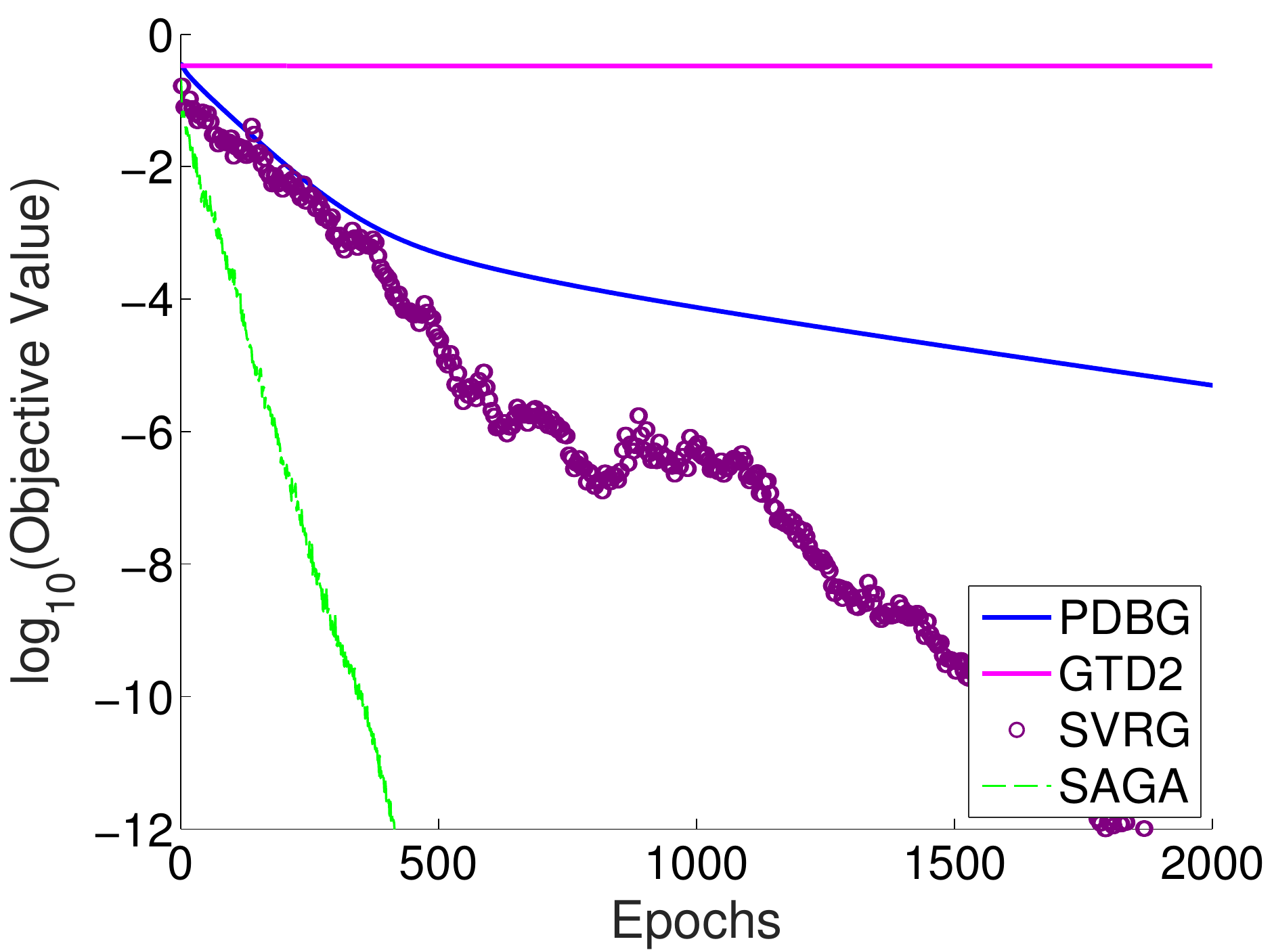}
		\caption{$\rho=0.01\lambda_{\max}(\widehat{A}^\top \widehat{C}^{-1}\widehat{A})$}
	\end{subfigure}
	\quad	
	\begin{subfigure}[t]{0.29\textwidth}
		\includegraphics[width=\textwidth]{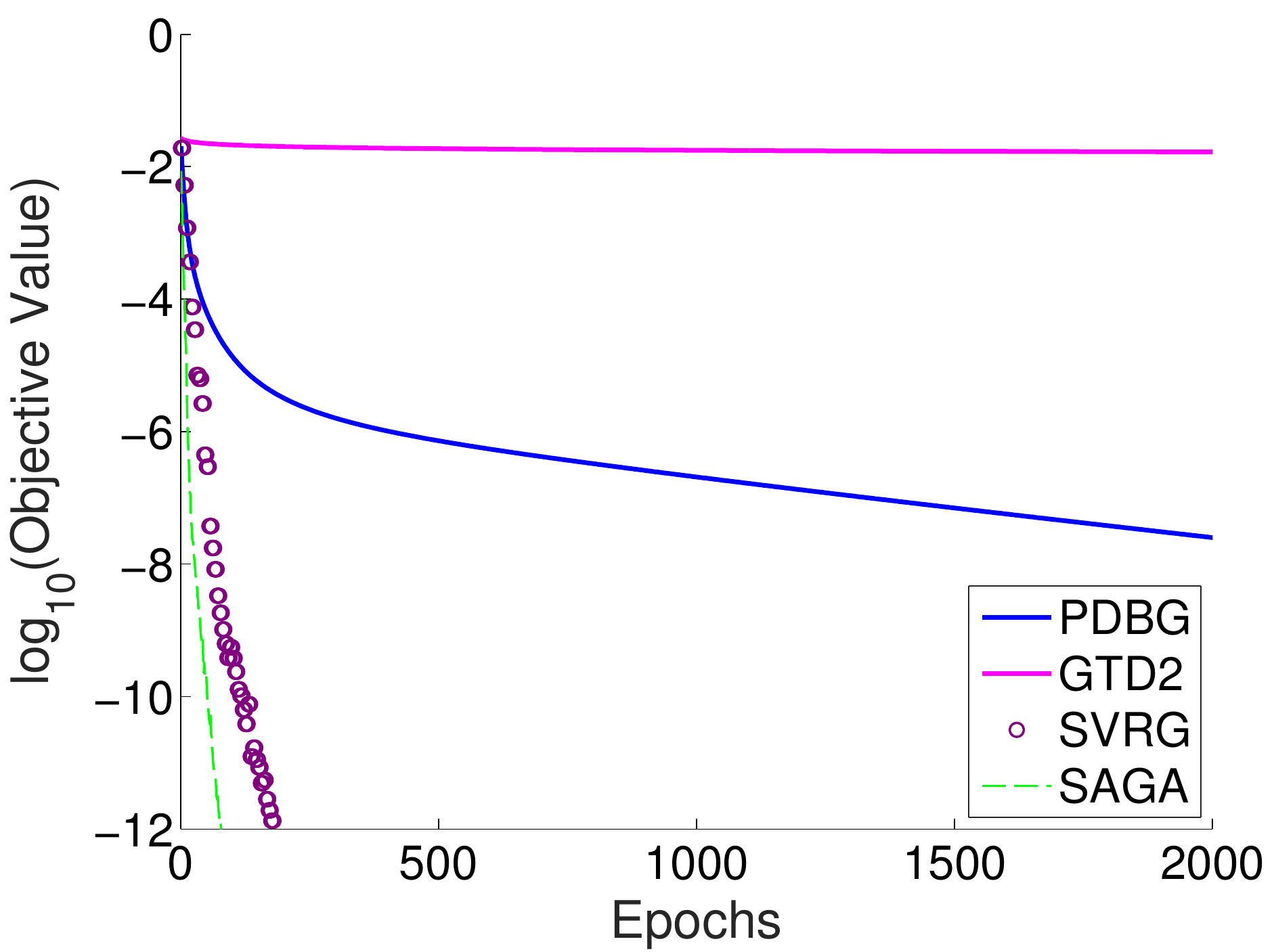}
		\caption{$\rho=\lambda_{\max}(\widehat{A}^\top \widehat{C}^{-1}\widehat{A})$}
	\end{subfigure}	
	\caption{Mountain Car Data Set with $d= 300$ and $n=5000$.
	}
	\label{fig:mountain_car_n5000}
\end{figure*}
\begin{figure*}[t!]
	\centering
	\begin{subfigure}[t]{0.29\textwidth}
		\includegraphics[width=\textwidth]{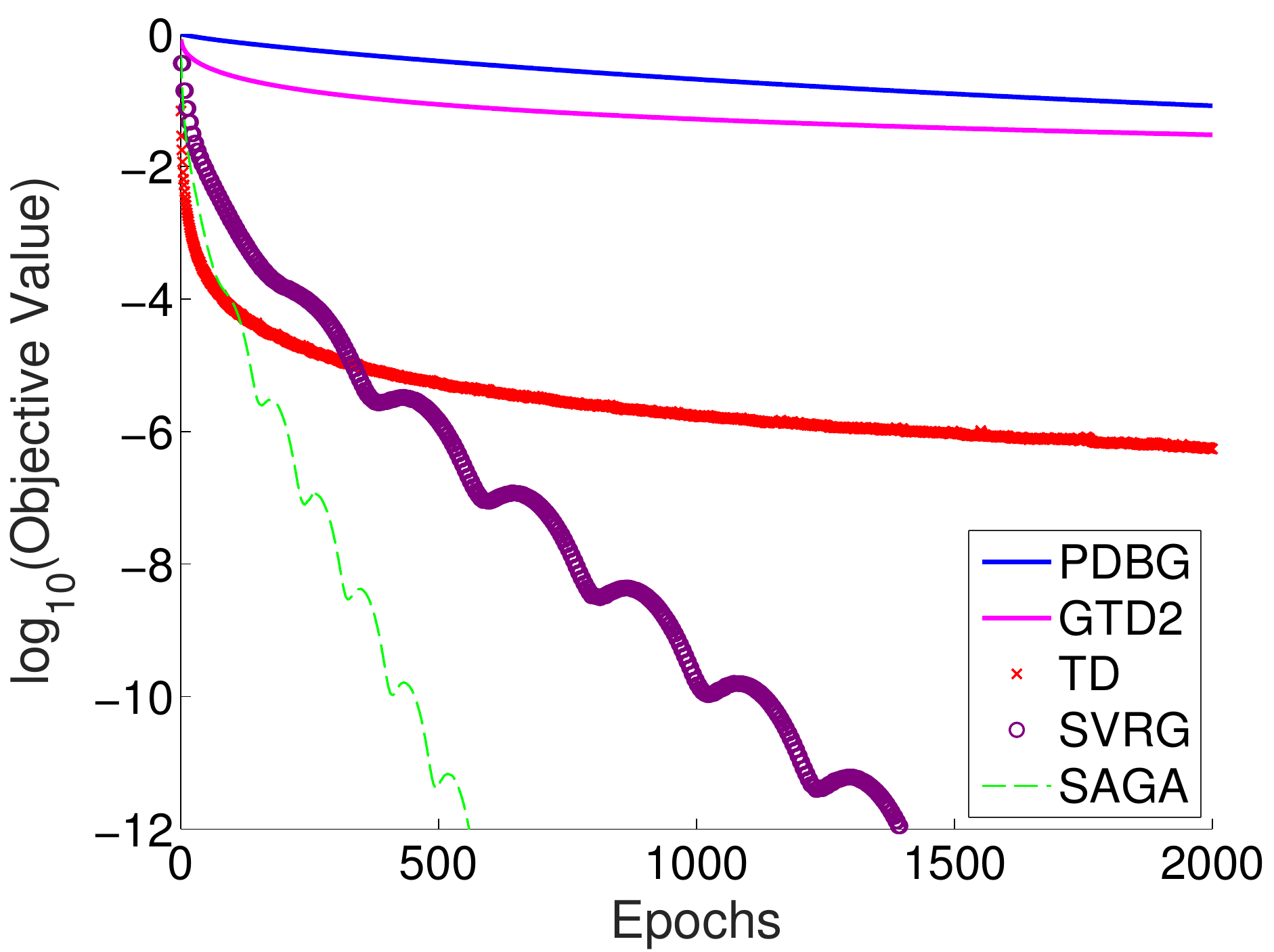}
		\caption{$\rho=0$}
	\end{subfigure}	
	\quad
	\begin{subfigure}[t]{0.29\textwidth}
		\includegraphics[width=\textwidth]{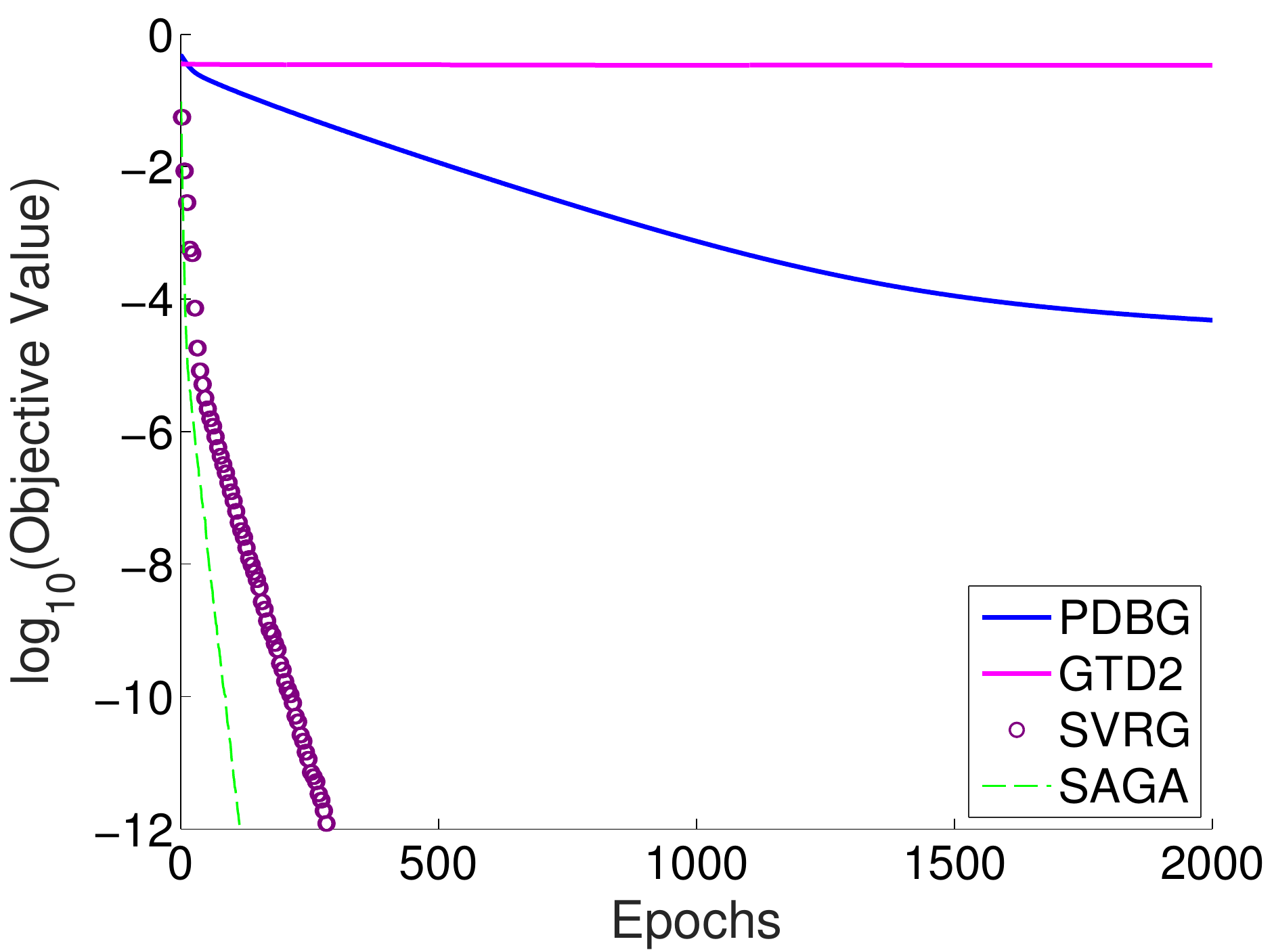}
		\caption{$\rho=0.01\lambda_{\max}(\widehat{A}^\top \widehat{C}^{-1}\widehat{A})$}
	\end{subfigure}
	\quad	
	\begin{subfigure}[t]{0.29\textwidth}
		\includegraphics[width=\textwidth]{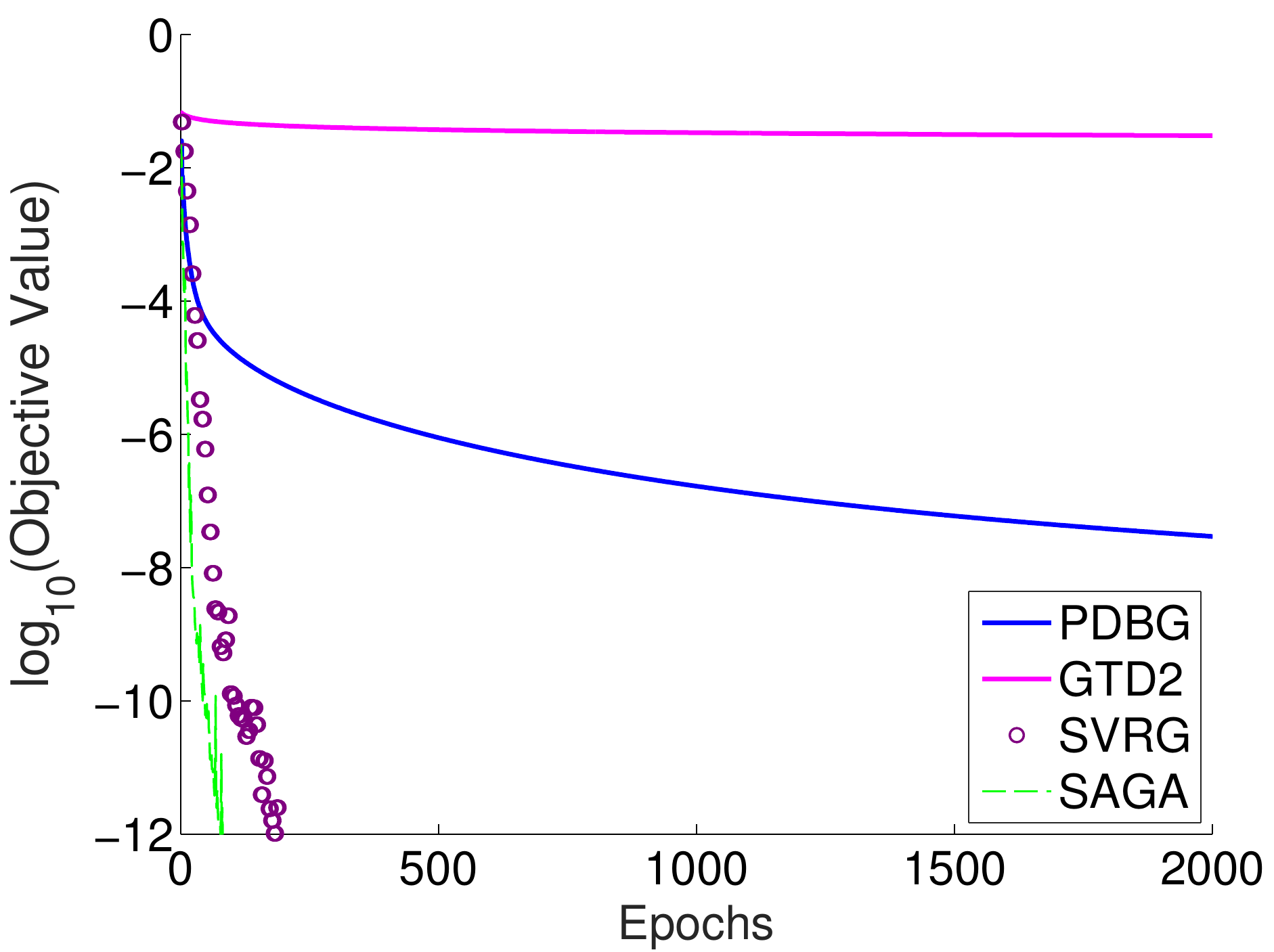}
		\caption{$\rho=\lambda_{\max}(\widehat{A}^\top \widehat{C}^{-1}\widehat{A})$}
	\end{subfigure}	
	\caption{Mountain Car Data Set with $d= 300$ and $n=20000$.
	}
	\label{fig:mountain_car_n20000}
\vspace{-0.5\baselineskip}
\end{figure*}


In this section, we compare the following algorithms on two benchmark problems: 
(i) \textbf{PDBG} (Algorithm~\ref{algo:PDBG_saddle}); 
(ii) \textbf{GTD2} with samples drawn randomly with replacement from a dataset;
(iii) \textbf{TD}: the fLSTD-SA algorithm of \citet{prashanth2014fast};
(iv) \textbf{SVRG} (Algorithm~\ref{algo:SVRG_saddle});
and (v) \textbf{SAGA} (Algorithm~\ref{algo:saga_saddle}).
Note that when $\rho > 0$, the TD solution and EM-MSPBE minimizer
differ, so we do not include \textbf{TD}.
%
For step size tuning, $\sigma_\theta$ is chosen from $\left\{10^{-1},10^{-2},\ldots,10^{-6}\right\}\frac{1}{L_\rho \kappa(\widehat{C})}$ and $\sigma_w$ is chosen from  $\left\{1,10^{-1},10^{-2}\right\}\frac{1}{\lambda_{\max}(\widehat{C})}$. 
We only report the results of each algorithm which correspond to the best-tuned step sizes; for SVRG we choose $N=2n$.

In the first task, we consider a randomly generated MDP with $400$ states and $10$ actions~\cite{dann2014policy}. 
The transition probabilities are defined as $P\left(s'|a,s\right) \propto p_{ss'}^a + 10^{-5}$, where $p_{ss'}^a \sim U[0,1]$.
The data-generating policy and start distribution were generated in a similar way. 
Each state is represented by a $201$-dimensional feature vector, where $200$ of the features were sampled from a uniform distribution, and the last feature was constant one.
We chose $\gamma=0.95$.
Fig.~\ref{fig:randmdp200d20000n} shows the performance of various algorithms for $n=20000$.
First, notice that the stochastic variance methods converge much faster than others.  
In fact, our proposed methods achieve linear convergence. 
Second, as we increase $\rho$, the performances of PDBG, SVRG and SAGA improve significantly due to better conditioning, as predicted by our theoretical results.
%

Next, we test these algorithms on Mountain Car~\citep[Chapter~8]{sutton1998reinforcement}.  To collect the dataset, we first ran Sarsa with $d=300$ CMAC features to obtain a good policy.  Then, we ran this policy to collect trajectories that comprise the dataset.
%
Figs.~\ref{fig:mountain_car_n5000} and \ref{fig:mountain_car_n20000} show our proposed stochastic variance reduction methods dominate other first-order methods.  Moreover, with better conditioning (through a larger $\rho$), PDBG, SVRG and SAGA achieve faster convergence rate.
Finally, as we increase sample size $n$, SVRG and SAGA converge faster.
This simulation verifies our theoretical finding in Table~\ref{Tab:Complexity} that SVRG/SAGA need fewer epochs for large $n$.

\section{Conclusions}
\label{sec:con}

In this paper, we reformulated the EM-MSPBE minimization problem in policy evaluation into an empirical saddle-point problem, and developed and analyzed a batch gradient method and two first-order stochastic variance reduction methods to solve the problem. 
An important result we obtained is that even when the reformulated saddle-point problem lacks strong convexity in primal variables and has only strong concavity in dual variables, the proposed algorithms are still able to achieve a linear convergence rate.  We are not aware of any similar results for primal-dual batch gradient methods or stochastic variance reduction methods. Furthermore, we showed that when both the feature dimension $d$ and the number of samples $n$ are large, the developed stochastic variance reduction methods are more efficient than any other gradient-based methods which are convergent in off-policy settings. 

This work leads to several interesting directions for research.  First, we believe it is important to extend the stochastic variance reduction methods to nonlinear approximation paradigms~\cite{bhatnagar2009convergent}, especially with deep neural networks.
Moreover, it remains an important open problem how to apply stochastic variance reduction techniques to policy optimization.



\bibliography{simonduref}

\begin{thebibliography}{36}
\providecommand{\natexlab}[1]{#1}
\providecommand{\url}[1]{\texttt{#1}}
\expandafter\ifx\csname urlstyle\endcsname\relax
  \providecommand{\doi}[1]{doi: #1}\else
  \providecommand{\doi}{doi: \begingroup \urlstyle{rm}\Url}\fi

\bibitem[Balamurugan \& Bach(2016)Balamurugan and
  Bach]{balamurugan2016stochastic}
Balamurugan, P and Bach, Francis.
\newblock Stochastic variance reduction methods for saddle-point problems.
\newblock In \emph{Advances in Neural Information Processing Systems 29}, pp.\
  1416--1424, 2016.

\bibitem[Bertsekas \& Tsitsiklis(1995)Bertsekas and
  Tsitsiklis]{bertsekas1995neuro}
Bertsekas, Dimitri~P and Tsitsiklis, John~N.
\newblock Neuro-dynamic programming: An overview.
\newblock In \emph{Decision and Control, 1995., Proceedings of the 34th IEEE
  Conference on}, volume~1, pp.\  560--564. IEEE, 1995.

\bibitem[Bhatnagar et~al.(2009)Bhatnagar, Precup, Silver, Sutton, Maei, and
  Szepesv{\'a}ri]{bhatnagar2009convergent}
Bhatnagar, Shalabh, Precup, Doina, Silver, David, Sutton, Richard~S, Maei,
  Hamid~R, and Szepesv{\'a}ri, Csaba.
\newblock Convergent temporal-difference learning with arbitrary smooth
  function approximation.
\newblock In \emph{Advances in Neural Information Processing Systems}, pp.\
  1204--1212, 2009.

\bibitem[Boyan(2002)]{Boyan2002LSTD}
Boyan, Justin~A.
\newblock Technical update: Least-squares temporal difference learning.
\newblock \emph{Machine Learning}, 49:\penalty0 233--246, 2002.

\bibitem[Bradtke \& Barto(1996)Bradtke and Barto]{BradtkeBarto1996LSTD}
Bradtke, Steven~J and Barto, Andrew~G.
\newblock Linear least-squares algorithms for temporal difference learning.
\newblock \emph{Machine Learning}, 22:\penalty0 33--57, 1996.

\bibitem[Chambolle \& Pock(2011)Chambolle and Pock]{chambolle2011first}
Chambolle, Antonin and Pock, Thomas.
\newblock A first-order primal-dual algorithm for convex problems with
  applications to imaging.
\newblock \emph{Journal of Mathematical Imaging and Vision}, 40\penalty0
  (1):\penalty0 120--145, 2011.

\bibitem[Dai et~al.(2016)Dai, He, Pan, Boots, and Song]{DaiHePanBootsSong2016}
Dai, Bo, He, Niao, Pan, Yunpeng, Boots, Byron, and Song, Le.
\newblock Learning from conditional distributions via dual embeddings.
\newblock arXiv:1607.04579, 2016.

\bibitem[Dann et~al.(2014)Dann, Neumann, and Peters]{dann2014policy}
Dann, Christoph, Neumann, Gerhard, and Peters, Jan.
\newblock Policy evaluation with temporal differences: a survey and comparison.
\newblock \emph{Journal of Machine Learning Research}, 15\penalty0
  (1):\penalty0 809--883, 2014.

\bibitem[Defazio et~al.(2014)Defazio, Bach, and
  Lacoste-Julien]{defazio2014saga}
Defazio, Aaron, Bach, Francis, and Lacoste-Julien, Simon.
\newblock {SAGA}: A fast incremental gradient method with support for
  non-strongly convex composite objectives.
\newblock In \emph{Advances in Neural Information Processing Systems}, pp.\
  1646--1654, 2014.

\bibitem[Geramifard et~al.(2007)Geramifard, Bowling, Zinkevich, and
  Sutton]{Geramifard07Ilstd}
Geramifard, Alborz, Bowling, Michael~H., Zinkevich, Martin, and Sutton,
  Richard~S.
\newblock {iLSTD}: Eligibility traces and convergence analysis.
\newblock In \emph{Advances in Neural Information Processing Systems 19}, pp.\
  441--448, 2007.

\bibitem[Gohberg et~al.(2006)Gohberg, Lancaster, and
  Rodman]{gohberg2006indefinite}
Gohberg, Israel, Lancaster, Peter, and Rodman, Leiba.
\newblock \emph{Indefinite linear algebra and applications}.
\newblock Springer Science \& Business Media, 2006.

\bibitem[Johnson \& Zhang(2013)Johnson and Zhang]{johnson2013accelerating}
Johnson, Rie and Zhang, Tong.
\newblock Accelerating stochastic gradient descent using predictive variance
  reduction.
\newblock In \emph{Advances in Neural Information Processing Systems}, pp.\
  315--323, 2013.

\bibitem[Kearns \& Singh(2000)Kearns and Singh]{Kearns00Bias}
Kearns, Michael~J. and Singh, Satinder~P.
\newblock ``{Bias}-variance'' error bounds for temporal difference updates.
\newblock In \emph{Proceedings of the Thirteenth Annual Conference on
  Computational Learning Theory (COLT-00)}, pp.\  142--147, 2000.

\bibitem[Korda \& Prashanth(2015)Korda and Prashanth]{Korda15Td}
Korda, Nathaniel and Prashanth, L.A.
\newblock On {TD}(0) with function approximation: Concentration bounds and a
  centered variant with exponential convergence.
\newblock In \emph{Proceedings of the Thirty-Second International Conference on
  Machine Learning (ICML-15)}, pp.\  626--634, 2015.

\bibitem[Lagoudakis \& Parr(2003)Lagoudakis and Parr]{lagoudakis2003least}
Lagoudakis, Michail~G and Parr, Ronald.
\newblock Least-squares policy iteration.
\newblock \emph{Journal of Machine Learning Research}, 4\penalty0
  (Dec):\penalty0 1107--1149, 2003.

\bibitem[Lange et~al.(2011)Lange, Gabel, and Riedmiller]{Lange11Batch}
Lange, Sascha, Gabel, Thomas, and Riedmiller, Martin.
\newblock Batch reinforcement learning.
\newblock In Wiering, Marco and {van Otterlo}, Martijn (eds.),
  \emph{Reinforcement Learning: State of the Art}, pp.\  45--73. Springer
  Verlag, 2011.

\bibitem[Lazaric et~al.(2010)Lazaric, Ghavamzadeh, and Munos]{Lazaric10Finite}
Lazaric, Alessandro, Ghavamzadeh, Mohammad, and Munos, R\'{e}mi.
\newblock Finite-sample analysis of {LSTD}.
\newblock In \emph{Proceedings of the Twenty-Seventh International Conference
  on Machine Learning}, pp.\  615--622, 2010.

\bibitem[Lian et~al.(2017)Lian, Wang, and Liu]{lian2016finite}
Lian, Xiangru, Wang, Mengdi, and Liu, Ji.
\newblock Finite-sum composition optimization via variance reduced gradient
  descent.
\newblock In \emph{Proceedings of Artificial Intelligence and Statistics
  Conference (AISTATS)}, pp.\  1159--1167, 2017.

\bibitem[Liesen \& Parlett(2008)Liesen and Parlett]{liesen2008nonsymmetric}
Liesen, J{\"o}rg and Parlett, Beresford~N.
\newblock On nonsymmetric saddle point matrices that allow conjugate gradient
  iterations.
\newblock \emph{Numerische Mathematik}, 108\penalty0 (4):\penalty0 605--624,
  2008.

\bibitem[Lin et~al.(2015)Lin, Mairal, and
  Harchaoui]{LinMairalHarchaoui2015catalyst}
Lin, Hongzhou, Mairal, Julien, and Harchaoui, Zaid.
\newblock A universal catalyst for first-order optimization.
\newblock In \emph{Advances in Neural Information Processing Systems (NIPS)
  28}, pp.\  3384--3392, 2015.

\bibitem[Lin(1992)]{Lin92Self}
Lin, Long-Ji.
\newblock Self-improving reactive agents based on reinforcement learning,
  planning and teaching.
\newblock \emph{Machine Learning}, 8\penalty0 (3--4):\penalty0 293--321, 1992.

\bibitem[Liu et~al.(2015)Liu, Liu, Ghavamzadeh, Mahadevan, and
  Petrik]{liu2015finite}
Liu, Bo, Liu, Ji, Ghavamzadeh, Mohammad, Mahadevan, Sridhar, and Petrik, Marek.
\newblock Finite-sample analysis of proximal gradient {TD} algorithms.
\newblock In \emph{Proc. The 31st Conf. Uncertainty in Artificial Intelligence,
  Amsterdam, Netherlands}, 2015.

\bibitem[Nedi\'c \& Bertsekas(2003)Nedi\'c and Bertsekas]{NedicBertsekas2003}
Nedi\'c, A. and Bertsekas, Dimitri~P.
\newblock Least squares policy evaluation algorithms with linear function
  approximation.
\newblock \emph{Discrete Event Dynamics Systems: Theory and Applications},
  13\penalty0 (1):\penalty0 79--110, 2003.

\bibitem[Prashanth et~al.(2014)Prashanth, Korda, and Munos]{prashanth2014fast}
Prashanth, LA, Korda, Nathaniel, and Munos, R{\'e}mi.
\newblock Fast {LSTD} using stochastic approximation: Finite time analysis and
  application to traffic control.
\newblock In \emph{Joint European Conference on Machine Learning and Knowledge
  Discovery in Databases}, pp.\  66--81. Springer, 2014.

\bibitem[Precup et~al.(2001)Precup, Sutton, and Dasgupta]{Precup01Off}
Precup, Doina, Sutton, Richard~S., and Dasgupta, Sanjoy.
\newblock Off-policy temporal-difference learning with funtion approximation.
\newblock In \emph{Proceedings of the Eighteenth Conference on Machine Learning
  (ICML-01)}, pp.\  417--424, 2001.

\bibitem[Puterman(2005)]{puterman2005markov}
Puterman, Martin~L.
\newblock \emph{Markov Decision Processes: Discrete Stochastic Dynamic
  Programming}.
\newblock John Wiley \& Sons, 2005.

\bibitem[Rockafellar(1970)]{Rockafellar70}
Rockafellar, R.~Tyrrell.
\newblock \emph{Convex Analysis}.
\newblock Princeton University Press, 1970.

\bibitem[Shen et~al.(2008)Shen, Huang, and Cheng]{shen2008condition}
Shen, Shu-Qian, Huang, Ting-Zhu, and Cheng, Guang-Hui.
\newblock A condition for the nonsymmetric saddle point matrix being
  diagonalizable and having real and positive eigenvalues.
\newblock \emph{Journal of Computational and Applied Mathematics}, 220\penalty0
  (1):\penalty0 8--12, 2008.

\bibitem[Singh \& Sutton(1996)Singh and Sutton]{Singh96Reinforcement}
Singh, Satinder~P. and Sutton, Richard~S.
\newblock Reinforcement learning with replacing eligibility traces.
\newblock \emph{Machine Learning}, 22\penalty0 (1--3):\penalty0 123--158, 1996.

\bibitem[Sutton \& Barto(1998)Sutton and Barto]{sutton1998reinforcement}
Sutton, Richard~S and Barto, Andrew~G.
\newblock \emph{Reinforcement Learning: An Introduction}.
\newblock MIT Press, Cambridge, MA, 1998.

\bibitem[Sutton et~al.(2009{\natexlab{a}})Sutton, Maei, and
  Szepesv{\'a}ri]{sutton2009convergent}
Sutton, Richard~S, Maei, Hamid~R, and Szepesv{\'a}ri, Csaba.
\newblock A convergent $ o (n) $ temporal-difference algorithm for off-policy
  learning with linear function approximation.
\newblock In \emph{Advances in neural information processing systems}, pp.\
  1609--1616, 2009{\natexlab{a}}.

\bibitem[Sutton et~al.(2009{\natexlab{b}})Sutton, Maei, Precup, Bhatnagar,
  Silver, Szepesv{\'a}ri, and Wiewiora]{sutton2009fast}
Sutton, Richard~S, Maei, Hamid~Reza, Precup, Doina, Bhatnagar, Shalabh, Silver,
  David, Szepesv{\'a}ri, Csaba, and Wiewiora, Eric.
\newblock Fast gradient-descent methods for temporal-difference learning with
  linear function approximation.
\newblock In \emph{Proceedings of the 26th Annual International Conference on
  Machine Learning}, pp.\  993--1000. ACM, 2009{\natexlab{b}}.

\bibitem[Tsitsiklis \& {Van Roy}(1997)Tsitsiklis and {Van
  Roy}]{Tsitsiklis97Analysis}
Tsitsiklis, John~N. and {Van Roy}, Benjamin.
\newblock An analysis of temporal-difference learning with function
  approximation.
\newblock \emph{IEEE Transactions on Automatic Control}, 42:\penalty0 674--690,
  1997.

\bibitem[Valcarcel~Macua et~al.(2015)Valcarcel~Macua, Chen, Zazo, and
  Sayed]{valcarcel2015distributed}
Valcarcel~Macua, Sergio, Chen, Jianshu, Zazo, Santiago, and Sayed, Ali~H.
\newblock Distributed policy evaluation under multiple behavior strategies.
\newblock \emph{Automatic Control, IEEE Transactions on}, 60\penalty0
  (5):\penalty0 1260--1274, 2015.

\bibitem[Wang et~al.(2016)Wang, Liu, and Fang]{WangLiuFang2016NIPS}
Wang, Mengdi, Liu, Ji, and Fang, Ethan.
\newblock Accelerating stochastic composition optimization.
\newblock In \emph{Advances in Neural Information Processing Systems (NIPS)
  29}, pp.\  1714--1722, 2016.

\bibitem[Wasserman(2013)]{wasserman2013all}
Wasserman, Larry.
\newblock \emph{All of Statistics: A Concise Course in Statistical Inference}.
\newblock Springer Science \& Business Media, 2013.

\end{thebibliography}
\bibliographystyle{icml2017}

\clearpage

\appendix

\section{Eigen-analysis of $\mat{G}$}
\label{sec:qp_analysis_G}

In this section, we give a thorough analysis of the spectral properties of the matrix
	\begin{align}
		G
			&=
				\begin{bmatrix}
					\rho I & - \beta^{1/2} \widehat{A}^T \\
					\beta^{1/2} \widehat{A} & \beta \widehat{C}
				\end{bmatrix},
		\label{Equ:Appendix:G_def}
	\end{align}
which is critical in analyzing the convergence of the PDBG, SAGA and SVRG 
algorithms for policy evaluation. 
Here $\beta=\sigma_w/\sigma_{\theta}$ is the ratio between the dual and primal
step sizes in these algorithms.
For convenience, we use the following notation:
	\begin{align}
		L
			&\triangleq
				\lambda_{\max}(\widehat{A}^T \widehat{C}^{-1} \widehat{A}),
				\nn\\
		\mu 
			&\triangleq
				\lambda_{\min}(\widehat{A}^T \widehat{C}^{-1} \widehat{A}) .
				\nn
	\end{align}
Under Assumption~\ref{Asmp:nonsingular}, they are well defined and
we have $L\geq \mu>0$.

\subsection{Diagonalizability of $G$}
\label{Appendix:DiagonalizabilityOfG}

First, we examine the condition of $\beta$ that ensures the diagonalizability of the matrix $G$. We cite the following result from \cite{shen2008condition}.

\begin{lem}
Consider the matrix~$\mathcal{A}$ defined as
		\begin{align}
			\mathcal{A}
				&=
					\begin{bmatrix}
						A	&	-B^\top \\
						B	&	C
					\end{bmatrix},
			\label{Equ:Lemma:DiagPaper:calA_def}
		\end{align}
where $A \succeq 0$, $C \succ 0$, and $B$ is full rank.  
Let $\tau = \lambda_{\min}(C)$, $\delta = \lambda_{\max}(A)$ and
$\sigma=\lambda_{\max}(B^\top C^{-1} B)$. 
    If $\tau > \delta + 2 \sqrt{\tau \sigma}$
    holds, then $\mathcal{A}$ is diagonalizable with all its eigenvalues real
    and positive.  
\end{lem}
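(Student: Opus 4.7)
The plan is to establish the three conclusions — real eigenvalues, positivity, and diagonalizability of $\mathcal{A}$ — by analyzing the block eigenvalue equation
\[
A u - B^\top v = \lambda u, \qquad B u + C v = \lambda v
\]
for an eigenvector $(u,v)^\top \in \mathbb{C}^{n_1+n_2}$ normalized so $\|u\|^2 + \|v\|^2 = 1$. Multiplying the first equation by $u^*$, the second by $v^*$, and using $u^* B^\top v = \overline{v^* B u}$, addition and subtraction produce the pair of scalar identities
\[
\lambda = u^* A u + v^* C v + 2i\,\mathrm{Im}(v^* B u),
\]
\[
\lambda(\|u\|^2 - \|v\|^2) = u^* A u - v^* C v - 2\,\mathrm{Re}(v^* B u) \in \mathbb{R}.
\]
Hence $\mathrm{Re}(\lambda) = u^* A u + v^* C v \geq 0$ and $\mathrm{Im}(\lambda)(\|u\|^2 - \|v\|^2) = 0$.

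For realness, the easy case $\|u\| \neq \|v\|$ yields $\mathrm{Im}(\lambda) = 0$ directly. The critical case is $\|u\|^2 = \|v\|^2 = 1/2$. Here I argue by contradiction assuming $\beta := \mathrm{Im}(\lambda) \neq 0$. Since $\lambda \notin \mathrm{spec}(C)$ for any nonreal $\lambda$, I eliminate $v = -(C - \lambda I)^{-1} B u$ and reduce to $(A - \lambda I + B^\top(C - \lambda I)^{-1} B) u = 0$. Taking the inner product with $u$ and extracting the imaginary part via the spectral decomposition $C = \sum_i \lambda_i p_i p_i^\top$ yields the key identity
\[
\|u\|^2 = \sum_i \frac{|p_i^\top B u|^2}{(\lambda_i - \alpha)^2 + \beta^2}, \qquad \alpha := \mathrm{Re}(\lambda).
\]
Combining $(\lambda_i - \alpha)^2 \geq \lambda_i^2 (\tau - \alpha)^2/\tau^2$ (valid for $\alpha < \tau \leq \lambda_i$) with the estimate $u^\top B^\top C^{-2} B u \leq (\sigma/\tau)\|u\|^2$ (which follows from $B^\top C^{-1} B \preceq \sigma I$ and $C \succeq \tau I$) gives $(\tau - \alpha)^2 \leq \tau\sigma$. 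A parallel reduction of the $u$-block (eliminating $u$ in favor of $v$, which is licit because the hypothesis forces $\alpha > \delta$) yields a complementary constraint on $\alpha$ in terms of $\delta$ and $\sigma$; together with $\alpha = u^* A u + v^* C v \in [\tau/2, (\delta + \lambda_{\max}(C))/2]$ and the hypothesis $\tau > \delta + 2\sqrt{\tau\sigma}$, this closes the contradiction and forces $\beta = 0$.

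Positivity is then immediate: $\lambda = u^\top A u + v^\top C v \geq 0$, and $\lambda = 0$ would force $v = 0$ (since $C \succ 0$), hence $Bu = 0$, and hence $u = 0$ by full rank of $B$, contradicting nontriviality. For diagonalizability, the central observation is that $\mathcal{A}^\top(u,-v) = \lambda(u,-v)$ whenever $\mathcal{A}(u,v) = \lambda(u,v)$, so left eigenvectors are obtained from right eigenvectors by negating the second block. By the Fredholm alternative, the Jordan-chain equation $(\mathcal{A} - \lambda I)(u',v') = (u,v)$ is solvable iff $(u,v) \perp (u,-v)$, i.e., iff $\|u\|^2 = \|v\|^2$. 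Thus a nontrivial Jordan block at $\lambda$ requires an eigenvector with equal-norm blocks. Reapplying the spectral identity of the previous paragraph in the case $\beta = 0$ — which yields $\lambda \geq \tau - \sqrt{\tau\sigma}$ from the $v$-equation and a matching upper bound $(\lambda - \delta)^2 \leq 2\sigma\lambda$ from the $u$-equation (via $\|B^\top v\|^2 \leq \sigma v^\top C v \leq \sigma\lambda$) — rules this out under the spectral gap condition. I expect the equal-norm spectral estimate, reused in both the realness argument and the Jordan-block exclusion, to be the main technical obstacle; it is precisely where the hypothesis $\tau > \delta + 2\sqrt{\tau\sigma}$ is exploited at full sharpness.
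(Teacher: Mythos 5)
First, a point of reference: the paper does not prove this lemma at all --- it is imported verbatim from \citet{shen2008condition} --- so there is no in-paper proof to compare against, and your argument has to stand on its own. Your structural skeleton is sound and matches the standard route for such results: the sum/difference identities $\lambda = u^*Au + v^*Cv + 2i\,\mathrm{Im}(v^*Bu)$ and $\lambda(\|u\|^2-\|v\|^2)\in\mathbb{R}$, the reduction of both realness and diagonalizability to excluding eigenvectors with $\|u\|=\|v\|$, and the observation that $(u,-v)$ is a left eigenvector (so a defective $\lambda$ forces an equal-norm eigenvector via Fredholm) are all correct. The gap is in the one place you flag as the technical crux: the equal-norm exclusion. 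The two resolvent-based estimates you propose --- $\alpha\ge\tau-\sqrt{\tau\sigma}$ from eliminating $v$, and $(\alpha-\delta)^2\le 2\sigma\alpha$ from eliminating $u$ (using $\|B^\top v\|^2\le\sigma\, v^*Cv\le\sigma\alpha$) --- are individually valid but do \emph{not} contradict each other under $\tau>\delta+2\sqrt{\tau\sigma}$. Concretely, take $\sigma=1$, $\delta=100$, $\tau=123$: the hypothesis holds ($123>100+2\sqrt{123}\approx 122.2$), yet the first bound gives $\alpha\ge 123-\sqrt{123}\approx 111.9$ while the second allows any $\alpha\le 101+\sqrt{201}\approx 115.2$, so $\alpha=113$ survives both. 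The same pair of bounds is what you reuse to exclude Jordan blocks, so that step fails for the same reason. (A smaller issue: your claim that the hypothesis forces $\alpha>\delta$ is not immediate from $\alpha\ge\tau/2$; it only follows \emph{after} the first bound is in hand.)

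The repair is to abandon the resolvent route and bound the coupling term directly, which uses the hypothesis at exactly its sharp form. With $\|u\|^2=\|v\|^2=\tfrac12$, set $a=u^*Au\in[0,\delta/2]$, $c=v^*Cv\ge\tau/2$, $z=v^*Bu$; your two identities give $\mathrm{Re}(z)=(a-c)/2$ and $\mathrm{Im}(\lambda)=2\,\mathrm{Im}(z)$. Cauchy--Schwarz in the $C$-inner product yields $|z|^2=\bigl|(C^{1/2}v)^*(C^{-1/2}Bu)\bigr|^2\le (v^*Cv)\,(u^*B^\top C^{-1}Bu)\le \sigma c\,\|u\|^2=\sigma c/2$, hence $\mathrm{Im}(z)^2\le \sigma c/2-(c-a)^2/4\le \sigma c/2-(c-\delta/2)^2/4$. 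The hypothesis $\tau>\delta+2\sqrt{\tau\sigma}$ is equivalent to $\sqrt{\tau}>\sqrt{\sigma}+\sqrt{\sigma+\delta}$, i.e.\ to $\tau/2$ exceeding the larger root $\delta/2+\sigma+\sqrt{\sigma^2+\sigma\delta}$ of $(c-\delta/2)^2=2\sigma c$, so $(c-\delta/2)^2>2\sigma c$ for every admissible $c\ge\tau/2$ and therefore $\mathrm{Im}(z)^2<0$ --- the desired contradiction for nonreal $\lambda$. For a real eigenvector with $\|u\|=\|v\|$, $z$ is real and equals $(a-c)/2$, and the same inequality $(c-a)^2\le 2\sigma c<(c-\delta/2)^2\le(c-a)^2$ is again contradictory, which completes your Jordan-block exclusion. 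With this substitution your proof goes through; everything else (nonnegativity of $\mathrm{Re}\,\lambda$, strict positivity via $v\ne 0$ and injectivity of $B$) is fine.
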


Applying this lemma to the matrix $G$ in \eqref{Equ:Appendix:G_def}, we have
	\begin{align}
		\tau
				&=
						\lambda_{\min}(\beta \widehat{C})
				=
						\beta \lambda_{\min}(\widehat{C}),
						\nn\\
		\delta
                &=      \lambda_{\max}(\rho I) =	\rho,
						\nn\\
		\sigma
				&=
						\lambda_{\max}
						\bigl(
							\beta^{1/2} \widehat{A}^\top 
							(\beta\widehat{C})^{-1} 
							\beta^{1/2} \widehat{A}
						\bigr) 
				=
						\lambda_{\max}
						(\widehat{A}^\top \widehat{C}^{-1} \widehat{A}).
						\nn
	\end{align}
The condition $\tau > \delta + 2 \sqrt{\tau \sigma}$ translates into
	\begin{align}
		\beta \lambda_{\min}(\widehat{C}) 
				>
					\rho
					+ 
					2 
					\sqrt{
						\beta \lambda_{\min}(\widehat{C})
						\lambda_{\max}
						(\widehat{A}^\top \widehat{C}^{-1} \widehat{A})
					},
					\nn
	\end{align}
which can be solved as
	\begin{align}
		\sqrt{\beta}
				>
						\frac{
							\sqrt{
								\lambda_{\max}
								(\! \widehat{A}^\top \widehat{C}^{-1} \widehat{A} \!)
							}
							\!+\!							
							\sqrt{\rho\!+\!
								\lambda_{\max}
								(\! \widehat{A}^\top \widehat{C}^{-1} \widehat{A} \!)
							}
						}
						{
							\sqrt{ \lambda_{\min}(\widehat{C})} 
						} .
						\nn
	\end{align}
In the rest of our discussion, we choose $\beta$ to be
	\begin{align}
		\beta
			&=
					\frac{
							8\Bigl(\rho + 
								\lambda_{\max}
								\bigl(\widehat{A}^\top \widehat{C}^{-1} \widehat{A}\bigr)
                            \Bigr)
						}
						{ 
							\lambda_{\min}(\widehat{C}) 
						} 
            = \frac{8(\rho+L)}{\lambda_{\min}(\widehat{C})}, 
		\label{Equ:Appendix:beta_Value}
	\end{align}
which satisfies the inequality above.

\subsection{Analysis of eigenvectors}

If the matrix~$G$ is diagonalizable, then it can be written as
	\begin{align*}
		\mat{G} = \mat{Q}\mat{\Lambda}\mat{Q}^{-1},
	\end{align*} 
where $\Lambda$ is a diagonal matrix whose diagonal entries are the 
eigenvalues of~$G$, and~$Q$ consists of it eigenvectors (each with unit norm)
as columns.
Our goal here is to bound $\kappa(Q)$, the condition number of the matrix $Q$.
Our analysis is inspired by \citet{liesen2008nonsymmetric}. 
The core is the following fundamental result from linear algebra.

\begin{thm}[Theorem 5.1.1 of~\citet{gohberg2006indefinite}] \label{thm:eigenvec_ortho}
	Suppose $\mat{G}$ is diagonalizable. 
	If $\mat{H}$ is a symmetric positive definite matrix and $\mat{H}\mat{G}$ is symmetric, then there exist a complete set of eigenvectors of $\mat{G}$,
    such that they are orthonormal with respect to the inner product induced by $\mat{H}$:
	\begin{align} 
		\mat{Q}^\top\mat{H}\mat{Q} = \mat{I}.
	\end{align}
\end{thm}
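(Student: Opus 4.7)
The plan is to exploit the fact that the hypothesis $HG = G^\top H$ (which follows from symmetry of $HG$) means $G$ is self-adjoint with respect to the inner product $\langle x, y \rangle_H \triangleq x^\top H y$. The cleanest route is to pass to a standard symmetric matrix via the symmetric square root of $H$, apply the usual spectral theorem there, and then pull the orthonormal eigenframe back to $G$.

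First I would introduce $H^{1/2}$, the unique symmetric positive-definite square root of $H$, and define the similarity transform
\begin{equation*}
\widetilde{G} \triangleq H^{1/2} G H^{-1/2} .
\end{equation*}
Since similarity preserves eigenvalues and diagonalizability, $\widetilde{G}$ has the same spectrum as $G$ and remains diagonalizable. The key observation is that $\widetilde{G}$ is \emph{symmetric}: using $HG = (HG)^\top = G^\top H$, i.e., $G^\top = HGH^{-1}$, one computes
\begin{equation*}
\widetilde{G}^\top = H^{-1/2} G^\top H^{1/2} = H^{-1/2} (H G H^{-1}) H^{1/2} = H^{1/2} G H^{-1/2} = \widetilde{G} .
\end{equation*}

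Next I would invoke the classical spectral theorem for real symmetric matrices to obtain an orthogonal matrix $U$ (so $U^\top U = I$) and a real diagonal $\Lambda$ with $\widetilde{G} = U \Lambda U^\top$. Undoing the similarity gives
\begin{equation*}
G = H^{-1/2} \widetilde{G} H^{1/2} = (H^{-1/2} U)\, \Lambda\, (H^{-1/2} U)^{-1} ,
\end{equation*}
so setting $Q \triangleq H^{-1/2} U$ produces a full set of eigenvectors of $G$ (they span $\mathbb{R}^{2d}$ because $H^{-1/2}$ is invertible and $U$ is orthogonal). Finally I would verify the $H$-orthonormality directly:
\begin{equation*}
Q^\top H Q = U^\top H^{-1/2}\, H\, H^{-1/2} U = U^\top U = I ,
\end{equation*}
which is the desired conclusion.

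I do not expect a serious obstacle here: the argument is a textbook change-of-basis that turns an $H$-self-adjoint operator into a symmetric one, and the only subtlety is making sure $H^{1/2}$ is well-defined and that the resulting eigenvectors indeed form a complete basis — both of which follow immediately from positive definiteness of $H$ and from the spectral theorem applied to $\widetilde{G}$. The diagonalizability hypothesis on $G$ in the theorem statement is actually redundant given the other hypotheses (it is forced by the symmetry of $\widetilde{G}$), but keeping it in the statement simplifies the citation and does not affect the proof.
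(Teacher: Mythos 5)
Your proof is correct. Note that the paper does not prove this statement at all---it is quoted verbatim as Theorem~5.1.1 of \citet{gohberg2006indefinite} and used as a black box---so there is no in-paper argument to compare against. Your square-root similarity argument ($\widetilde{G}=H^{1/2}GH^{-1/2}$ is symmetric because $HG=G^\top H$, then apply the spectral theorem and set $Q=H^{-1/2}U$) is the standard and complete proof in the positive-definite case, and your side remark is also right: diagonalizability of $G$ is implied by the other hypotheses here. The only caveat worth recording is that the cited theorem in \citet{gohberg2006indefinite} is stated for general invertible (possibly indefinite) $H$, where the conclusion is a canonical form with a sign characteristic rather than $Q^\top HQ=I$; your argument proves exactly the specialization the paper actually needs, since the $H$ constructed in~\eqref{eqn:H-matrix} is positive definite under the stated parameter choices.
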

If $H$ satisfies the conditions in Theorem~\ref{thm:eigenvec_ortho}, 
then we have $\mat{H} = \mat{Q}^{-\top}\mat{Q}^{-1}$, 
which implies $\kappa(H)=\kappa^2(Q)$.
Therefore, in order to bound $\kappa(Q)$, we only need to find such an $\mat{H}$ and analyze its conditioning.
To this end, we consider the matrix of the following form:
\begin{align}
	H 
		= 	
			\begin{bmatrix}
				(\delta - \rho) \mat{I} 	& 	\sqrt{\beta}\widehat{\mat{A}}^\top \\
				\sqrt{\beta}\widehat{\mat{A}} 	& 	\beta\widehat{\mat{C}} - \delta I
			\end{bmatrix}.
        \label{eqn:H-matrix}
\end{align}
It is straightforward to check that $HG$ is a symmetric matrix. The following lemma  states the conditions for $\mat{H}$ being positive definite.
\begin{lem}
	If $\delta - \rho > 0$ and $\beta\widehat{\mat{C}} - \delta\mat{I} - \frac{\beta}{\delta - \rho}\widehat{\mat{A}}\widehat{\mat{A}}^\top \succ \mat{0}$, then $\mat{H}$ is positive definite.
\end{lem}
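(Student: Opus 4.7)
The plan is to prove positive definiteness of $H$ via the standard Schur complement criterion for block matrices. Recall that if $A \succ 0$, then the symmetric block matrix
\[
\begin{bmatrix} A & B^\top \\ B & D \end{bmatrix}
\]
is positive definite if and only if the Schur complement $D - B A^{-1} B^\top$ is positive definite. This is exactly the structure of $H$ in \eqref{eqn:H-matrix}, so the proof reduces to identifying the two required conditions and matching them to the hypotheses of the lemma.

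First, I would set $A = (\delta - \rho) I$, $B = \sqrt{\beta}\widehat{A}$, and $D = \beta\widehat{C} - \delta I$, and note that the first hypothesis, $\delta - \rho > 0$, immediately gives $A = (\delta-\rho) I \succ 0$, so $A^{-1} = \frac{1}{\delta-\rho} I$ exists and is positive definite. Next, I would compute the Schur complement explicitly:
\[
D - B A^{-1} B^\top
= \beta\widehat{C} - \delta I - \sqrt{\beta}\widehat{A}\cdot\frac{1}{\delta-\rho} I\cdot\sqrt{\beta}\widehat{A}^\top
= \beta\widehat{C} - \delta I - \frac{\beta}{\delta-\rho}\widehat{A}\widehat{A}^\top ,
\]
which is precisely the matrix assumed to be positive definite in the second hypothesis. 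Invoking the Schur complement characterization then yields $H \succ 0$.

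There is no real obstacle here; the proof is a direct one-line application of the Schur complement once the blocks are identified. The only minor care needed is to verify symmetry of $H$ (immediate from its definition, since the off-diagonal blocks are transposes of each other) so that the Schur complement criterion applies, and to record the implicit assumption $\beta > 0$ (which holds by construction in \eqref{Equ:Appendix:beta_Value}) so that $B = \sqrt{\beta}\widehat{A}$ is well defined. If a self-contained argument is preferred over citing the Schur complement criterion, one can alternatively exhibit the congruence
\[
\begin{bmatrix} I & 0 \\ -BA^{-1} & I \end{bmatrix}
H
\begin{bmatrix} I & -A^{-1}B^\top \\ 0 & I \end{bmatrix}
= \begin{bmatrix} A & 0 \\ 0 & D - BA^{-1}B^\top \end{bmatrix},
\]
and observe that the right-hand side is block-diagonal with both blocks positive definite, whence the congruent matrix $H$ is positive definite as well.
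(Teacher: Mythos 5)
Your proof is correct and follows essentially the same route as the paper: the paper also exhibits the block factorization $H = L\,\mathrm{diag}\bigl((\delta-\rho)I,\;S\bigr)L^\top$ with $S = \beta\widehat{C} - \delta I - \frac{\beta}{\delta-\rho}\widehat{A}\widehat{A}^\top$ and concludes by congruence, which is exactly the Schur-complement criterion you invoke (and spell out explicitly as your alternative). No gaps.
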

\begin{proof}
The matrix~$H$ in~\eqref{eqn:H-matrix} admits the following
Schur decomposition:
\begin{align*}
	\mat{H} 
		&= 
			\begin{bmatrix}
				\mat{I} & \mat{0} \\
				\frac{\sqrt{\beta}}{\delta-\rho}\widehat{\mat{A}}  & \mat{I}
			\end{bmatrix}
			\begin{bmatrix}
				(\delta-\rho)I & \\
				& \mat{S}
			\end{bmatrix}
			\begin{bmatrix}
				\mat{I} & \frac{\sqrt{\beta}}{\delta-\rho}\widehat{\mat{A}}^\top \\
				\mat{0} & \mat{I}
			\end{bmatrix},
\end{align*}
where  $\mat{S} = \beta\widehat{\mat{C}} - \delta\mat{I} - \frac{\beta}{\delta - \rho}\widehat{\mat{A}}\widehat{\mat{A}}^\top$.
Thus $\mat{H}$ is congruence to the block diagonal matrix in the middle, 
which is positive definite under the specified conditions. 
Therefore, the matrix $\mat{H}$ is positive definite under the same conditions.
\end{proof}

In addition to the choice of $\beta$ in \eqref{Equ:Appendix:beta_Value}, we choose $\delta$ to be
	\begin{align}
		\delta
			&=
					4(\rho + L) .
		\label{Equ:Appendix:delta_value}
	\end{align}
It is not hard to verify that this choice ensures $\delta-\rho>0$ and
$\beta\widehat{\mat{C}} - \delta\mat{I} - \frac{\beta}{\delta -
\rho}\widehat{\mat{A}}\widehat{\mat{A}}^\top \succ \mat{0}$ so that $H$ is
positive definite. We now derive an upper bound on the condition number of $H$.
Let $\lambda$ be an eigenvalue of $H$ and $[x^T y^T]^T$ be its associated
eigenvector, where $\|x\|^2+\|y\|^2 > 0$. Then it holds that
	\begin{align}
		(\delta-\rho) x + \sqrt{\beta}\widehat{A}^T y  &= \lambda x,
	\label{Equ:Appendix:EigenH1}
		\\
		\sqrt{\beta}\widehat{A} x + (\beta\widehat{C} - \delta I ) y &= \lambda y.
	\label{Equ:Appendix:EigenH2}
	\end{align}
From \eqref{Equ:Appendix:EigenH1}, we have
	\begin{align}
		x
			&=
					\frac{\sqrt{\beta}}{\lambda-\delta+\rho} \widehat{A}^T y .
		\label{Equ:Appendix:EigenH1_interm}
	\end{align}
Note that $\lambda-\delta+\rho \neq 0$ because if $\lambda-\delta+\rho = 0$ we have $\widehat{A}^T y = 0$ so that $y=0$ since $\widehat{A}$ is full rank. With $y=0$ in \eqref{Equ:Appendix:EigenH2}, we will have $\widehat{A} x = 0$ so that $x=0$, which contradicts the assumption that $\|x\|^2+\|y\|^2 > 0$. 

Substituting \eqref{Equ:Appendix:EigenH1_interm} into \eqref{Equ:Appendix:EigenH2} and multiplying both sides with $y^T$, we obtain the following equation after some algebra
	\begin{align}
		\lambda^2 - p \lambda + q = 0,
		\label{Equ:Appendix:EigenH_QuadEquation}
	\end{align}
where
	\begin{align}
		p
			&\defeq
					\delta- \rho + \frac{y^T (\beta\widehat{C} - \delta I) y}{\|y\|^2},
					\nn\\
		q
			&\defeq
					(\delta-\rho) \frac{y^T (\beta\widehat{C} - \delta I) y}{\|y\|^2} 
					- 
					\beta\frac{y^T \widehat{A} \widehat{A}^T y}{\|y\|^2} .
					\nn
	\end{align}
We can verify that both $p$ and $q$ are positive with our choice of $\delta$ and $\beta$. The roots of the quadratic equation in~\eqref{Equ:Appendix:EigenH_QuadEquation} are given by
	\begin{align}
		\lambda
			&=
					\frac{p \pm \sqrt{p^2-4q}}{2} .
	\end{align}
Therefore, we can upper bound the largest eigenvalue as
	\begin{align}
		\lambda_{\max}(H)
			&\le
					\frac{p + \sqrt{p^2-4q}}{2}
          \nn\\
			& \le
					p=\delta- \rho - \delta + \beta \frac{y^T \widehat{C} y}{\|y\|^2}
					\nn\\
			&\le 
          -\rho + \beta \lambda_{\max}(\widehat{C})
					\nn\\
          &=		- \rho + \frac{8(\rho+L)}{\lambda_{\min}(\widehat{C})} \lambda_{\max}(\widehat{C})		
					\nn\\
			&\le
      8 (\rho+L) \kappa(\widehat{C}).
	\end{align}
Likewise, we can lower bound the smallest eigenvalue:
	\begin{align}
		\lambda_{\min}(H)
			&\ge
					\frac{p - \sqrt{p^2-4q}}{2}
			\ge
					\frac{p-p+2q/p}{2}
			=
					\frac{q}{p}
					\nn\\
			&=
					\frac{
						\beta
						\left(
							(\delta-\rho)
							\frac{y^T \widehat{C} y}{\|y\|^2}
							-
							\frac{y^T \widehat{A}\widehat{A}^Ty}{\|y\|^2}
						\right)
						-
						\delta (\delta - \rho)
					}
					{
						- \rho + \beta \frac{y^T \widehat{C} y}{\|y\|^2}
					}
					\nn\\
			&\overset{(a)}{\ge}
					\frac{
						\beta
						\left(
							(\delta-\rho)
							\frac{y^T \widehat{C} y}{\|y\|^2}
							-
							\frac{y^T \widehat{A}\widehat{A}^Ty}{\|y\|^2}
						\right)
						-
						\delta (\delta - \rho)
					}
					{
						\beta \frac{y^T \widehat{C} y}{\|y\|^2}
					}
					\nn\\
			&=
					\delta - \rho - \frac{y^T \widehat{A}\widehat{A}^T y}{y^T \widehat{C} y}
					-
					\frac{\delta (\delta-\rho)}{\beta}
					\cdot
					\frac{1}{\frac{y^T \widehat{C} y}{\|y\|^2}}
					\nn\\
			&\overset{(b)}{\ge}
					\delta - \rho - L
					-
          \frac{\delta (\delta-\rho)}{\beta \lambda_{\min}(\widehat{C})}
					\nn\\
			&\overset{(c)}{=}
					(\rho+L)
					\left(
						3 - \frac{3\rho+4L}{2(\rho+L)}
					\right)
					\nn\\
			&\ge
					\rho+L,	
	\end{align}
where step (a) uses the fact that both the numerator and denominator are
positive, step (b) uses the fact
\[
  L\triangleq\lambda_{\max}\Bigl(\widehat{A}^T\widehat{C}^{-1}\widehat{A}\Bigr)
  \geq  \frac{y^T\widehat{A}\widehat{A}^T y}{y^T \widehat{C}y},
\]
and step (c) substitutes the expressions of $\delta$ and $\beta$.  
Therefore, we can upper bound the condition number of $H$, 
and thus that of~$Q$, as follows:
	\begin{align}
		\kappa^2(Q)
			=
					\kappa(H)
			&\le
      \frac{8(\rho+L)\kappa(\widehat{C})}{\rho+L}
			=
					8\kappa(\widehat{C}) .
		\label{Equ:Appendix:CondNum_H}
	\end{align}

\subsection{Analysis of eigenvalues}
Suppose $\lambda$ is an eigenvalue of~$G$ and let 
$\left(\xi^\top, \eta^\top\right)^\top$ be its corresponding eigenvector.
By definition, we have 
\begin{align*}
	\mat{G}
	\begin{bmatrix}
	\xi \\
	\eta 
	\end{bmatrix} = \lambda \begin{bmatrix}
	\xi \\
	\eta 
	\end{bmatrix},
\end{align*} which is equivalent to the following two equations:
\begin{align*}
	\rho \xi -\sqrt{\beta}\widehat{\mat{A}}^\top \eta = \lambda \xi, \\
	\sqrt{\beta}\widehat{\mat{A}}\xi + \beta \widehat{\mat{C}}\eta = \lambda \eta.
\end{align*}
Solve $\xi$ in the first equation in terms of $\eta$, 
then plug into the second equation, we obtain:
\begin{align*}
	\lambda^2\eta - \lambda ( \rho \eta + \beta \widehat{\mat{C}}\eta) + \beta ( \widehat{\mat{A}}\widehat{\mat{A}}^\top\eta + \rho \widehat{C} \eta ) = 0.	
\end{align*}
Now left multiply $\eta^\top$, then divide by the $\norm{\eta}_2^2$, we have:
\begin{align*}
	\lambda^2 - p\lambda + q = 0.
\end{align*} 
where $p$ and $q$ are defined as
	\begin{align}
		p
			&\defeq
				\rho + \beta \frac{\eta^\top \widehat{C} \eta}{\| \eta \|^2},
				\nn\\
		q
			&\defeq
				\beta
				\left(
					\frac{\eta^T \widehat{A} \widehat{A}^\top \eta }{\| \eta \|^2}
					+
					\rho
					\frac{\eta^T \widehat{C} \eta}{\|\eta\|^2}
				\right).
	\end{align}
Therefore the eigenvalues of $\mat{G}$ satisfy:
\begin{align}
	\lambda = \frac{p \pm \sqrt{p^2 -4q}}{2}. \label{eqn:qp_eigenvalue_formula}
\end{align}
Recall that our choice of $\beta$ ensures that $G$ is diagonalizable and has positive real eigenvalues. Indeed, we can verify that the diagonalization condition guarantees $p^2 \ge 4q$ so that all eigenvalues are real and positive.
Now we can obtain upper and lower bounds based on~\eqref{eqn:qp_eigenvalue_formula}.
For upper bound, notice that 
\begin{align}
	\lambda_{\max}(G)
		&\le 
			p 
		\le 
    \rho + \beta \lambda_{\max}(\widehat{C})
			\nn\\
		&=
			\rho + \frac{8(\rho+L)}{\lambda_{\min}(\widehat{C}}\lambda_{\max}(\widehat{C})
			\nn\\
		&=
    \rho + 8 (\rho+L) \kappa(\widehat{C}) 
			\nn\\
		&\le
			9 \kappa(\widehat{C}) \bigl(\rho+L\bigr)
			\nn\\
		&=
    9 \kappa(\widehat{C}) \lambda_{\max}\bigl( \rho I + \widehat{A}^T \widehat{C}^{-1} \widehat{A}\bigr) .
			\label{eqn:qp_eigenvalue_upper}
\end{align}
For lower bound, notice that 
\begin{align}
	\lambda_{\min}(G)
		&\ge 
			\frac{p-\sqrt{p^2-4q}}{2} 
		\ge 
			\frac{p-p+2q/p}{2} 
		= 
			q/p 
			\nonumber \\
		&=
			\frac{
				\beta
				\Big(
					\frac{\eta^T \widehat{A}\widehat{A}^T \eta}{\eta^T \widehat{C} \eta}
					+ 
					\rho
				\Big)
			}
			{
				\rho 
				\frac{\|\eta\|^2}{\eta^T \widehat{C} \eta}
				+ 
				\beta
			}
			\nn\\
        &\overset{(a)}{\ge}
        \frac{\beta (\rho + \mu) }{\rho/\lambda_{\min}(\widehat{C}) + \beta}
		=
			\frac{\beta \lambda_{\min}(\widehat{C})(\rho+\mu)}{\rho + \beta \lambda_{\min}(\widehat{C})}
			\nn\\
        &\overset{(b)}{=}
			\frac{8(\rho+L)(\rho + \mu)}{\rho + 8(\rho+L)}
			\nn\\
		&\ge
			\frac{8}{9}
			(\rho+\mu)
			\nn\\
		&=
			\frac{8}{9}
			(\rho+\lambda_{\min}(\widehat{A}^T \widehat{C}^{-1} \widehat{A}))
			\nn\\
		&=
			\frac{8}{9}
			\lambda_{\min}(\rho I+\widehat{A}^T \widehat{C}^{-1} \widehat{A}),
	\label{Equ:Appendix:lambda_min_G_LB}
\end{align}
where the second inequality is by the concavity property of the square root function, step~(a) used the fact
\[
  \mu\triangleq\lambda_{\min}\Bigl(\widehat{A}^T\widehat{C}^{-1}\widehat{A}\Bigr)
  \leq  \frac{y^T\widehat{A}\widehat{A}^T y}{y^T \widehat{C}y},
\]
and step (b) substitutes the expressions of $\beta$.

Since ~$G$ is not a normal matrix, we cannot use their eigenvalue bounds
to bound its condition number $\kappa(G)$.

\section{Linear convergence of PDBG}
\label{sec:PDBG-analysis}

Recall the saddle-point problem we need to solve:
\[
  \min_\theta \max_w ~ \mathcal{L}(\theta,w) ,
\]
where the Lagrangian is defined as
\begin{align}
  \mathcal{L}(\theta,w) 
			&=  \frac{\rho}{2} \|\theta\|^2 - w^\top \widehat{\mat{A}} \theta 
					-\frac{1}{2}w^\top \widehat{\mat{C}} w + \widehat{b}^\top w.
	\label{eqn:qp_saddle}
\end{align}
Our assumption is that $\widehat{\mat{C}}$ is positive definite and $\widehat{A}$ has full rank. 
The optimal solution can be expressed as
\begin{align*}
	\theta_\star 
			&= 
				\left(\widehat{A}^\top \widehat{\mat{C}}^{-1}\widehat{A} 
				+ \rho I\right)^{-1}
                \widehat{\mat{A}}^\top \widehat{C}^{-1}	\widehat{b},
                \\
	w_\star 
			&= 
				\widehat{\mat{C}}^{-1}
				\left(
					\widehat{b} -
					\widehat{\mat{A}}^\top
					\theta_\star 
				\right) .
\end{align*}
The gradients of the Lagrangian with respect to~$\theta$ and~$w$, respectively,
are
\begin{align*}
  \nabla_\theta \mathcal{L}\left(\theta,w\right) 
	&= 
		 \rho\theta -\widehat{\mat{A}}^\top w \\
  \nabla_w \mathcal{L}\left(\theta, w\right) 
	&=  
		-\widehat{\mat{A}}\theta - \widehat{\mat{C}} w + \widehat{b}.
\end{align*}
The first-order optimality condition is obtained by setting them to zero,
which is satisfied by $(\theta_\star, w_\star)$:
	\begin{align}
			\begin{bmatrix}
				\rho I & -\widehat{A}^\top\\
				\widehat{\mat{A}} & \widehat{\mat{C}}
			\end{bmatrix}
			\begin{bmatrix}
				\theta_{\star} \\
				w_{\star}
			\end{bmatrix} 
			= 
			\begin{bmatrix}
				0\\
				\widehat{b}
			\end{bmatrix} .
			\label{eqn:qp_first_order_necessary}
	\end{align}

The PDBG method in Algorithm~\ref{algo:PDBG_saddle} takes the following
iteration:
\[
                    \begin{bmatrix}
                      \theta_{m+1}\\
                      w_{m+1}
					\end{bmatrix}
                    =
					\begin{bmatrix}
						\theta_m\\
						w_m
					\end{bmatrix}
					-
					\begin{bmatrix}
						\sigma_{\theta}	& 0\\
						0 & \sigma_{w}
					\end{bmatrix}
					B(\theta_m,w_m) ,
\]
where
\begin{align*}
B(\theta,w)
		&=
			\begin{bmatrix}
				\nabla_{\theta} L(\theta,w) \\
				-\nabla_{w} L(\theta,w)
			\end{bmatrix}
		=
			\begin{bmatrix}
				\rho I & -\widehat{A}^\top\\
				\widehat{\mat{A}} & \widehat{\mat{C}}
			\end{bmatrix}
			\begin{bmatrix}
				\theta \\
				w
			\end{bmatrix} 
			- 
			\begin{bmatrix}
				0\\
				\widehat{b}
			\end{bmatrix} .
\end{align*} 
Letting $\beta=\sigma_w/\sigma_{\theta}$, we have
	\begin{align*}
		\begin{bmatrix}
			\theta_{m+1} \\
			w_{m+1}
		\end{bmatrix} 
			&= 
				\begin{bmatrix}
					\theta_{m}\\
					w_m
				\end{bmatrix} 
				- \sigma_{\theta} 
				\left(
					\begin{bmatrix}
						\rho I & -\widehat{A}^\top\\
						\beta\widehat{\mat{A}} & \beta\widehat{\mat{C}}
					\end{bmatrix}
					\begin{bmatrix}
						\theta_{m} \\
						w_{m}
					\end{bmatrix} 
					\!\!-\!\!
					\begin{bmatrix}
						0\\
						\beta\widehat{b}
					\end{bmatrix} 
				\right) .
	\end{align*}
Subtracting both sides of the above recursion by $(\theta_{\star}, w_{\star})$ and using \eqref{eqn:qp_first_order_necessary}, we obtain
	\begin{align*}
		\begin{bmatrix}
			\theta_{m+1} - \theta_\star \\
			w_{m+1} \!-\! w_\star
		\end{bmatrix} 
			&= 
				\begin{bmatrix}
					\theta_{m} - \theta_\star\\
					w_m \!-\! w_\star
				\end{bmatrix} 
				\!\!-\!
				\sigma_\theta
				\begin{bmatrix}
					\rho I & - \widehat{A}^T \\
					\beta \widehat{A} & \beta \widehat{C}
				\end{bmatrix} \!\!
				\begin{bmatrix}
					\theta_{m} \!-\! \theta_\star\\
					w_m \!-\! w_\star
				\end{bmatrix} .
	\end{align*}

We analyze the convergence of the algorithms by examining the differences between the current parameters to the optimal solution. 
More specifically, we define a scaled residue vector
\begin{align}
	\Delta_m 
		&\triangleq 
			\begin{bmatrix}
				\theta_m - \theta_\star \\
				\frac{1}{\sqrt{\beta}}(w_m - w_\star) 
			\end{bmatrix},
			\label{eqn:current_optimal_difference}
\end{align}
which obeys the following iteration:
\begin{align}
\Delta_{m+1} &= \left(\mat{I}-\sigma_\theta\mat{G}\right)\Delta_{m}, 
\label{eqn:qp_gd_dynamics}
\end{align}
where $G$ is exactly the matrix defined in \eqref{Equ:Appendix:G_def}. 
As analyzed in Section~\ref{Appendix:DiagonalizabilityOfG}, if we choose
~$\beta$ sufficiently large, such as in~\eqref{Equ:Appendix:beta_Value},
then $G$ is diagonalizable with all its eigenvalues real and positive.
In this case, we let $Q$ be the matrix of eigenvectors in the eigenvalue
decomposition $\mat{G} = \mat{Q}\mat{\Lambda}\mat{Q}^{-1}$,
and use the potential function 
\begin{align*}
	P_m \triangleq \norm{\mat{Q}^{-1}\Delta_m}^2_2
\end{align*}
in our convergence analysis. 
We can bound the usual Euclidean distance by $P_m$ as
\begin{align*}
\| \theta_m - \theta_{\star} \|^2 + \| w_m - w_{\star} \|^2 
&\le	(1+\beta) \sigma_{\max}^2(Q) P_m.
\end{align*}
If we have linear convergence in $P_m$, 
then the extra factor $(1+\beta) \sigma_{\max}^2(Q)$ will appear inside 
a logarithmic term.

\textbf{Remark:} 
This potential function has an intrinsic geometric interpretation.
We can view column vectors of $\mat{Q}^{-1}$ a basis for the vector space,
which is \emph{not} orthogonal.
Our goal is to show that in this coordinate system, the distance to optimal solution shrinks at every iteration.

We proceed to bound the growth of $P_m$:
\begin{align}
	P_{m+1} &= \norm{\mat{Q}^{-1}\Delta_{m+1}}_2^2 \nonumber\\
	& = \norm{\mat{Q^{-1}\left(\mat{I} - \sigma_\theta\mat{G}\right)\Delta_{m}}}_2^2 \nonumber\\
	& = \norm{\mat{Q}^{-1}\left(\mat{Q}\mat{Q}^{-1} - \sigma_\theta\mat{Q}\mat{\Lambda}\mat{Q}^{-1}\right)\Delta_m}_2^2 \nonumber\\
	& = \norm{\left(\mat{I} - \sigma_\theta\mat{\Lambda}\right)\mat{Q}^{-1}\Delta_m}_2^2 \nonumber\\
	& \le \norm{\mat{I} - \sigma_\theta\mat{\Lambda}}_2^2 \norm{\mat{Q}^{-1}\Delta_m}_2^2 \nonumber\\
	& = \norm{\mat{I} - \sigma_\theta\mat{\Lambda}}_2^2 P_m 
	\label{eqn:qp_gd_potential_iterative}
\end{align}
The inequality above uses sub-multiplicity of spectral norm. 
We choose $\sigma_{\theta}$ to be 
\begin{align} \label{eqn:stepsize-theta}
	\sigma_\theta 
		= 
				\frac{1}{\lambda_{\max}\left(\mat{\Lambda}\right)}
		=		
				\frac{1}{\lambda_{\max}(G)},
\end{align}
Since all eigenvalues of $G$ are real and positive, we have
	\begin{align}
		\| I - \sigma_{\theta} \Lambda \|^2
			&=
				\left(1 - \frac{\lambda_{\min}(G)}{\lambda_{\max}(G)}\right)^2
				\nn\\
			&\le
				\left(
					1 
					-
					\frac{8}{81}					
					\cdot
					\frac{1}{\kappa(\widehat{C}) \kappa(\rho I + \widehat{A}^T \widehat{C}^{-1} \widehat{A})}
				\right)^2 ,
				\nn
	\end{align}
where we used the bounds on the eigenvalues $\lambda_{\max}(G)$ and
$\lambda_{\min}(G)$ in~\eqref{eqn:qp_eigenvalue_upper}
and~\eqref{Equ:Appendix:lambda_min_G_LB} respectively.
Therefore, we can achieve an $\epsilon$-close solution with 
\begin{align*}
	m 
		= 
			O\left( 
				\kappa(\widehat{C}) \kappa(\rho I + \widehat{A}^T \widehat{C}^{-1} \widehat{A}) 
				\log\left(\frac{P_0}{\epsilon}\right)
			\right)
\end{align*} 
iterations of the PDBG algorithm.

In order to minimize $\|I-\sigma_\theta\Lambda\|$, we can choose
\[
  \sigma_\theta = \frac{2}{\lambda_{\max}(G)+\lambda_{\min}(G)},
\]
which results in $\|I-\sigma_\theta\Lambda\|=1-2/(1+\kappa(\Lambda))$
instead of $1-1/\kappa(\Lambda)$.
The resulting complexity stays the same order.

The step sizes stated in Theorem~\ref{thm:pdbg} is obtained by replacing 
$\lambda_{\max}$ in~\eqref{eqn:stepsize-theta} with its upper bound
in~\eqref{eqn:qp_eigenvalue_upper} and setting $\sigma_w$ through the ratio 
$\beta=\sigma_w/\sigma_\theta$ as in~\eqref{Equ:Appendix:beta_Value}.

\section{Analysis of SVRG}
\label{Appendix:Proof_SVRG}

Here we establish the linear convergence of the SVRG algorithm
for policy evaluation described in Algorithm~\ref{algo:SVRG_saddle}.

Recall the finite sum structure in $\widehat{\mat{A}}$, $\widehat{b}$ 
and $\widehat{\mat{C}}$:
\begin{align*}
	\widehat{\mat{A}} = \frac{1}{n}\sum_{t=1}^{n} \mat{A}_t, \quad 
	\widehat{b} = \frac{1}{n}\sum_{t=1}^{n} b_t,
	\quad
	\widehat{\mat{C}} = \frac{1}{n}\sum_{t=1}^{n} \mat{C}_t .
\end{align*}
This structure carries over to the Lagrangian $\mathcal{L}(\theta,w)$
as well as the gradient operator $B(\theta, w)$, so we have
\[
  B(\theta,w) = \frac{1}{n}\sum_{t=1}^n B_t(\theta,w),
\]
where
\begin{align}
	B_t(\theta,w)
		 &= 
			\begin{bmatrix}
				  \rho I & -\mat{A}_t^\top\\
				  \mat{A}_t & \mat{C}_t
			\end{bmatrix}
			\begin{bmatrix} 
				\theta \\
				w
			\end{bmatrix} 
			- 
			\begin{bmatrix}
				0\\
				b_t
			\end{bmatrix} .
	\label{eqn:qp_Gt_def}
\end{align}

Algorithm~\ref{algo:SVRG_saddle} has both an outer loop and an inner loop.
We use the index~$m$ for the outer iteration and~$j$ for the inner iteration.
Fixing the outer loop index~$m$, we look at the inner loop of 
Algorithm~\ref{algo:SVRG_saddle}.
Similar to full gradient method, we first simplify the dynamics of SVRG.
\begin{align*}
	\begin{bmatrix}
	\theta_{m,j+1} \\
	w_{m,j+1}
	\end{bmatrix} 
	&=
		\begin{bmatrix}
			\theta_{m,j}\\
			w_{m,j}
		\end{bmatrix} 
		-
		\begin{bmatrix}
			\sigma_{\theta} & \\
			& \sigma_{w}
		\end{bmatrix}
		\times
		\bigg(B(\theta_{m-1},w_{m-1}) \\
        & \qquad
        + B_{t_j}(\theta_{m,j},w_{m,j}) - B_t(\theta_{m-1},w_{m-1})\bigg)
        \\
	&=
		\begin{bmatrix}
			\theta_{m,j}\\
			w_{m,j}
		\end{bmatrix} 
		-
		\begin{bmatrix}
			\sigma_{\theta} & \\
			& \sigma_{w}
		\end{bmatrix}
		\nn\\
		&\quad \times
		\Bigg(
			\begin{bmatrix}
				\rho I & -\widehat{A}^\top\\
				\widehat{\mat{A}} & \widehat{\mat{C}}
			\end{bmatrix}
			\begin{bmatrix}
				\theta_{m-1} \\
				w_{m-1}
			\end{bmatrix} 
			- 
			\begin{bmatrix}
				0\\
				\widehat{b}
			\end{bmatrix} 
			\nn\\
			&\qquad
			+
			\begin{bmatrix}
				\rho I & -A_t^\top\\
				A_t & C_t
			\end{bmatrix}
			\begin{bmatrix}
				\theta_{m,j} \\
				w_{m,j}
			\end{bmatrix} 
			- 
			\begin{bmatrix}
				0\\
				b_t
			\end{bmatrix} 
			\nn\\
			&\qquad
			-
			\begin{bmatrix}
				\rho I & -A_t^\top\\
				A_t & C_t
			\end{bmatrix}
			\begin{bmatrix}
				\theta_{m-1} \\
				w_{m-1}
			\end{bmatrix} 
			+
			\begin{bmatrix}
				0\\
				b_t
			\end{bmatrix} 
		\Bigg).
\end{align*} 
Subtracting $(\theta_\star, w_\star)$ from both sides and using the optimality condition \eqref{eqn:qp_first_order_necessary}, we have
\begin{align*}
	\begin{bmatrix}
	\theta_{m,j+1} - \theta_{\star} \\
	w_{m,j+1} - w_{\star}
	\end{bmatrix} 
	&=
	\begin{bmatrix}
			\theta_{m,j} - \theta_{\star}\\
			w_{m,j} - w_{\star}
		\end{bmatrix} 
		-
		\begin{bmatrix}
			\sigma_{\theta} & \\
			& \sigma_{w}
		\end{bmatrix}
		\nn\\
		&\quad \times
		\Bigg(
			\begin{bmatrix}
				\rho I & -\widehat{A}^\top\\
				\widehat{\mat{A}} & \widehat{\mat{C}}
			\end{bmatrix}
			\begin{bmatrix}
				\theta_{m-1} - \theta_{\star} \\
				w_{m-1} - w_{\star}
			\end{bmatrix} 
			\nn\\
			&\qquad
			+
			\begin{bmatrix}
				\rho I & -A_t^\top\\
				A_t & C_t
			\end{bmatrix}
			\begin{bmatrix}
				\theta_{m,j} - \theta_{\star} \\
				w_{m,j} - w_{\star}
			\end{bmatrix} 
			\nn\\
			&\qquad
			-
			\begin{bmatrix}
				\rho I & -A_t^\top\\
				A_t & C_t
			\end{bmatrix}
			\begin{bmatrix}
				\theta_{m-1} - \theta_{\star} \\
				w_{m-1} - w_{\star}
			\end{bmatrix} 
		\Bigg).
\end{align*} 
Multiplying both sides of the above recursion by $\diag(I,1/\sqrt{\beta}I)$, 
and using a residue vector $\Delta_{m,j}$ defined similarly as
in~\eqref{eqn:current_optimal_difference}, we obtain
\begin{align}
\Delta_{m,j+1} 
	&= 
		\Delta_{m,j} - \sigma_{\theta} ( G \Delta_{m-1} + G_{t_j} \Delta_{m,j} - G_{t_j} \Delta_{m-1})
		\nn\\
	&= 
		\left(\mat{I}-\sigma_\theta\mat{G}\right)\Delta_{m,j} 
        \nn\\
		&\qquad+ 
		\sigma_\theta\left(\mat{G}-\mat{G}_{t_j}\right)\left(\Delta_{m,j}-\Delta_{m-1}\right), 
\label{eqn:qp_svrg_dynamics}
\end{align}
where $G_{t_j}$ is defined in \eqref{eqn:LG-def}. 

For SVRG, we use the following potential functions to facilitate our analysis:
\begin{align}
  P_m &\triangleq \expect{\norm{\mat{Q}^{-1}\Delta_m}^2}, \label{eqn:Pm}\\
  P_{m,j} &\triangleq \expect{\norm{\mat{Q}^{-1}\Delta_{m,j}}^2}. \label{eqn:Pmj}
\end{align}
Unlike the analysis for the batch gradient methods, the non-orthogonality
of the eigenvectors will lead to additional dependency of the iteration 
complexity on the condition number of~$Q$, for which we give a bound 
in~\eqref{Equ:Appendix:CondNum_H}.

Multiplying both sides of Eqn.~\eqref{eqn:qp_svrg_dynamics} by $\mat{Q}^{-1}$, taking squared 2-norm and taking expectation, we obtain
\begin{align}
	P_{m,j+1} 
		&= 
			\E\Big[
				\big\|
					\mat{Q}^{-1}
					\big[
						\left(
							\mat{I}
							-
							\sigma_\theta\mat{G}
						\right)\Delta_{m,j}
						\nn\\
						&\qquad+
						\sigma_\theta
						\left(
							\mat{G}-\mat{G}_{t_j}
						\right)
						\left(
							\Delta_{m,j}
							-
							\Delta_{m-1}
						\right)
					\big]
				\big\|^2
			\Big] 
			\nonumber\\
		&\overset{(a)}{=} 
			\E\Big[
				\norm{\left(\mat{I}-\sigma_\theta\mat{\Lambda}\right)\mat{Q}^{-1}\Delta_{m,j}}^2
			\Big] 
			\nn\\
			&\quad
			+ 
			\sigma_\theta^2\;
			\E\Big[
              \norm{Q^{-1}\left(G\!-\!\mat{G}_{t_j}\right)
				\left(\Delta_{m,j}\!-\!\Delta_{m-1}\right)}^2
			\Big]
			\nonumber\\
		&\overset{(b)}{\le  }
			\norm{\mat{I}-\sigma_\theta\mat{\Lambda}}^2
            \E\Bigl[\norm{\mat{Q}^{-1}\Delta_{m,j}}^2\Bigr]
			\nn\\
			&\quad
			+ 
			\sigma_\theta^2\;
			\E\Big[
              \norm{\mat{Q}^{-1} G_{t_j} \left(\Delta_{m,j}-\Delta_{m-1}\right)}^2
			\Big] 
            \nn\\
		&\overset{(c)}{=}
        \norm{\mat{I}-\sigma_\theta\mat{\Lambda}}^2 P_{m,j}
			\nn\\
			&\quad
			+ 
			\sigma_\theta^2\;
			\E\Big[
              \norm{\mat{Q}^{-1} G_{t_j} \left(\Delta_{m,j}-\Delta_{m-1}\right)}^2
			\Big] .
            \label{eqn:Pmj-to-be-continued}
\end{align}
where step (a) used the facts that $G_{t_j}$ is independent of $\Delta_{m,j}$
and $\Delta_{m-1}$ and $\E[G_{t_j}]=G$ so the cross terms are zero,
step~(b) used again the same independence 
and that the variance of a random variable is less than its second moment,
and step~(c) used the definition of $P_{m,j}$ in~\eqref{eqn:Pmj}.
To bound the last term in the above inequality, we use the simple notation
$\delta = \Delta_{m,j}-\Delta_{m-1}$ and have
\begin{align*}
\norm{\mat{Q}^{-1} G_{t_j} \delta}^2
&= \delta^T G_{t_j}^T Q^{-T} Q^{-1} G_{t_j} \delta\\
&\leq \lambda_{\max}(Q^{-T}Q^{-1}) \delta^T G_{t_j}^T G_{t_j} \delta.
\end{align*}
Therefore, we can bound the expectation as
\begin{align}
&  \E\bigl[\norm{\mat{Q}^{-1} G_{t_j} \delta}^2\bigr]\nn\\
 \leq& \lambda_{\max}(Q^{-T}Q^{-1}) \E\bigl[\delta^T G_{t_j}^T G_{t_j} \delta\bigr]\nn\\
  =& \lambda_{\max}(Q^{-T}Q^{-1}) \E\bigl[\delta^T \E[G_{t_j}^T G_{t_j}] \delta\bigr]\nn\\
  \leq& \lambda_{\max}(Q^{-T}Q^{-1}) L_G^2 \E\bigl[\delta^T \delta\bigr]\nn\\
  =& \lambda_{\max}(Q^{-T}Q^{-1}) L_G^2 \E\bigl[\delta^T Q^{-T} Q^T Q Q^{-1} \delta\bigr]\nn\\
  =& \lambda_{\max}(Q^{-T}Q^{-1}) \lambda_{\max}(Q^T Q) L_G^2 \E\bigl[\delta^T Q^{-T} Q^{-1} \delta\bigr]\nn\\
\leq& \kappa(Q)^2 L_G^2 \E \bigl[\|Q^{-1}\delta\|^2\bigr],
\label{eqn:pull-G-out}
\end{align}
where in the second inequality we used the definition of $L_G^2$ 
in~\eqref{eqn:LG-def}, i.e., $L_G^2=\|\E[G_{t_j}^T G_{t_j}]\|$.
In addition, we have
\begin{align*}
\E\bigl[\|Q^{-1}\delta\|^2\bigr]
=&\E\bigl[\norm{\mat{Q}^{-1}(\Delta_{m,j}-\Delta_{m-1})}^2\bigr]\\
\leq & 2\;\E\bigl[\norm{\mat{Q}^{-1}\Delta_{m,j}}^2\bigr] 
      +2\;\E\bigl[\norm{\mat{Q}^{-1}\Delta_{m-1}}^2\bigr]\\
= &\; 2 P_{m,j} + 2 P_{m-1}.
\end{align*}
Then it follows from~\eqref{eqn:Pmj-to-be-continued} that
\begin{align*}
  P_{m,j+1} \leq& \|I-\sigma_\theta \Lambda\|^2 P_{m,j} \\
 & +2\sigma_\theta^2 \kappa^2(Q) L_G^2 (P_{m,j}+P_{m-1}).
\end{align*}

Next, let $\lambda_{\max}$ and $\lambda_{\min}$ 
denote the largest and smallest diagonal elements of $\mat{\Lambda}$ 
(eigenvalues of~$G$), respectively.
Then we have
\begin{align*}
\norm{\mat{I}-\sigma_\theta\mat{\Lambda}}^2
&=\max\left\{ (1-\sigma_{\theta}\lambda_{\min})^2,~(1-\sigma_{\theta}\lambda_{\min})^2\right\} \\
&\leq 1 - 2\sigma_{\theta}\lambda_{\min}+ \sigma^2_{\theta}\lambda^2_{\max}\\
&\leq 1 - 2\sigma_{\theta}\lambda_{\min}+ \sigma^2_{\theta}\kappa^2(Q) L_G^2,
\end{align*}
where the last inequality uses the relation 
\[
  \lambda_{\max}^2 \!\le\! \|G\|^2 =\! \| \E G_t
\|^2 \le \|\E G_t^T G_t\|\! =  L_G^2 \le \kappa^2(Q) L_G^2.
\]
It follows that
\begin{align}
  P_{m,j+1}
  &\le \bigl(1-2\sigma_{\theta}\lambda_{\min}+\sigma_\theta^2\kappa^2\left(\mat{Q}\right)L_G^2 \bigr) P_{m,j} 
			\nn\\
			&\quad+ 
			2\sigma_\theta^2 \;\kappa^2\left(\mat{Q}\right)L_G^2 (P_{m,j} + P_{m-1})
			\nn\\
		&=
			 \left[
			 	1-2\sigma_{\theta}\lambda_{\min} 
				+ 
				3\sigma_\theta^2
					\kappa^2\!\left(\mat{Q}\right)L_G^2
			\right]P_{m,j} 
			\nn\\
			&\quad+ 
			2\sigma_\theta^2\;\kappa^2\!\left(\mat{Q}\right)L_G^2 P_{m-1}
      \nn
\end{align}
If we choose $\sigma_\theta$ to satisfy 
\begin{align}\label{eqn:svrg-step-bound}
0<\sigma_\theta \leq \frac{\lambda_{\min}}{3\kappa^2\left(\mat{Q}\right)L_G^2},
\end{align}
then $3\sigma_\theta^2\kappa^2\!\left(\mat{Q}\right)L_G^2 
< \sigma_{\theta} \lambda_{\min}$, which implies
\begin{align}
  P_{m,j+1}
		&\leq
			 \left(
			 	1-\sigma_{\theta}\lambda_{\min} 
			\right)P_{m,j} 
			+ 2\sigma_\theta^2\;\kappa^2\!\left(\mat{Q}\right)L_G^2 P_{m-1} .
      \nn
\end{align}

Iterating the above inequality over $j=1,\cdots,N-1$ and 
using $P_{m,0} = P_{m-1}$ and $P_{m,N}=P_m$, we obtain
\begin{align}
&	P_m 
		= 
			P_{m,N} \nn \\
		&\le 
			\bigg[\!\bigl(1\!-\!\sigma_\theta \lambda_{\min}\bigr)^N 
			\!\!\!+\! 2\sigma_\theta^2\kappa^2\!\left(\mat{Q}\right)\!L_G^2\!
    \sum_{j=0}^{N-1}\!\bigl(1\!-\!\sigma_\theta \lambda_{\min}\bigr)^j \bigg]\! P_{m-1}
			\nn \\
		&= 
			\bigg[\!\bigl(1\!-\!\sigma_\theta \lambda_{\min}\bigr)^N 
			\!\!\!+\! 2\sigma_\theta^2\kappa^2\!\left(\mat{Q}\right)\!L_G^2\!
    \frac{1\!-\!(1\!-\!\sigma_\theta \lambda_{\min})^N}{1\!-\!(1\!-\!\sigma_\theta \lambda_{\min})} \bigg]\! P_{m-1}
			\nn \\
    &\leq 
			\bigg[\bigl(1-\sigma_\theta \lambda_{\min}\bigr)^N 
      + \frac{2\sigma_\theta^2\kappa^2\!\left(\mat{Q}\right)L_G^2}{\sigma_\theta\lambda_{\min}}\bigg]  P_{m-1} \nn \\
    &= 
			\bigg[\bigl(1-\sigma_\theta \lambda_{\min}\bigr)^N 
      + \frac{2\sigma_\theta\kappa^2\!\left(\mat{Q}\right)L_G^2}{\lambda_{\min}}\bigg]  P_{m-1} .
      \label{eqn:svrg-Pm-bound}
\end{align}
We can choose
\begin{equation}\label{eqn:svrg-stepsize-N}
  \sigma_\theta = \frac{\lambda_{\min}}{ 5\kappa^2(Q)L_G^2}, \quad
  N=\frac{1}{\sigma_\theta \lambda_{\min}}
  = \frac{5\kappa^2(Q)L_G^2}{\lambda_{\min}^2}, 
\end{equation}
which satisfies the condition in~\eqref{eqn:svrg-step-bound} and results in 
\[
  P_m \leq (e^{-1} + 2/5) P_{m-1} \leq (4/5) P_{m-1}.
\]
There are many other similar choices, for example,
\[
  \sigma_\theta = \frac{\lambda_{\min}}{ 3\kappa^2\!(Q)L_G^2}, \quad
  N=\frac{3}{\sigma_\theta \lambda_{\min}}
  = \frac{9\kappa^2(Q)L_G^2}{\lambda_{\min}^2}, 
\]
which results in 
\[
  P_m \leq (e^{-3} + 2/3) P_{m-1} \leq (3/4) P_{m-1}.
\]
These results imply that the number of outer iterations needed to have 
$\E[P_m]\leq \epsilon]$ is $\log(P_0/\epsilon)$.
For each outer iteration, the SVRG algorithm need $O(nd)$ operations to 
compute the full gradient operator $B(\theta,w)$, and then 
$N=O(\kappa^2(Q)L_G^2/\lambda^2_{\min})$ inner iterations with each
costing $O(d)$ operations.
Therefore the overall computational cost is
\begin{align*}
O\left(\left(n+\frac{\kappa^2\left(\mat{Q}\right)L_G^2}{\lambda_{\min}^2}\right)d\; \log\left(\frac{P_0}{\epsilon}\right)\right) .
\end{align*} 
Substituting~\eqref{Equ:Appendix:CondNum_H} 
and~\eqref{Equ:Appendix:lambda_min_G_LB} in the above bound, 
we get the overall cost estimate
\begin{align*}
	O\left(
		\left(
			n
			+
			\frac{\kappa(\widehat{C})L_G^2}
			{\lambda_{\min}^2(\rho I + \widehat{A}^T \widehat{C}^{-1} \widehat{A})}\right)d\;\log\left(\frac{P_0}{\epsilon}
		\right)
	\right).
\end{align*}

Finally, substituting the bounds in~\eqref{Equ:Appendix:CondNum_H} 
and~\eqref{Equ:Appendix:lambda_min_G_LB} into~\eqref{eqn:svrg-stepsize-N},
we obtain the $\sigma_\theta$ and $N$ stated in Theorem~\ref{thm:svrg}:
\begin{align*}
  \sigma_\theta &= \frac{\lambda_{\min}(\rho I +\widehat{A}^T \widehat{C}^{-1} \widehat{A})}{ 48\kappa(\widehat{C})L_G^2}, \\
  N&= \frac{51 \kappa^2(\widehat{C})L_G^2}{\lambda_{\min}^2(\rho I +\widehat{A}^T \widehat{C}^{-1} \widehat{A})}, 
\end{align*}
which achieves the same complexity.

\section{Analysis of SAGA}
\label{Appendix:Proof_SAGA}
SAGA in Algorithm~\ref{algo:saga_saddle} maintains a table of previously computed gradients.
Notation wise, we use $\phi_t^m$ to denote that at $m$-th iteration, $g_t$ is computed using $\theta_{\phi_t^m}$ and $w_{\phi_t^m}$.
With this definition, $\phi_t^m$ has the following dynamics:\begin{align}
\phi_t^{m+1} = \begin{cases}
\phi_t^m\quad \text{if }t_m \neq t,\\
m ~~\quad \text{if }t_m = t .
\end{cases}
\label{eqn:qp_saga_phi_t^m}
\end{align}
We can write the $m$-th iteration's full gradient as
\[
B = \frac{1}{n}\sum_{t=1}^{n}B_t\left(\theta_{\phi_t^m},w_{\phi_t^m}\right).
\]
For convergence analysis, we define the following quantity:
\begin{align}
	\Delta_{\phi_t^m} 
		\triangleq 
					\begin{bmatrix}
						\theta_{\phi_t^m} - \theta_\star \\
						\frac{1}{\sqrt{\beta}}(w_{\phi_t^m} - w_\star)
					\end{bmatrix}. 
					\label{eqn:qp_saga_Delta_phi}
\end{align} 
Similar to~\eqref{eqn:qp_saga_phi_t^m}, it satisfies the following iterative relation:
\begin{align*}
\Delta_{\phi_{t}^{m+1}} &= \begin{cases}
\Delta_{\phi_{t}^m} \quad \text{if} \quad t_m\neq t,\\
\Delta_{m} ~~\quad \text{if} \quad t_m = t.
\end{cases}
\end{align*}
With these notations, we can express the vectors used in SAGA as
\begin{align*}
 B_m & =  \frac{1}{n}	\sum_{t=1}^n \begin{bmatrix}
			\rho I & - A_t^T \\	A_t & C_t \end{bmatrix} 
		\begin{bmatrix}	\theta_{\phi_t^m} \\ w_{\phi_t^m} \end{bmatrix}
		- \frac{1}{n} \sum_{t=1}^n	\begin{bmatrix}	0 \\ b_t\end{bmatrix},\\
h_{t_m} &= \begin{bmatrix}
  \rho I & - A_{t_m}^T \\	A_{t_m} & C_{t_m} \end{bmatrix} 
		\begin{bmatrix}	\theta_m \\ w_m \end{bmatrix}
        - \begin{bmatrix}	0 \\ b_{t_m}\end{bmatrix},\\
g_{t_m} &= \begin{bmatrix}
  \rho I & - A_{t_m}^T \\	A_{t_m} & C_{t_m} \end{bmatrix} 
		\begin{bmatrix}	\theta_{\phi_t^m} \\ w_{\phi_t^m} \end{bmatrix}
        - \begin{bmatrix}	0 \\ b_{t_m}\end{bmatrix}.
\end{align*}
The dynamics of SAGA can be written as
\begin{align*}
\begin{bmatrix}
		\theta_{m+1} \\
		w_{m+1}
\end{bmatrix} 
&= \begin{bmatrix}
		\theta_m \\
		w_m
		\end{bmatrix}  
		-  
		\begin{bmatrix}
		\sigma_\theta & \\
		& \sigma_w
		\end{bmatrix}\left(B_m + h_{t_m} - g_{t_m}\right)\\
&=
	\begin{bmatrix}
		\theta_m \\
		w_m
	\end{bmatrix}  
	\!-  \!
	\begin{bmatrix}
		\sigma_\theta & \\
		& \sigma_w
	\end{bmatrix} \\
    \nn
    &\quad
	\Bigg\{
		\frac{1}{n}
		\sum_{t=1}^n
		\begin{bmatrix}
			\rho I & - A_t^T \\
			A_t & C_t
		\end{bmatrix} \!
		\begin{bmatrix}
			\theta_{\phi_t^m} \\
			w_{\phi_t^m}
		\end{bmatrix}
		+
		\frac{1}{n}
		\sum_{t=1}^n
		\begin{bmatrix}
			0 \\
			b_t
		\end{bmatrix} \\
    &\quad +\!
	\begin{bmatrix}
		\rho I\!\!\!\!\!\! & -A_{t_m}^T \\
		A_{t_m}\!\!\!\!\!\! & C_{t_m}
	\end{bmatrix}\!
	\begin{bmatrix}
		\theta_m \\
		w_m
	\end{bmatrix}
	\!-\!
	\begin{bmatrix}
		\rho I \!\!\!\!\!\! & -A_{t_m}^T \\
		A_{t_m} \!\!\!\!\!\! & C_{t_m}
	\end{bmatrix} \!
	\begin{bmatrix}
		\theta_{\phi_{t_m}^m} \\
		w_{\phi_{t_m}^m}
	\end{bmatrix}\!
	\Bigg\}
\end{align*}
Subtracting $(\theta_\star, w_\star)$ from both sides, and using the optimality
condition in~\eqref{eqn:qp_first_order_necessary}, we obtain
\begin{align*}
\begin{bmatrix}
		\theta_{m+1}-\theta_\star \\
		w_{m+1}-w_\star
\end{bmatrix} 
&= \begin{bmatrix}
		\theta_m - \theta_\star \\
		w_m - w_\star
		\end{bmatrix}  
	-  
	\begin{bmatrix}
		\sigma_\theta & \\
		& \sigma_w
	\end{bmatrix} \\
    \nn
    &\quad
	\Bigg\{
		\frac{1}{n}
		\sum_{t=1}^n
		\begin{bmatrix}
			\rho I & - A_t^T \\
			A_t & C_t
		\end{bmatrix} \!
		\begin{bmatrix}
			\theta_{\phi_t^m} - \theta_\star\\
			w_{\phi_t^m} - w_\star
		\end{bmatrix}\\
    &\quad +
	\begin{bmatrix}
		\rho I\!\!\!\!\!\! & -A_{t_m}^T \\
		A_{t_m}\!\!\!\!\!\! & C_{t_m}
	\end{bmatrix}\!
	\begin{bmatrix}
		\theta_m - \theta_\star \\
		w_m - w_\star
	\end{bmatrix} \\
    &\quad-
	\begin{bmatrix}
		\rho I \!\!\!\!\!\! & -A_{t_m}^T \\
		A_{t_m} \!\!\!\!\!\! & C_{t_m}
	\end{bmatrix} \!
	\begin{bmatrix}
		\theta_{\phi_{t_m}^m} - \theta_\star \\
		w_{\phi_{t_m}^m} - w_\star
	\end{bmatrix}\!
	\Bigg\}.
\end{align*}
Multiplying both sides by $\diag(I, 1/\sqrt{\beta}I)$, we get
\begin{align}
	\Delta_{m+1} 
		&= 
			\Delta_m - \left(\frac{\sigma_\theta}{n}\sum_{t=1}^{n}\mat{G}_t\Delta_{\phi_t^m}\right)
			\nn\\
			&\quad- \sigma_\theta\mat{G}_{t_m}\left(\Delta_m - \Delta_{\phi_{t_m}^m}\right). 
            \label{eqn:qp_saga_dynamics}
\end{align}
where $G_{t_m}$ is defined in \eqref{eqn:LG-def}. 

For SAGA, we use the following two potential functions:
\begin{align}
	P_m 
			&= 
				\E\norm{\mat{Q}^{-1}\Delta_m}_2^2, \nonumber\\
	Q_m 
			&= 
				\E\biggl[\!\frac{1}{n}\!\!\sum_{t=1}^n\norm{\mat{Q}^{-1} G_t \Delta_{\phi_t^m}}_2^2\biggr] 
			= 
            \E\biggl[\!\norm{\mat{Q}^{-1}G_{t_m}\Delta_{\phi_{t_m}^m}}_2^2\!\biggr].\nonumber
\end{align}
The last equality holds because we use uniform sampling.
We first look at how $P_m$ evolves.
To simplify notation, let 
\begin{align}
		v_m = \left(\frac{\sigma_\theta}{n}\sum_{t=1}^{n}\mat{G}_t\Delta_{\phi_t^m}\right) + \sigma_\theta\mat{G}_{t_m}\left(\Delta_m - \Delta_{\phi_{t_m}^m}\right),
		\nn
\end{align}
so that~\eqref{eqn:qp_saga_dynamics} becomes $\Delta_{m+1}=\Delta_m - v_m$.
We have
\begin{align*}
P_{m+1}
	& = 
		\expect{\norm{\mat{Q}^{-1}\Delta_{m+1}}_2^2} \nonumber\\
	& = 
		\E\Big[
				\big\|
					\mat{Q}^{-1}
					\left(\Delta_m -v_m \right)
				\big\|^2
		\Big]
		\nonumber\\
	& = 
		\E\Big[\!
			\norm{\mat{Q}^{-1}\!\Delta_m}_2^2 
			\!-\! 
			2\Delta_m^\top\mat{Q}^{-\top}\!\mat{Q}^{-1} v_m
			\!+\! 
			\norm{\mat{Q}^{-1}v_m}_2^2
		\Big] 
		\nonumber \\
	& = 
			P_m
      -\E\big[2\Delta_m^\top\mat{Q}^{-\top}\mat{Q}^{-1} v_m\big]
			+ 
			\E\Big[\norm{\mat{Q}^{-1}v_m}_2^2\Big] .
		\nonumber 
\end{align*}
Since $\Delta_m$ is independent of $t_m$, we have
\begin{align*}
  \E\Big[2\Delta_m^\top\mat{Q}^{-\top}\mat{Q}^{-1} v_m\Big]
  =\E\Big[2\Delta_m^\top\mat{Q}^{-\top}\mat{Q}^{-1} \E_{t_m}[v_m]\Big],
\end{align*}
where the inner expectation is with respect to $t_m$ conditioned on all
previous random variables. 
Notice that
\[
  \E_{t_m} \big[G_{t_m}\Delta_{\phi^m_{t_m}}\big] 
  = \frac{1}{n}\sum_{t=1}^n G_t \Delta_{\phi^m_t},
\]
which implies
$\E_{t_m} [v_m] = \sigma_\theta \E_{t_m}[G_{t_m}]\Delta_m 
=\sigma_\theta G \Delta_m$.
Therefore, we have
\begin{align}
  P_{m+1} 
  &= P_m - \E\Big[ 2\sigma_\theta \Delta_m^T Q^{-T}Q^{-1}G\Delta_m \Big]
  +\E\Big[\norm{\mat{Q}^{-1}v_m}_2^2\Big] \nn\\
  &= P_m - \E 2\sigma_\theta \Big[\Delta_m^T Q^{-T}\Lambda Q^{-1}\Delta_m \Big]
  +\E\Big[\norm{\mat{Q}^{-1}v_m}_2^2\Big] \nn\\
&\leq P_m - 2\sigma_\theta \lambda_{\min} \E\Big[\big\|Q^{-1}\Delta_m\big\|^2\Big]
  +\E\Big[\norm{\mat{Q}^{-1}v_m}_2^2\Big] \nn\\
	& = (1-2\sigma_\theta\lambda_{\min})P_m + \expect{ \norm{\mat{Q}^{-1}v_m}_2^2}, \label{eqn:qp_saga_pm}
\end{align} 
where the inequality used 
$\lambda_{\min}\!\triangleq\!\lambda_{\min}(\Lambda)\!=\!\lambda_{\min}(G)>0$, 
which is true under our choice of 
$\beta=\sigma_w/\sigma_\theta$ in Section~\ref{Appendix:DiagonalizabilityOfG}.
Next, we bound the last term of Eqn.~\eqref{eqn:qp_saga_pm}:
\begin{align}
&	\E\Big[\norm{\mat{Q}^{-1}v_m}_2^2\Big]
			\nn\\
		=\;& 
			\E
			\biggl[
				\Big\|
					\mat{Q}^{-1}
					\Big(						
							\frac{\sigma_\theta}{n}
							\sum_{t=1}^{n}
							\mat{G}_t\Delta_{\phi_t^m}
						\!+\! 
						\sigma_\theta\mat{G}_{t_m} \!\!
						\left(
							\Delta_m \!-\! \Delta_{\phi_{t_m}^m}
						\right)
					\Big)
				\Big\|^2
			\biggr] 
			\nonumber\\
		\le\; & 
			2\sigma_\theta^2\expect{\norm{\mat{Q}^{-1}\mat{G}_{t_m}\Delta_m}_2^2} 
			\nn\\
			&\quad + 
      2\sigma_\theta^2\E\biggl[\Bigl\|\mat{Q}^{-1}\Bigl(\frac{1}{n}\sum_{t=1}^{n}\mat{G}_t\Delta_{\phi_t^m}-\mat{G}_{t_m}\Delta_{\phi^m_{t_m}}\Bigr)\Bigr\|^2\biggr]
			\nonumber\\
		\le\; & 
			2\sigma_\theta^2 
				\E\Big[\!\norm{\mat{Q}^{-1}\!\mat{G}_{t_m}\Delta_m}_2^2\Big]
				\!+ 2\sigma_\theta^2\E\Big[\|Q^{-1}\mat{G}_{t_m}\!\Delta_{\phi^m_{t_m}}\|^2\Big] 
				\nn\\
		=\; &
			2\sigma_\theta^2 
				\E\Big[\!\norm{\mat{Q}^{-1}\!\mat{G}_{t_m}\Delta_m}_2^2\Big]
				\!+ 2\sigma_\theta^2 Q_m
				,
			\nonumber 
\end{align}
where the first inequality uses $\norm{a+b}_2^2 \le 2\norm{a}_2^2 + 2 \norm{b}_2^2$,
and the second inequality holds because for any random variable~$\xi$,
$\E\norm{\xi - \E\left[\xi\right]}_2^2 = \E\norm{\xi}_2^2 - \norm{\E\xi}_2^2 \le \E\norm{\xi}_2^2$.
Using similar arguments as in~\eqref{eqn:pull-G-out}, we have
\begin{align}
\E\Big[\norm{\mat{Q}^{-1}\!\mat{G}_{t_m}\Delta_m}_2^2\Big]
&\leq \kappa^2\!(Q) L_G^2 P_m,
\label{eqn:qp_saga_qgdelta}
\end{align}
Therefore, we have
\begin{align}
P_{m+1}
	&\le 
		\left(1-2\sigma_\theta\lambda_{\min} + 2\sigma_\theta^2\kappa^2\left(\mat{Q}\right)L_{\mat{G}}^2\right)P_m 
\nn\\
		&\quad+ 
		2\sigma_\theta^2\mat{Q}_m.  \label{eqn:qp_saga_pm_dynamics}
\end{align}

The inequality~\eqref{eqn:qp_saga_pm_dynamics} shows that the dynamics 
of $P_m$ depends on both $P_m$ itself and $Q_m$.
So we need to find another iterative relation for $P_m$ and $Q_m$.
To this end, we have
\begin{align}
	Q_{m+1} 
	& = 
		\expect{\frac{1}{n}\sum_{t=1}^{n}\norm{\mat{Q}^{-1} G_t \Delta_{\phi_t^{m+1}}}_2^2} \nonumber \\
	&=
		\E\bigg[
			\frac{1}{n}
			\| Q^{-1} G_{t_m} \Delta_{\phi_{t_m}^{m+1}}\|^2
			\nn\\
			&\qquad
			+
			\frac{1}{n}
			\sum_{t\neq t_m}
			\| Q^{-1} G_t \Delta_{\phi_t^{m+1}}\|^2
		\bigg]
		\nn\\
	&\overset{(a)}{=}
		\E\bigg[
			\frac{1}{n}
			\| Q^{-1} G_{t_m} \Delta_{m}\|^2
			\nn\\
		  &\qquad
			+
			\frac{1}{n}
			\sum_{t\neq t_m}
			\| Q^{-1} G_t \Delta_{\phi_t^{m}}\|^2
		\bigg]
		\nn\\
	&=		
		\E\bigg[
			\frac{1}{n}
			\| Q^{-1} G_{t_m} \Delta_{m}\|^2
			-
			\frac{1}{n}
			\| Q^{-1} G_{t_m} \Delta_{\phi_{t_m}^m}\|^2
			\nn\\
			&\qquad
			+
			\frac{1}{n}
			\sum_{t=1}^n
			\| Q^{-1} G_t \Delta_{\phi_t^{m}}\|^2
		\bigg]
		\nn\\
	&=
		\frac{1}{n}
		\E[\| Q^{-1} G_{t_m} \Delta_{m}\|^2]
		-
		\frac{1}{n}
		\E[\| Q^{-1} G_{t_m} \Delta_{\phi_{t_m}^m}\|^2]
		\nn\\
		&\qquad
		+
		\E\bigg[
			\frac{1}{n}
			\sum_{t=1}^n
			\| Q^{-1} G_t \Delta_{\phi_t^{m}}\|^2
		\bigg]
		\nn\\
	&=
		\frac{1}{n}
		\E[\| Q^{-1} G_{t_m} \Delta_{m}\|^2]
		-
		\frac{1}{n}
		\E[\| Q^{-1} G_{t_m} \Delta_{\phi_{t_m}^m}\|^2]
		\nn\\
		&\qquad
		+
		\E\bigg[
			\| Q^{-1} G_{t_m} \Delta_{\phi_{t_m}^{m}}\|^2
		\bigg]
		\nn\\
	&=
		\frac{1}{n}
		\E[\| Q^{-1} G_{t_m} \Delta_{m}\|^2]
		+
		\frac{n-1}{n}
		Q_m
		\nn\\
	&\overset{(b)}{\le}
		\frac{\kappa^2(Q) L_G^2}{n}P_m + \frac{n-1}{n}Q_m. \label{eqn:qp_saga_qm_dynamics}
\end{align}
where step (a) uses \eqref{eqn:qp_saga_phi_t^m} and step (b) uses \eqref{eqn:qp_saga_qgdelta}.

To facilitate our convergence analysis on $P_m$, we construct a new Lyapunov function which is a linear combination of Eqn.~\eqref{eqn:qp_saga_pm_dynamics} and Eqn.~\eqref{eqn:qp_saga_qm_dynamics}.
Specifically, consider \begin{align*}
	T_m = P_m + \frac{n\sigma_\theta\lambda_{\min}\left(1-\sigma_\theta\lambda_{\min}\right)}{\kappa^2(Q)L_G^2}Q_m. 
\end{align*}
Now consider the dynamics of $T_{m}$. We have
\begin{align*}
T_{m+1}
	& = 
		P_{m+1} 
		+ 
		\frac{n\sigma_\theta\lambda_{\min}
		\left(1-\sigma_\theta\lambda_{\min}\right)}{\kappa^2(Q)L_G^2}
		Q_{m+1}
		\\
	& \le 
		\left(1-2\sigma_\theta\lambda_{\min} + 2\sigma_\theta^2\kappa^2\left(\mat{Q}\right)L_G^2\right)P_m 
		+ 2\sigma_\theta^2\mat{Q}_m  
		\\
		& \quad\!+\! 
		\frac{n\sigma_\theta\lambda_{\min}\!\left(1\!-\!\sigma_\theta\lambda_{\min}\right)}{\kappa^2(Q)L_G^2}\!
		\left(\!\frac{\kappa^2(Q)L_G^2}{n}P_m \!+\! \frac{n\!-\!1}{n}Q_m\!\right) \\
	& = 
		\left(1-\sigma_\theta\lambda_{\min}+2\sigma_\theta^2\kappa^2\!\left(\mat{Q}\right)L_G^2 - \sigma_\theta^2\lambda_{\min}^2\right)P_m \\
		& \quad +\! 
		\frac{
			2\sigma_\theta^2\kappa^2\!\left(\mat{Q}\right)L_G^2
			\!+\!
			\left(n\!-\!1\right)\sigma_\theta
			\lambda_{\min}\!
			\left(1\!\!-\!\sigma_\theta\lambda_{\min}\right)
		}{\kappa^2(Q)L_G^2} \! Q_m.
\end{align*}
Let's define 
\[
  \rho = \sigma_\theta\lambda_{\min} - 2\sigma_\theta^2\kappa^2\!\left(\mat{Q}\right)L_G^2.
\]
The coefficient for $P_m$ in the previous inequality can be upper bounded 
by $1-\rho$ because $1-\rho-\sigma_\theta^2\lambda_{\min}^2 \leq 1-\rho$.
Then we have
\begin{align}
& T_{m+1}
		\nn\\
	&\le 
		\left(1-\rho\right) P_m 
		+ 
		\nn\\
		&\quad
		\frac{	
			2\sigma_\theta^2\kappa^2\left(\mat{Q}\right)L_G^2
			\!+\!
			\left(n\!-\!1\right)\sigma_\theta\lambda_{\min}
			\left(1\!-\!\sigma_\theta\lambda_{\min}\right)
		}{\kappa^2(Q)L_G^2}
		Q_m
		\nonumber \\
	&=
		\left(1-\rho\right)
		\left(
			P_m 
			+ 
			\frac{n\sigma_\theta\lambda_{\min}\left(1-\sigma_\theta\lambda_{\min}\right)}{\kappa^2(Q)L_G^2}
			Q_m
		\right) 
		\nonumber\\
		& \quad 
		+ 
		\sigma_{\theta}
		\frac{
			2 \sigma_{\theta} \kappa^2(Q)L_G^2
			+
			(n\rho-1)
			\lambda_{\min}
			(1-\sigma_{\theta}\lambda_{\min})
		}{\kappa^2(Q)L_G^2}
		Q_m 
    \nn\\
	&=
		\left(1-\rho\right) T_m
		\nonumber\\
		& \quad 
		+ 
		\sigma_{\theta}
		\frac{
			2 \sigma_{\theta} \kappa^2(Q)L_G^2
			+
			(n\rho-1)
			\lambda_{\min}
			(1-\sigma_{\theta}\lambda_{\min})
		}{\kappa^2(Q)L_G^2}
		Q_m. 
		\label{eqn:qp_saga_T_conv}
\end{align}
Next we show that with the step size 
\begin{align}
	\sigma_\theta = \frac{\lambda_{\min}}{3\left(\kappa^2\left(\mat{Q}\right)L_\mat{G}^2 + n\lambda_{\min}^2\right)}
	\label{Appendix:SAGAproof:sigma_theta_choice}
\end{align}
(or smaller),  the second term on the right-hand side 
of~\eqref{eqn:qp_saga_T_conv} is non-positive.
To see this, we first notice that with this choice of $\sigma_\theta$, we have
\begin{align*}
\frac{\lambda_{\min}^2}{9\left(\kappa^2\left(\mat{Q}\right)L_G^2 \!+\! n\lambda_{\min}^2\right)}
\le 
\rho 
\le \frac{\lambda_{\min}^2}{3\left(\kappa^2\left(\mat{Q}\right)L_G^2 \!+\! n\lambda_{\min}^2\right)},
\end{align*}
which implies
\begin{align*}
n \rho - 1 
\le \frac{n \lambda_{\min}^2}{3\left(\kappa^2\left(\mat{Q}\right)L_G^2 \!+\! n\lambda_{\min}^2\right)} -1
\leq \frac{1}{3}-1 =-\frac{2}{3}.
\end{align*}
Then, it holds that
	\begin{align}
	&	2 \sigma_{\theta} \kappa^2(Q)L_G^2
		+
		(n\rho-1)
		\lambda_{\min}
		(1-\sigma_{\theta}\lambda_{\min})
		\nn\\
	\le & 
					2 \sigma_{\theta} \kappa^2(Q)L_G^2
					-
					\frac{2}{3}
					\lambda_{\min}
					(1-\sigma_{\theta}\lambda_{\min})
					\nn\\
  = &
					-
						\frac{(6n-2)\lambda_{\min}^3}{9\left(\kappa^2(Q)L_G^2 + n\lambda_{\min}^2\right)}
			<0.	\nn
	\end{align}
Therefore \eqref{eqn:qp_saga_T_conv} implies
	\begin{align}
		T_{m+1}
			&\le 
					(1-\rho) T_{m} .
					\nn
	\end{align}
Notice that $P_m\leq T_m$ and $Q_0= P_0$. Therefore we have $T_0\leq 2P_0$ and
\[
  P_m \leq 2 (1-\rho)^m P_0 .
\]
Using~\eqref{Appendix:SAGAproof:sigma_theta_choice}, we have 
\begin{align*}
  \rho = \sigma_\theta\lambda_{\min}(G) - 2\sigma_\theta^2\kappa^2(Q)L_G^2 
  \geq \frac{\lambda_{\min}^2}{9\bigl(\kappa^2(Q)L_G^2+n\lambda_{\min}^2\bigr)} .
\end{align*}
To achieve $P_{m} \le \epsilon$, we need at most
\begin{align*}
m = O\left(\left(n+\frac{\kappa^2\left(\mat{Q}\right)L_G^2}{\lambda_{\min}^2}\right)\log\left(\frac{P_0}{\epsilon}\right)\right) 
\end{align*} 
iterations.
Substituting \eqref{Equ:Appendix:lambda_min_G_LB} and \eqref{Equ:Appendix:CondNum_H} in the above bound, we get the desired iteration complexity
\begin{align*}
	O\left(
		\left(
			n
			+
			\frac{\kappa(\widehat{C})L_G^2}
			{\lambda_{\min}^2(\rho I + \widehat{A}^T \widehat{C}^{-1} \widehat{A})}\right)\log\left(\frac{P_0}{\epsilon}
		\right)
	\right).
\end{align*}
Finally, using the bounds in~\eqref{Equ:Appendix:CondNum_H} 
and~\eqref{Equ:Appendix:lambda_min_G_LB}, we can replace the
step size in~\eqref{Appendix:SAGAproof:sigma_theta_choice}
by
\[
  \sigma_\theta = \frac{\mu_\rho}{3\left(8\kappa^2(\widehat{C})L_G^2 + n \mu_\rho^2\right)},
\]
where 
$\mu_\rho=\lambda_{\min}^2(\rho I + \widehat{A}^T \widehat{C}^{-1} \widehat{A})$
as defined in~\eqref{eqn:mu-rho}.

\end{document}